\documentclass{article}
\usepackage{fullpage}

\usepackage[utf8]{inputenc}
\usepackage[T1]{fontenc}
\usepackage{hyperref}
\usepackage[numbers]{natbib}
\usepackage{algorithm}
\usepackage{algorithmic}
\floatstyle{ruled}
\restylefloat{algorithm}
\usepackage{url}
\usepackage{booktabs}
\usepackage{amsfonts}
\usepackage{nicefrac}
\usepackage{microtype}
\usepackage{graphicx}
\usepackage{subcaption}
\usepackage{multicol}
\usepackage{multirow}
\usepackage{etoc}
\usepackage{minitoc}
\usepackage{titletoc}  
\usepackage{amsmath}
\usepackage{amssymb}
\usepackage{mathtools}
\usepackage{amsthm}
\usepackage{xcolor}
\usepackage[normalem]{ulem}
\usepackage{wrapfig}
\usepackage[capitalize,noabbrev]{cleveref}
\usepackage[textsize=tiny]{todonotes}

\graphicspath{{img/}{media/}}

\definecolor{mydarkblue}{rgb}{0,0.08,0.45}
\hypersetup{
    colorlinks = true,
    linkcolor  = mydarkblue,
    citecolor  = mydarkblue,
    urlcolor   = mydarkblue,
    filecolor  = mydarkblue,
    pdfborder  = {0 0 0},
}

\newif\ifcomments
\commentsfalse

\definecolor{deeppurple}{rgb}{0.4,0.0,0.6}

\definecolor{fweducolor}{rgb}{0.55,0.0,0.55}
\newcommand{\fwedu}[1]{#1}

\theoremstyle{plain}
\newtheorem{theorem}{Theorem}[section]

\newtheorem{lemma}[theorem]{Lemma}

\theoremstyle{definition}

\newtheorem{assumption}[theorem]{Assumption}
\theoremstyle{remark}
\newtheorem{remark}[theorem]{Remark}

\newtheorem*{theorem-restated}{Theorem}
\newtheorem*{assumption-restated}{Assumption}
\newtheorem*{corollary-restated}{Corollary}
\newtheorem*{lemma-restated}{Lemma}

\title{RMNP: Row-Momentum Normalized Preconditioning for Scalable Matrix-Based Optimization}

\newcommand\blfootnote[1]{%
  \begingroup
  \renewcommand\thefootnote{}\footnote{#1}%
  \addtocounter{footnote}{-1}%
  \endgroup
}

\author{
  Shenyang Deng$^{1,\ast}$, 
  Zhuoli Ouyang$^{1,\ast}$, 
  Tianyu Pang$^{1}$, 
  Zihang Liu$^{2,3}$, \\
  Ruochen Jin$^{1}$,
  Shuhua Yu$^{4}$,
  Yaoqing Yang$^{1}$ \\[3mm]
  $^1$Dartmouth College \\
  $^2$International Computer Science Institute \\
  $^3$University of California, Berkeley \\
  $^4$Meta \\[3mm]
  \small $^1$\texttt{\{shenyang.deng.gr, zhuoli.ouyang.gr, tianyu.pang.gr, ruochen.jin.gr, yaoqing.yang\}@dartmouth.edu} \\
  \small $^{2,3}$\texttt{zihang.liu@berkeley.edu} \quad $^4$\texttt{yu.shuhuaxh@gmail.com}
}

\begin{document}
\date{}
\maketitle
\blfootnote{$^\ast$Indicates equal contribution.}

\begin{abstract}
Preconditioned adaptive methods have gained significant attention for training deep neural networks, as they capture rich curvature information of the loss landscape . The central challenge in this field lies in balancing preconditioning effectiveness with computational efficiency of implementing the preconditioner. Among recent advances, \textsc{Muon} stands out by using Newton-Schulz iteration to obtain preconditioned updates without explicitly constructing the preconditioning matrix. Despite its advantages, the efficiency of \textsc{Muon} still leaves room for further improvement.
In this paper, we introduce \textsc{RMNP} (Row Momentum Normalized Preconditioning), an optimizer that replaces Newton-Schulz iteration with a simple row-wise($d_{\text{in}}$) $\ell_2$ normalization operation, motivated by the empirically observed diagonal block structure of the Transformer layerwise Hessian. {We empirically verified that orthogonalization and row-wise(on input dim) $\ell_2$ normalization are asymptotically equivalent in the case of the transformer.} This substitution reduces the per-iteration computational complexity from $\mathcal{O}(mn\cdot\min(m,n))$ to $\mathcal{O}(mn)$ for an $m\times n$ weight matrix while maintaining comparable optimization performance. Theoretically, we establish convergence guarantees for \textsc{RMNP} in the non-convex setting that match recent results for \textsc{Muon} optimizers, achieving the minimax optimal complexity. Extensive experiments on large language model pretraining show that \textsc{RMNP} delivers competitive optimization performance compared with \textsc{Muon} while substantially reducing preconditioning wall-clock time. Our code is available at \href{https://github.com/Dominator-Index/RMNP}{this link}.
\end{abstract}

\section{Introduction}
Adaptive algorithms, such as those introduced in \citet{duchi2011adaptive,tieleman2012lecture,kingma2014adam,loshchilov2018decoupled}, have achieved remarkable success in deep learning optimization. These methods employ diagonal preconditioning \citep{duchi2011adaptive}, which scales each parameter independently based on historical gradient information. However, this diagonal structure ignores correlations among parameters, limiting the optimizer's ability to handle ill-conditioned problems with complex parameter interactions. This creates a fundamental gap between practical diagonal methods and the theoretically optimal full-matrix preconditioning.

Recent work has revisited matrix-based preconditioning to address these limitations. In particular, studies on full Gauss-Newton methods~\citep{abreu2025fullgn} demonstrate that utilizing complete curvature information can lead to qualitatively improved convergence behavior in large language models. However, directly applying updates of the form $w_t = w_{t-1} - H_t^{-1} d_t$ remains computationally prohibitive: if the preconditioner $H_t$ is constructed using the full Hessian, the computational overhead becomes extreme and scales poorly with model size. Consequently, practical optimizer design has focused on structured approximations with first-order information to balance performance with efficiency.

Classic methods such as \textsc{K-FAC}~\cite{martens2015optimizing}, \textsc{PSGD}~\cite{li2018preconditioned}, and \textsc{Shampoo}~\cite{gupta2018shampoo} achieve this balance through structured matrix preconditioners that approximate curvature with lower-dimensional factors. \textsc{Shampoo}, for example, employs a Kronecker-factored preconditioner:
\begin{equation}
    H = L \otimes R
\end{equation}
where $L$ and $R$ are smaller matrices capturing row and column correlations, respectively. This factorization preserves essential curvature information while dramatically reducing computational demands. Subsequent works including \textsc{K-BFGS}~\cite{ren2021kronecker} and \textsc{ASGO}~\cite{an2025asgo} further refine this approach with sparse or low-rank updates to minimize memory overhead.

More recently, methods such as \textsc{Muon}~\cite{jordan2024muon} have introduced an alternative perspective on matrix-based adaptivity (see Algorithm~\ref{algomuon}). Rather than explicitly forming the full preconditioner $H^{-1}$, \textsc{Muon} employs Newton-Schulz iterations to implicitly compute the preconditioned updates $H_t^{-1}d_t$ through matrix polynomials, enabling matrix-level adaptation without direct inversion. Subsequent refinements further improve the stability and efficiency of this approach~\cite{tian2022amos, vyas2025soap, si2025adamuon, liu2025cosmos}. Overall, these methods move beyond element-wise diagonal preconditioning by incorporating structured off-diagonal curvature information, aiming to achieve a \textbf{better trade-off between optimization performance and computational cost}. However, despite this conceptual advancement, the reliance on iterative matrix polynomial evaluations in \textsc{Muon} incurs a computational complexity of $\mathcal{O}(mn\cdot\min(m,n))$ for an $m \times n$ weight matrix, which can become a dominant bottleneck as model dimensions~grow.

\begin{figure*}[!t]
\centering

\begin{minipage}[t]{0.49\textwidth}
\hrule
\vspace{2pt}
\captionsetup{type=algorithm,justification=raggedright,singlelinecheck=false,aboveskip=0pt,belowskip=0pt}
\caption{\textsc{Muon} \cite{jordan2024muon}}
\label{algomuon}
\hrule
\vspace{2pt}
\begin{algorithmic}[1]
\REQUIRE $\eta_t > 0, \beta \in [0,1), W_0 \in \mathbb{R}^{m \times n}$, loss $f$
\STATE $V_0 \leftarrow \mathbf{0}_{m \times n}$
\FOR{$t = 1$ to $T$}
    \STATE $G_t \leftarrow \nabla f(W_t; \xi^t)$
    \STATE $V_t \leftarrow \beta V_{t-1} + (1-\beta) G_t$
    \STATE $D_t \leftarrow \text{NS}_5(V_t)$, $D_t\approx(V_tV_t^T)^{-\frac{1}{2}}V_t$
    \STATE $W_{t+1} \leftarrow W_t - \eta_t D_t$
\ENDFOR
\end{algorithmic}
\vspace{2pt}
\hrule
\end{minipage}\hfill
\begin{minipage}[t]{0.49\textwidth}
\hrule
\vspace{2pt}
\captionsetup{type=algorithm,justification=raggedright,singlelinecheck=false,aboveskip=0pt,belowskip=0pt}
\caption{\textsc{RMNP}}
\label{algoRMNP}
\hrule
\vspace{2pt}
\begin{algorithmic}[1]
\REQUIRE $\eta_t > 0, \beta \in [0,1), W_0 \in \mathbb{R}^{m \times n}$, loss $f$
\STATE $V_0 \leftarrow \mathbf{0}_{m \times n}$
\FOR{$t = 1$ to $T$}
    \STATE $G_t \leftarrow \nabla f(W_t; \xi^t)$
    \STATE $V_t \leftarrow \beta V_{t-1} + (1-\beta) G_t$
    \STATE $D_t \leftarrow \text{RN}(V_t)$, $D_t=(\text{diag}(V_tV_t^T))^{-\frac{1}{2}}V_t$
    \STATE $W_{t+1} \leftarrow W_t - \eta_t D_t$
\ENDFOR
\end{algorithmic}
\vspace{2pt}
\hrule
\end{minipage}

\vspace{6pt}

\begin{minipage}[t]{0.49\textwidth}
\vspace{0pt}
\centering
\includegraphics[width=\linewidth]{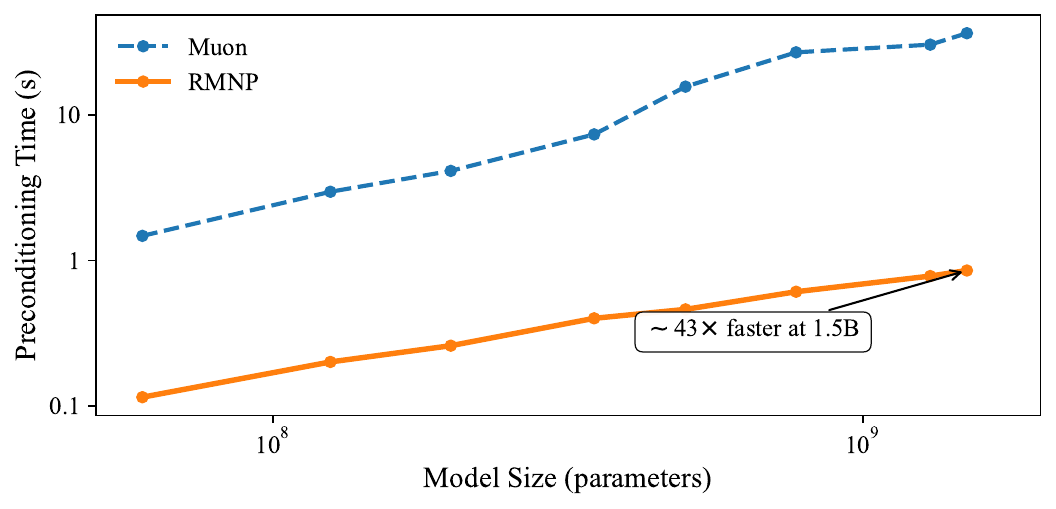}
\vspace{-12pt}
\captionsetup{type=figure}
\caption{Time overhead comparison. The figure illustrates the wall-clock time for 100 computation steps for preconditioning process of \textsc{RMNP} versus \textsc{Muon}.}
\label{Time_Comprare}
\end{minipage}\hfill
\begin{minipage}[t]{0.49\textwidth}
\vspace{0pt}
\centering
\captionsetup{type=table}
\scriptsize
\setlength{\tabcolsep}{2.5pt}
\renewcommand{\arraystretch}{1.04}
\resizebox{\linewidth}{!}{%
\begin{tabular}{@{}lccc@{}}
\hline
Work & Smooth & Conv. & Complexity \\
\hline
\multicolumn{4}{c}{\textit{\textsc{Muon}}} \\
\hline
\cite{shen2025convergence} & $L_F$ & $\|\nabla f\|_\ast$ & $O(m^2L\sigma^2\Delta\epsilon^{-4})$ \\
\cite{kim2026convergencemuonnewtonschulz} & $L_\ast$ & $\|\nabla f\|_\ast$ & $O(mL_\ast\sigma^2\Delta\epsilon^{-4})$ \\
\cite{shen2025convergence} & $L_\ast$ & $\|\nabla f\|_\ast$ & $O(mL_\ast\sigma^2\Delta\epsilon^{-4})$ \\
\hline
\multicolumn{4}{c}{\textit{\textsc{RMNP}}} \\
\hline
Thm.~\ref{thm:fro-convergence} & $L_F$ & $\|\nabla f\|_F$ & $O(m^2L_F\sigma^2\Delta\epsilon^{-4})$ \\
Thm.~\ref{thm:12-fro-convergence} & $L_F$ & $\|\nabla f\|_{1,2}$ & $O(m^2L_F\sigma^2\Delta\epsilon^{-4})$ \\
Thm.~\ref{thm:inf2-convergence} & $L_{\infty,2}$ & $\|\nabla f\|_{1,2}$ & $O(mL_{\infty,2}\sigma^2\Delta\epsilon^{-4})$ \\
\hline
\end{tabular}%
}
\caption{Comparison of Convergence Results. $L_F,L_\ast$ denotes the corresponding smoothness coefficient and $\|\nabla f\|_F, \|\nabla f\|_\ast$ the corresponding convergence criterion.}
\label{tab:comparison}
\end{minipage}
\end{figure*}

In this paper, we show that the computational complexity of \textsc{Muon} can be further reduced without sacrificing its matrix-level adaptivity. Specifically, motivated by recent empirical and theoretical findings on the structure of Transformer Hessians~\cite{zhang2024transformers,dong2025towards}, we introduce Row Momentum Normalized Preconditioning (\textsc{RMNP}, Algorithm~\ref{algoRMNP}).
\textsc{RMNP} achieves optimization performance comparable to \textsc{Muon} while substantially reducing the preconditioning overhead, with a per-iteration computational complexity of $\mathcal{O}(mn)$. We further benchmark the wall-clock time of both optimizers under identical settings, as shown in Figure~\ref{Time_Comprare}, demonstrating an order-of-magnitude reduction in preconditioning cost.

Mechanistically, \textsc{RMNP} replaces the Newton-Schulz iteration in \textsc{Muon} with a simple row-wise $\ell_2$ normalization. In Section \ref{sec_preconditioner_in}, we provide a mathematical interpretation of this operation from a preconditioning perspective, showing that it corresponds to a further structured approximation of \textsc{K-FAC} aligned with the observed block-diagonal dominance of Transformer curvature. We also discuss how \textsc{RMNP} differs from related approaches and provide practical hyperparameter recommendations.
Furthermore, we provide non-convex convergence guarantees for \textsc{RMNP}. As summarized in Table \ref{tab:comparison}, our results match the best-known theoretical guarantees for \textsc{Muon}~\cite{shen2025convergence,kim2026convergencemuonnewtonschulz}. This complexity result achieves the minimax optimality in non-convex smooth setup~\cite{arjevani2023lower}. For more related work, please refer to the Appendix \ref{sec_related}.

\begin{figure*}[!htb]
    \centering
    \includegraphics[width=\textwidth]{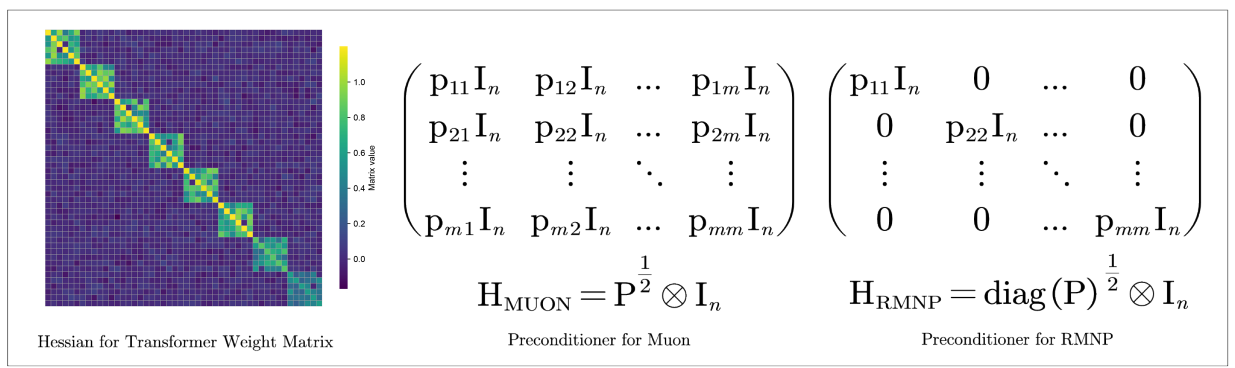}
    \caption{Comparison among Transformer layerwise Hessian, Preconditioner for \textsc{Muon} ,  and Preconditioner for \textsc{RMNP}. The figure of Transformer layerwise Hessian is conceptual, the real case can be widely found in \cite{zhang2024adammini,zhang2024transformers,dong2025towards}.
    $P=V_tV_t^T$,  $m$ and $n$ are the number of rows and columns of the weight matrix, respectively. In  Section \ref{subsec:diagonal_dominance} we further verified through experiments that the \textsc{Muon} preconditioner has such a certain diagonal dominance property.  }
    \label{fig:preconditioner_comparison}
\end{figure*}

Our key contributions are summarized as follows:

\begin{itemize}
\item \textbf{Structure-Aware Preconditioning with Lower Computational Complexity.} We propose \textsc{RMNP}, a matrix-based adaptive optimizer that replaces Newton--Schulz iterations in \textsc{Muon} with a row-wise $\ell_2$ normalization operation motivated by the observed block-diagonal dominance of Transformer curvature. This design preserves matrix-level adaptivity while reducing the per-iteration computational complexity from $\mathcal{O}(mn \cdot \min(m,n))$ to $\mathcal{O}(mn)$.

\item \textbf{Empirical Analysis and Evaluation on Large Language Models.}
We empirically validate the diagonal dominance properties of the \textsc{Muon} preconditioner that underlie our design hypothesis. We also conduct comparative experiments across various model architectures spanning multiple scales. Our results demonstrate that \textsc{RMNP} consistently matches or exceeds the final perplexity of \textsc{Muon} while achieving up to an order-of-magnitude reduction in preconditioning wall-clock time.
\item \textbf{Non-Convex Convergence Guarantees.} We establish convergence analysis for \textsc{RMNP} under the non-convex smooth setting. Our theoretical results provide convergence guarantees that are on par with the current state-of-the-art theory for \textsc{Muon}, ensuring the robustness of our proposed method despite the reduced complexity. We also show that our results achieve minimax optimal complexity.\end{itemize}

\section{Related Work}
\label{sec_related}

{\noindent\textbf{Discussion with Recent Row Normalization Optimizers.~}\citet{zhang2024adammini} is the first work to introduce row-wise normalization into optimizer design, assigning a single learning rate per row (i.e., per output neuron) of each weight matrix to drastically cut Adam's memory while matching its performance, and \citet{pethick2025training} subsequently proposed the abstract LMO framework that unifies many modern optimizers as steepest descent under a chosen norm. Follow the LMO framework, a number of papers derive row- or column-normalized optimizers from this viewpoint. \textsc{SRON}~\citep{anonymous2025sron} applies row-wise normalization to plain SGD, motivated by row-level gradient disparities in attention. \textsc{SCALE}~\citep{glentis2025minimalist} shows that column-wise normalization (which is along the $d_{\text{in}}$ dimension, consistent with the normalization axis of the aforementioned works) plus last-layer momentum is a minimal modification to SGD that matches Adam. \textsc{SWAN}~\citep{ma2025swan} combines row-wise standardization with gradient whitening as a stateless preprocessing. \textsc{MNGD}~\citep{scetbon2025gradient} generalizes this via an alternating scheme enforcing multiple norms simultaneously. \textsc{Mano}~\citep{gu2026mano} recasts row normalization as Riemannian optimization on a rotational Oblique manifold. \textsc{MOGA}~\citep{xu2026width} derives row/column normalization from mean-normalized operator norms, yielding width-independent smoothness and $\mu$P-style learning-rate transfer.

\noindent\textbf{Why steepest-descent analyses cannot explain NN-specific benefits.}~As illustrated in Figure~\ref{fig:problem-classes}, all of the above works expect \citet{zhang2024adammini} analyze their algorithms benefit through the steepest-descent lens (It mainly refers to the abstract LMO framework.), which inherently considers only the \emph{worst-case problem for the algorithm} within a broad problem class such as nonconvex $L$-smooth, and therefore provides only a floor guarantee. While such a guarantee is meaningful in its own right, and we provide a similar result in our paper, \emph{it cannot explain why a particular norm is specifically well-suited to neural-network optimization}: the analysis is agnostic to the actual loss landscape, so any norm choice looks equally justifiable at the worst-case level. To understand why these algorithms actually work on NNs, one has to examine the concrete problem structure itself. \textbf{Our analysis therefore departs from the steepest-descent viewpoint and starts from the curvature structure of neural networks. Motivated by recent work on the Hessian structure\cite{zhang2024adammini,zhang2024transformers,dong2025towards} of neural networks, we verify that full orthogonalization and row $\ell_2$-normalization exhibit a high-dimensional asymptotic equivalence for Transformers.}}

\begin{wrapfigure}{r}{0.4\textwidth}
    \vspace{-25pt} 
    \centering
    \includegraphics[width=0.4\textwidth]{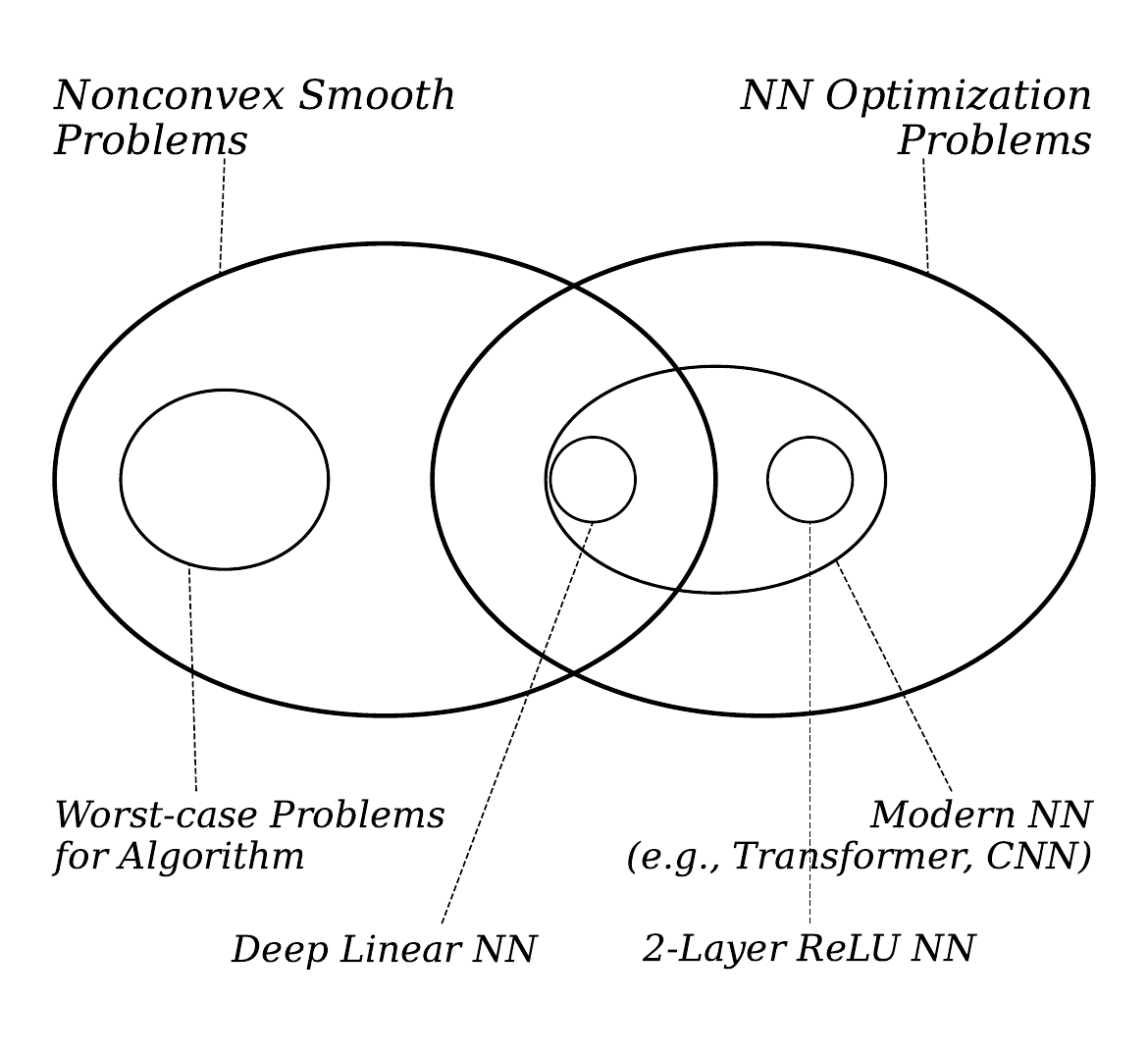}
    \vspace{-25pt}
   \caption{{Worst-case problems for an algorithm may not capture the essential properties of NN optimization problems.}}
\label{fig:problem-classes}
    \label{fig:problem-classes}
    \vspace{-12pt}
\end{wrapfigure}

\paragraph{Preconditioned Optimization Algorithms}
Preconditioned optimization methods aim to reshape the gradient by incorporating curvature information, thereby accelerating convergence in ill-conditioned problems. Early approaches such as \textsc{AdaGrad}~\cite{duchi2011adaptive} and \textsc{RMSProp}~\cite{tieleman2012lecture} employ diagonal preconditioning that adapts to the per-coordinate geometry of gradients. While computationally efficient, diagonal preconditioners fail to capture parameter correlations that naturally arise in neural network training. To address this limitation, matrix-based preconditioning methods have been developed. \textsc{K-FAC}~\cite{martens2015optimizing} approximates the Fisher information matrix using Kronecker-factored structure, exploiting the layer-wise organization of neural networks. \textsc{PSGD}~\cite{li2018preconditioned,li2022black} introduces Lie group preconditioners that maintain geometric properties during optimization. \textsc{Shampoo}~\cite{gupta2018shampoo} generalizes preconditioning to tensor spaces, maintaining separate preconditioners for each dimension through Kronecker factorization. Recent work has further improved upon \textsc{Shampoo}, with \textsc{SOAP}~\cite{vyas2025soap} stabilizing it through Adam-style updates, while distributed implementations~\cite{shi2023distributed} enable scaling to large models. Extensions such as \textsc{K-BFGS}~\cite{ren2021kronecker} and \textsc{ASGO}~\cite{an2025asgo} explore sparse or low-rank updates to reduce memory overhead. More recently, \textsc{Muon}~\cite{jordan2024muon,liu2025muon} employs orthogonalization via Newton-Schulz iteration as a form of preconditioning for matrix parameters. Several variants have emerged~\cite{si2025adamuon, li2025normuon, liu2025cosmos, wen2025fantastic, pang2026htmuonimprovingmuonheavytailed}, including \textsc{AdaMuon}~\cite{si2025adamuon} which combines \textsc{Muon} with Adam-style adaptivity, and \textsc{COSMOS}~\cite{liu2025cosmos} which introduces hybrid mechanisms for memory-efficient training. Studies on full Gauss-Newton methods~\cite{abreu2025fullgn} demonstrate that complete second-order information can substantially improve convergence, motivating the search for practical approximations that balance computational cost with optimization effectiveness.

\paragraph{Hessian Properties of Neural Networks}

Understanding the structure of the Hessian matrix is crucial for designing effective optimization algorithms, as the geometric properties of loss landscapes strongly influence training dynamics~\citep{li2018visualizing}. Early spectral analysis~\citep{sagun2016eigenvalues,sagun2017empirical,deng2026suspiciousalignmentsgdfinegrained,deng2026depthdataanalysishessian} revealed that neural network Hessians exhibit a characteristic eigenvalue spectrum: a bulk of near-zero eigenvalues with a small number of isolated large outliers. Subsequent work~\citep{ghorbani2019investigation} observed that gradients predominantly align with these outlier eigenvectors during training. \citet{wu2020dissecting} further demonstrated that layer-wise Hessians can be approximated using Kronecker factorization, explaining their persistent low-rank structure. Theoretical analyses~\citep{singh2021analytic,liao2021hessian} have provided rigorous explanations for these phenomena, deriving exact formulas for Hessian rank and connecting eigenvalue structure to data properties. Most relevant to our work, \citet{zhang2024transformers} made a significant discovery: the layer-wise Hessian of Transformers exhibits row-wise block-diagonal dominance, where diagonal blocks (corresponding to within-row parameter interactions) have significantly larger magnitudes than off-diagonal blocks (cross-row interactions). This observation has been further investigated by \citet{dong2025towards}, who provide theoretical characterizations of this structured dominance pattern. This row-wise block structure directly motivates our algorithm design, suggesting that row-level preconditioning may suffice to capture essential curvature information while maintaining computational efficiency.

\paragraph{Convergence Analysis of Adaptive Algorithms}

Theoretical understanding of adaptive optimization algorithms in non-convex settings has advanced significantly in recent years. For first-order adaptive methods, \citet{chen2022towards} established convergence guarantees for \textsc{Adam} in the non-convex setting, while \citet{li2024d} analyzed \textsc{RMSProp} and its momentum extension, proving $O(\sqrt{d}T^{1/4})$ convergence rates measured in $\ell_1$ norm. A key recent development is the recognition that different optimizers achieve provable advantages under specific geometric structures. \citet{xie2025adam} demonstrate that \textsc{Adam} exploits $\|\cdot\|_{\ell_{\infty}}$-smoothness geometry, achieving improved convergence when measured in the dual $\|\cdot\|_{\ell_1}$ norm. This geometry-dependent analysis has been extended to matrix optimization: \citet{shen2025convergence} and \citet{kim2026convergencemuonnewtonschulz} establish convergence of \textsc{Muon} under nuclear norm smoothness, showing $O(m)$ complexity compared to the $O(m^2)$ complexity under Frobenius smoothness. These results reveal that matching the optimizer structure to the problem geometry yields substantial complexity improvements beyond what standard Euclidean or Frobenius analysis would suggest. Information-theoretic lower bounds~\cite{arjevani2023lower} establish that $\epsilon^{-4}$ sample complexity is optimal for finding $\epsilon$-stationary points in the non-convex stochastic setting, providing fundamental limits for algorithm design.

\section{Method}
\subsection{\textsc{RMNP} Preconditioner}
\label{sec_preconditioner_in}
Recent work reveals that layer-wise Hessians of Transformers exhibit row-wise block-diagonal dominance \citep{zhang2024transformers}. As illustrated in Figure \ref{fig:preconditioner_comparison} (left), diagonal blocks—corresponding to interactions among parameters within the same row—have significantly larger magnitudes than off-diagonal blocks formed by cross-row interactions. This empirical finding is theoretically proven by \citet{dong2025towards} under specific configurations. Under above condition, the effective curvature of the loss is primarily concentrated on these diagonal blocks.

Preconditioning can be interpreted as correcting the descent direction within an ill-conditioned loss landscape according to the orientation and scale of the landscape's curvature. Adjustments utilizing the inverse Hessian are regarded as the optimal preconditioner under a quadratic approximation. Meanwhile, the Muon orthogonalized update can be understood as a specific preconditioning method that relies on the outer product of momentum and delivers highly favorable empirical results. By Lemma 4 in \citet{gupta2018shampoo}, the \textsc{Muon} preconditioner can be characterized in the following form:
\begin{equation}
    H_{\text{MUON}} = (V_t V_t^T)^{\frac{1}{2}} \otimes I_n
\end{equation}
where $V_t \in \mathbb{R}^{m \times n}$ denotes the momentum matrix at training step $t$, with $m = d_{\text{out}}$ and $n = d_{\text{in}}$ following the convention of \textsc{Muon}~\citep{jordan2024muon}; without loss of generality we assume $m \le n$, otherwise the same analysis applies to $V_t^\top$.

Building on these observations \citep{zhang2024transformers}, we hypothesize that the dominant curvature information resides in the row-wise diagonal blocks, while cross-row interactions contribute negligibly. This motivates approximating the preconditioner by retaining only diagonal blocks and zeroing out off-diagonal blocks, as shown in Figure \ref{fig:preconditioner_comparison}, yielding the \textsc{RMNP} preconditioner:
\begin{equation}
    H_{\text{RMNP}} = \left(\text{diag}(V_t V_t^T)\right)^{\frac{1}{2}} \otimes I_n
\end{equation}
where $\text{diag}(\cdot)$ extracts diagonal elements to form a diagonal matrix: $[\text{diag}(M)]_{ii} = M_{ii}$ and $[\text{diag}(M)]_{ij} = 0$ for $i \neq j$. This structure preserves only the row-wise blocks because $(V_t V_t^T)_{ii}$ captures interactions within the $i$-th row of $V_t$, while the Kronecker product $\text{diag}(\cdot) \otimes I_n$ applies this scaling independently to each row.

The resulting preconditioned update $\text{diag}(V_t V_t^T)^{-\frac{1}{2}} V_t$ reduces to row-wise $\ell_2$ normalization:
\begin{equation}
    \left[ \left(\text{diag}(V_t V_t^T)\right)^{-\frac{1}{2}} V_t \right]_{i,:} = \frac{V_{t,i:}}{\sqrt{(V_t V_t^T)_{ii}}} = \frac{V_{t,i:}}{\|V_{t,i:}\|_{\ell_2}}
\end{equation}
where $V_{t,i:}$ denotes the $i$-th row of $V_t$ and $\|V_{t,i:}\|_{\ell_2} = \sqrt{(V_t V_t^T)_{ii}}$. This dramatically reduces computational complexity compared to \textsc{Muon}'s Newton-Schulz iteration. The above conjecture is equivalent to implying that the Gram matrix $V_t V_t^\top$ exhibits a certain diagonal dominance property. In the following subsection, we empirically verify this property of the \textsc{Muon} preconditioner.

\subsection{Analysis of \textsc{Muon} Preconditioner}
\label{subsec:diagonal_dominance}
To investigate the properties of the preconditioner, we analyze the Gram matrix $V_t V_t^T \in \mathbb{R}^{m \times m}$, constructed from the matrix parameter $V_t \in \mathbb{R}^{m \times n}$ at step $t$. We define a row-wise metric $r_i$ to quantify the ratio of the diagonal element to the average magnitude of off-diagonal entries in the $i$-th row:
\begin{equation}
r_i \triangleq \frac{(V_t V_t^T)_{ii}}{\frac{1}{m-1}\sum_{j \neq i} \left| (V_t V_t^T)_{ij} \right|} = \frac{\|V_{t,i:}\|_2^2}{\frac{1}{m-1}\sum_{j \neq i} \left| V_{t,i:} (V_{t,j:})^T \right|}.
\end{equation}
where $(V_t V_t^T)_{ij}$ denotes the entry at row $i$ and column $j$ of the Gram matrix. Based on these row-wise ratios, we introduce the following three aggregate metrics to evaluate the global diagonal dominance across all rows of the matrix.
We define \emph{average diagonal dominance ratio} ($r_{\text{avg}}$),  \emph{minimum diagonal dominance ratio} ($r_{\text{min}}$) and  \emph{maximum diagonal dominance ratio} ($r_{\text{max}}$) as follows:
\begin{equation}
r_{\mathrm{avg}} = \frac{1}{m}\sum_{i=1}^{m} r_i, \quad r_{\min} = \min_{i\in\{1,\ldots,m\}} r_i, \quad r_{\max} = \max_{i\in\{1,\ldots,m\}} r_i.
\end{equation}
Regarding the interpretation, values of $r_i > 1$ indicate that the diagonal element dominates the average off-diagonal magnitude in row $i$, suggesting stronger diagonal dominance. Values approaching $1$ suggest that the diagonal element is comparable to the average off-diagonal magnitude, while values significantly greater than $1$ indicate that $V_t V_t^T$ closely approximates a diagonal matrix structure.
\begin{figure*}[!htb]
    \centering
    \includegraphics[width=1.00\linewidth]{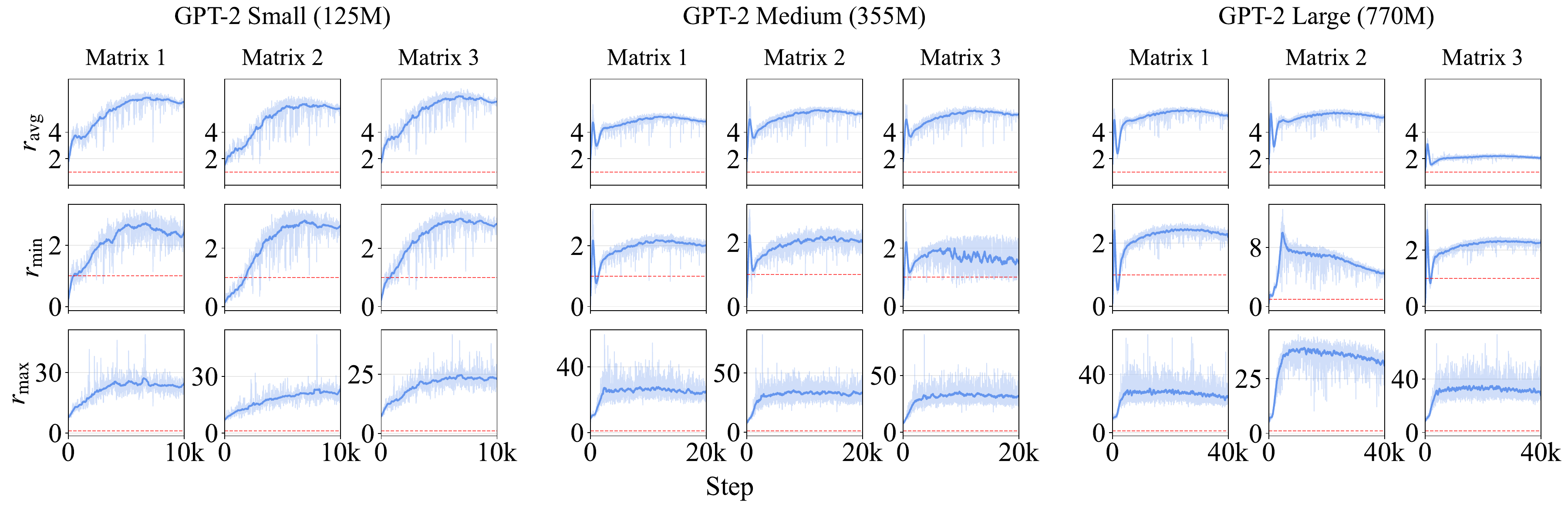}
        \caption{Per-parameter diagonal dominance ratios $r_{\text{avg}}$, $r_{\min}$, $r_{\max}$ (rows) for three representative matrix parameters (columns) during GPT-2 Small (125M), GPT-2 Medium (355M) and GPT-2 Large (770M) pre-training. Transparent curves: raw values; solid curves: smoothed with window size 50. Red dashed line: $y=1$ threshold.}
\label{fig:dominance_curves}
\end{figure*}
To validate our method empirically, we tracked these metrics across all matrix parameters of GPT-2 Small (125M), GPT-2 Medium (355M), and GPT-2 Large (770M) during training. We visualize the evolution of these metrics for 3 randomly selected matrices in Figure~\ref{fig:dominance_curves}. Furthermore, we report the global statistics ($\overline{r}_{\text{avg}}, \overline{r}_{\text{min}}, \overline{r}_{\text{max}}$), which average these three statistics across all matrix parameters in the network, in Figure~\ref{fig:global_dominance_curves}. The experimental setup follows that of the previous GPT-2 experiments on OpenWebText; see Appendix~\ref{appendix:hyperparams} for training hyperparameters and Appendix~\ref{appendix:diagonal_dominance_setup} for implementation details. Additional LLaMA per-parameter results are provided in Appendix~\ref{appendix:diagonal_dominance_setup}; see Figure~\ref{fig:Llama_dominance_curves_appendix}.

\begin{figure*}[!htb]
    \centering
    \includegraphics[width=1.00\linewidth]{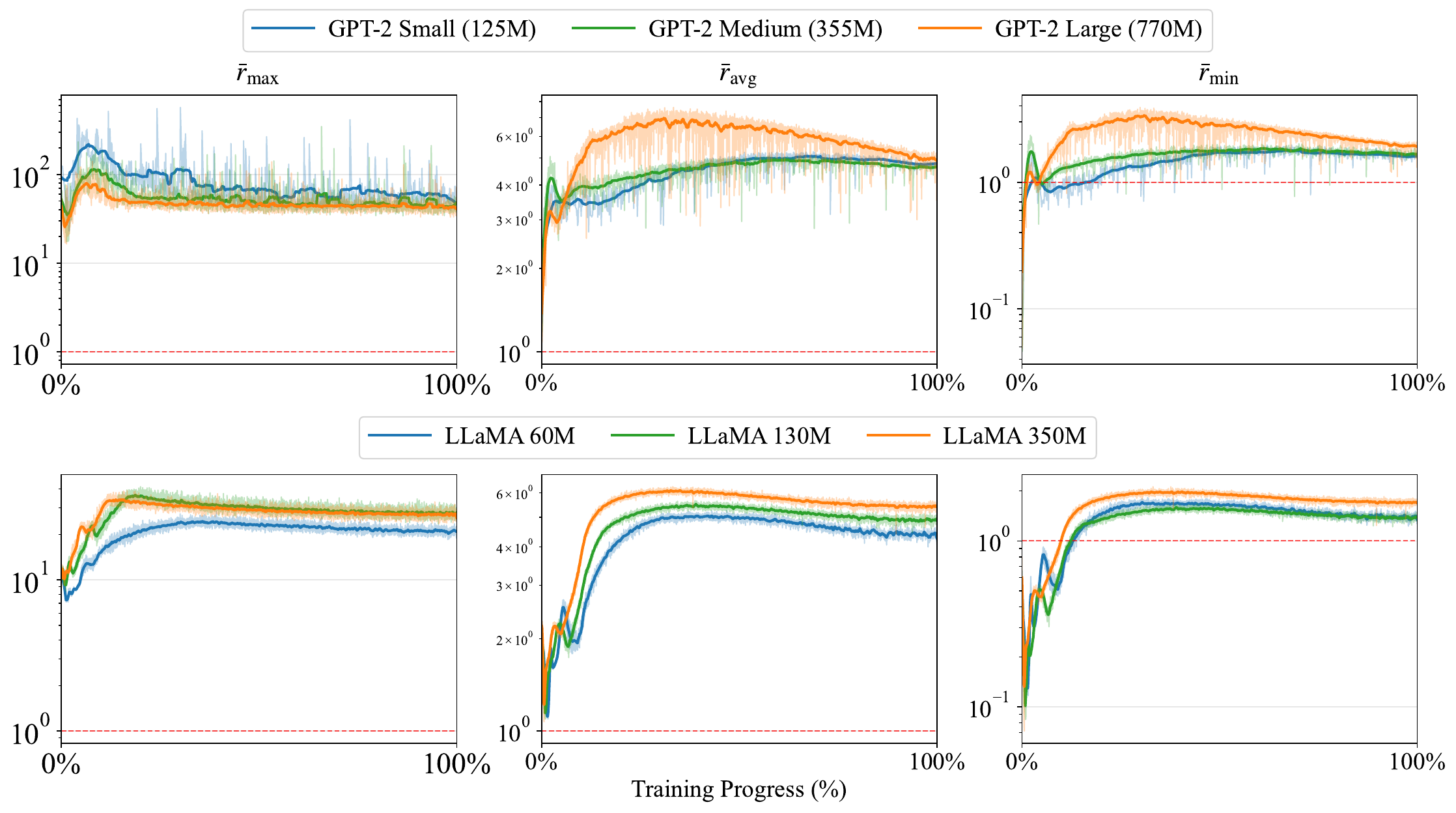}
        \caption{Global diagonal dominance ratios $\overline{r}_{\text{avg}}$, $\overline{r}_{\min}$, $\overline{r}_{\max}$ (columns) averaged across all matrix parameters, comparing across model scales for two architectures: GPT-2 Small (125M), Medium (355M), and Large (770M) pre-trained on OpenWebText (top row), and LLaMA 60M, 130M, and 350M pre-trained on C4 (bottom row). The x-axis is rescaled to the relative training progress (\%) so that all model scales within a row align on a shared horizontal range; the y-axis is in log scale. Transparent curves: raw values; solid curves: smoothed with window size 50. Red dashed line: $y=1$ threshold. For both architectures, the metrics quickly rise above 1 after warm-up and remain mostly above 1, and the magnitude of $\overline{r}_{\text{avg}}$, $\overline{r}_{\min}$, $\overline{r}_{\max}$ tends to grow with model scale, confirming strong and progressively more pronounced diagonal dominance throughout training on both Transformer families.}
\label{fig:global_dominance_curves}
\end{figure*}

As illustrated in Figure~\ref{fig:dominance_curves}, the three representative matrices exhibit strong diagonal dominance, with all three ratio metrics consistently exceeding the baseline of 1 throughout training. For these matrices in GPT-2 Small, $r_{\text{min}}$ stabilizes above 2, $r_{\text{avg}}$ exceeds 5, and $r_{\text{max}}$ reaches approximately 25. Furthermore, the global statistics across all matrices show a similar trend. As shown in Figure~\ref{fig:global_dominance_curves}, the global statistics in GPT-2 Small stabilize at levels indicative of strong diagonal dominance: $\overline{r}_{\text{min}}$ is approximately 1.6, $\overline{r}_{\text{avg}}$ is around 4.9, and $\overline{r}_{\text{max}}$ reaches about 60. It is also worth noting that, in the GPT-2 Medium and Large regimes, the preconditioner exhibits increasingly pronounced diagonal dominance as model size grows. This confirms that the observed diagonal dominance is not an isolated phenomenon but a systematic property of the training dynamics.

\vspace{-4pt}


\section{Main Experimental Results}
In this section, we demonstrate that \textsc{RMNP} achieves competitive optimization performance while maintaining high computational efficiency. We first show that \textsc{RMNP} reduces the preconditioning computational cost by an order of magnitude compared to \textsc{Muon}, demonstrating its scalability advantages. We then evaluate \textsc{RMNP} against \textsc{AdamW} and \textsc{Muon}, two prevalent optimizers for training large language models, on the GPT-2 and LLaMA model series. GPT-2 models are trained on OpenWebText~\cite{Gokaslan2019OpenWeb} \fwedu{and FineWeb-Edu-100B~\cite{penedo2024finewebdatasetsdecantingweb}}, while LLaMA models are trained on C4~\cite{raffel2020exploring}.

\subsection{Experimental Setup}
\label{sec:exp_setup}

\paragraph{\textsc{Muon}}
Following the setup in \citet{jordan2024muon,liu2025muon}, we employ a mixed update strategy where matrix parameters are optimized using \textsc{Muon} and non-matrix parameters using \textsc{AdamW}. We introduce two distinct learning rate hyperparameters, $\text{lr}_{\text{AdamW}}$ and $\text{lr}_{\text{Matrix}}$, both following a cosine annealing schedule with a 10\% warmup period.

\paragraph{\textsc{RMNP}}
For \textsc{RMNP}, we align our experimental setup with the \textsc{Muon} protocol described above. We employ an almost identical mixed update strategy, applying \textsc{RMNP} to matrix parameters and \textsc{AdamW} to non-matrix parameters. Similarly, we utilize two learning rates, $\text{lr}_{\text{AdamW}}$ and $\text{lr}_{\text{Matrix}}$, both subject to a cosine annealing schedule with a 10\% warmup. Consistent with the baseline settings, we fix the \textsc{AdamW} hyperparameters ($\beta=(0.9, 0.95)$, weight decay $0.1$) and exclusively tune the learning rate for the matrix optimizer, $\text{lr}_{\text{Matrix}}$, during the search process.

\paragraph{\textsc{AdamW}}
For the \textsc{AdamW} setup, we follow the standard setup in \citet{yuan2025mars} for training GPT-2, and \citet{he2025alphadecay} for LLaMA. We set $\beta=(0.9, 0.95)$ and weight decay $0.1$, and a cosine annealing schedule with 10\% warm up which consistent with the \textsc{AdamW} configuration used in the mixed update strategy above.

\paragraph{GPT-2 Pre-Training on OpenWebText}
Experiments on GPT-2 are conducted based on the implementation of ~\citet{yuan2025mars}, using the OpenWebText dataset~\cite{Gokaslan2019OpenWeb} and the GPT-2 tokenizer. We pretrain three scales of GPT-2 models: small (125M parameters), medium (355M parameters), and large (770M parameters). For model configurations, we set the dropout rate to 0.0 and disable biases.  Training hyperparameters are listed in Tables~\ref{tab:gpt2_config} and~\ref{tab:llama_config} in Appendix~\ref{appendix:model_config}. \fwedu{We also evaluate on FineWeb-Edu-100B~\cite{penedo2024finewebdatasetsdecantingweb,karpathy2024finewebedu100b} across four GPT-2 scales (Small, Medium, Large, and XLarge (1.5B)); see Appendix~\ref{appendix:finewebedu} for configurations and results.}

\paragraph{LLaMA Pre-Training on C4}
Experiments on LLaMA are conducted on the C4 dataset~\citep{raffel2020exploring}. We pretrain four scales of LLaMA models: LLaMA-60M, LLaMA-130M, LLaMA-350M, and LLaMA-1B. Training hyperparameters are listed in Table~\ref{tab:llama_config} in Appendix~\ref{appendix:model_config}.

\begin{table}[h]
\centering
\caption{Efficiency comparison between \textsc{Muon}  and \textsc{RMNP}'s preconditioning cost  on GPT-2 models. Time measured over 100 steps with batch size 16 on a single RTX Pro 6000 GPU.}
\begin{tabular}{lccccc}
\toprule
\multirow{2}{*}{Size} & \multicolumn{2}{c}{Time Cost (s)}  &\multirow{2}{*}{Speedup $(\times)$}\\
\cmidrule(lr){2-3}
& \textsc{Muon} & \textsc{RMNP} & \\
\midrule
60M  & 1.480   & \textbf{0.115}& 12.9 \\
125M & 2.975  & \textbf{0.201}& 14.8\\
200M & 4.140   & \textbf{0.260} & 15.9\\
355M & 7.380   & \textbf{0.401}& 18.4\\
500M & 15.720  & \textbf{0.462} &34.0\\
770M & 27.070  & \textbf{0.611} & 44.3\\
1.3B & 30.570  & \textbf{0.783} & 39.0\\
1.5B & 36.650  & \textbf{0.855} &42.9\\
\bottomrule
\end{tabular}
\label{tab:efficiency}
\end{table}

\subsection{Preconditioning Time Cost}
\label{sec:efficiency}
Since \textsc{RMNP} and \textsc{Muon} primarily differ in their choice of preconditioner, where \textsc{Muon} applies Newton--Schulz orthogonalization whereas \textsc{RMNP} uses row normalization, we benchmark the preconditioner-operator overhead of \textsc{RMNP} against \textsc{Muon}. Specifically, we report the per-iteration time attributable to the preconditioner operator (\emph{Step Time}) and the cumulative time over 100 iterations (\emph{Total Time}). Experiments are run on GPT-2 models ranging from 60M to 1.5B parameters with a batch size of 16. See Appendix~\ref{app:efficiency_config} for detailed model configurations.

As shown in Table~\ref{tab:efficiency}, \textsc{RMNP} achieves significant speedup over \textsc{Muon} across all model sizes. The row normalization in \textsc{RMNP} is approximately 13--44$\times$ faster than the Newton-Schulz orthogonalization in \textsc{Muon}. This result underscores \textsc{RMNP}'s computational efficiency. More importantly, as model size grows and Newton--Schulz orthogonalization increasingly becomes the dominant bottleneck in end-to-end training throughput, \textsc{RMNP}'s lightweight preconditioner offers a more scalable alternative, indicating strong potential for training at very large scale. For example, in Table~\ref{tab:efficiency}, for GPT-2 60M, \textsc{Muon}'s preconditioning cost per 100 steps is only 1.48 seconds, and \textsc{RMNP} provides a 12.9× speedup. However, for GPT-2 1.5B, the preconditioning cost per 100 steps increases to 36.65 seconds, while \textsc{RMNP} achieves a 42.9× speedup. See Appendix~\ref{appendix:wall_clock} for detailed results including memory usage.

\subsection{Pretraining Performance}
\label{sec:pretrain_performance}
\paragraph{\textsc{RMNP} consistently outperforms \textsc{Muon} and \textsc{AdamW} in GPT-2 experiments. } As shown in Figure~\ref{fig:gpt_2_results}, across the Small, Medium, and Large settings, while efficiently reducing the preconditioner-operator overhead, \textsc{RMNP} still delivers more competitive results than both baselines in terms of evaluation perplexity: on the Small setting it improves over \textsc{Muon} by 0.04 and over \textsc{AdamW} by 1.37; on the Medium setting the improvements are 0.07 and 1.49; and on the Large setting they are 0.24 and 0.84, respectively.
 This consistent pattern suggests that \textsc{RMNP}'s efficiency gains in Table~\ref{tab:efficiency} do not come at the expense of optimization quality; instead, it preserves strong optimization behavior while reducing preconditioning overhead, yielding a favorable speed--accuracy trade-off across model scales under a standard large-model training protocol. Our GPT-2 experiments on OpenWebText match the results reported in ~\citet{yuan2025mars}. We conduct an extensive hyperparameter grid search for both \textsc{Muon} and \textsc{RMNP}; see Table~\ref{tab:small_hyperparam} and~\ref{tab:medium_hyperparam} in Appendix~\ref{hyper_search}. \fwedu{Results on FineWeb-Edu-100B further confirm this trend (Appendix~\ref{appendix:finewebedu}).} Per-step training and validation loss curves for all GPT-2 scales on both datasets are reported in Appendix~\ref{appendix:training_curves} (Figures~\ref{fig:owt_small_curves}--\ref{fig:fwedu_xlarge_curves}); the corresponding gradient clip-rate trajectories are shown in Appendix~\ref{appendix:clip_rate}. The advantage of \textsc{RMNP} also persists under a $2\times$ extended training budget (Appendix~\ref{appendix:extended_training}, Table~\ref{tab:extended_training}).

\paragraph{\textsc{RMNP} consistently outperforms \textsc{Muon} and \textsc{AdamW} in  LLaMA experiments.}
As shown in Figure~\ref{fig:llama_2_results}, \textsc{RMNP} consistently achieves comparable perplexity to \textsc{Muon} across all model sizes, while maintaining a slight performance edge.
Specifically, \textsc{RMNP} demonstrates modest improvements over the baseline: on the LLaMA-60M setting, it decreases perplexity by 0.63 compared to \textsc{Muon} and 4.33 compared to \textsc{AdamW}; on the LLaMA-130M setting, the gain is 0.28 over \textsc{Muon} and 1.10 compared to \textsc{AdamW}; and on the LLaMA-350M setting, the improvement is 0.02 over \textsc{Muon}.
This pattern suggests that \textsc{RMNP} is able to fully match the optimization quality of \textsc{Muon} without the heavy preconditioning overhead, effectively delivering efficiency gains without sacrificing performance.
It is worth noting that we perform a systematic hyperparameter grid search for both \textsc{Muon} and \textsc{RMNP}; see Table~\ref{tab:llama_60m_hyperparam},~\ref{tab:llama_130m_hyperparam} and~\ref{tab:llama_350m_hyperparam} in Appendix~\ref{hyper_search}. Per-step training and validation loss curves for all four LLaMA scales are reported in Appendix~\ref{appendix:curves_c4} (Figures~\ref{fig:c4_60m_curves}--\ref{fig:c4_1b_curves}). We also study the effect of also applying the matrix optimizer to the LM-head and embedding parameters in Appendix~\ref{appendix:lm_head_embedding_ablation} (Tables~\ref{tab:llama_60m_lmhead_hyperparam} and~\ref{tab:llama_130m_lmhead_hyperparam}); a final-perplexity summary across all settings is provided in Appendix~\ref{appendix:final_ppl_summary}.

\begin{figure}[!htb]
    \centering
    \includegraphics[width=\linewidth]{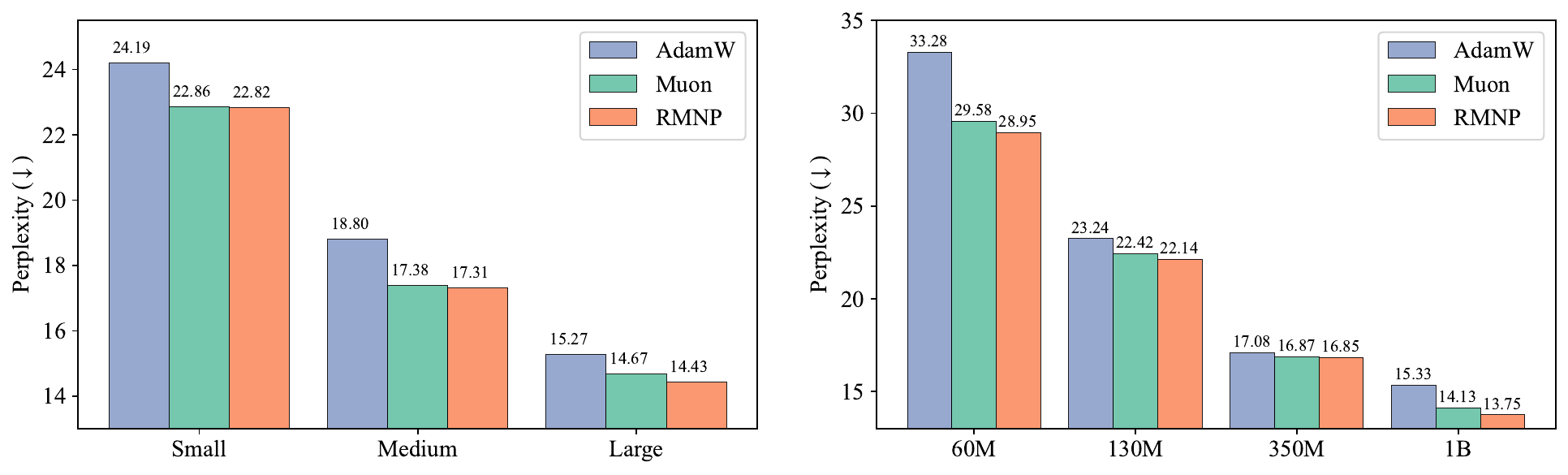}
    \caption{\textbf{Left:} Results for GPT-2 on OpenWebText: Small (125M) trained with 5B tokens; Medium (355M) trained with 10B tokens, and Large (770M) trained with 20B tokens. Numeric values are reported in Table~\ref{tab:owt_results}. \fwedu{FineWeb-Edu-100B results are in Figure~\ref{fig:fwedu_results_bar} and Table~\ref{tab:fwedu_results}.}
    \textbf{Right:} Results for LLaMA: 60M trained with 1B tokens; 130M trained with 2B tokens, 350M trained with 6B tokens, and 1B trained with 9B tokens. Numeric values are reported in Table~\ref{tab:llama_c4_results}.}
    \label{fig:gpt_2_results}
    \label{fig:llama_2_results}
\end{figure}

\section{Non-Convex Convergence}
\label{Theory_Section}

In this section, we present the convergence analysis of our proposed method under the non-convex smooth setting. Our setup is consistent with many existing analyses for adaptive algorithms \cite{chen2022towards,xie2025adam,li2024d,shen2025convergence,kim2026convergencemuonnewtonschulz}, assuming only the smoothness of the loss function, alongside unbiased stochastic gradients and bounded second moments, as detailed in Section \ref{sec_assumption}.

Recent work reveals that optimizers can achieve provable advantages under specific geometric structures beyond standard $\ell_2$ or Frobenius smoothness. For instance, \citet{xie2025adam} discuss benefits of \textsc{Adam} under $\|\cdot\|_{\ell_{\infty}}$-smoothness with convergence measured in $\|\cdot\|_{\ell_1}$, while \citet{shen2025convergence,kim2026convergencemuonnewtonschulz} establish advantages of \textsc{Muon} under $\|\cdot\|_2$-smoothness with convergence measured in nuclear norm. Similarly, we identify the geometric structure under which \textsc{RMNP} achieves provable benefits. We establish three convergence results: under the standard $\|\cdot\|_F$-smoothness assumption, we prove convergence in both the Frobenius norm sense (Theorem~\ref{thm:fro-convergence}) and the $\|\cdot\|_{1,2}$ norm sense (Theorem~\ref{thm:12-fro-convergence}). More importantly, under the $\|\cdot\|_{\infty,2}$-smoothness assumption, we establish improved convergence guarantees in the $\|\cdot\|_{1,2}$ norm sense (Theorem~\ref{thm:inf2-convergence}), revealing that \textsc{RMNP} similarly benefits from its matched geometric structure.

\subsection{Notation}
Let $W \in \mathbb{R}^{m \times n}$ denote the parameter matrix, where $W_{i,:} \in \mathbb{R}^n$ denotes the $i$-th row. The matrix inner product is $\langle Z, W \rangle = \text{Tr}(Z^\top W)$. We use the Frobenius norm $\|W\|_F = \sqrt{\sum_{i,j} W_{i,j}^2}$, the mixed norm $\|W\|_{1,2} = \sum_{i=1}^m \|W_{i,:}\|_2$, and the norm $\|W\|_{\infty,2} = \max_{i=1,\ldots,m} \|W_{i,:}\|_2$. These satisfy the duality $|\langle A, B \rangle| \leq \|A\|_{1,2} \|B\|_{\infty,2}$. We use $\mathbb{E}[\cdot]$ to denote the expectation and $\mathbb{E}_t[\cdot \mid \mathcal{F}_{t-1}]$ to denote the conditional expectation given $\mathcal{F}_{t-1}$. Without loss of generality, we assume $m \leq n$; otherwise the same analysis applies to $V_t^\top$.

\subsection{Assumptions}

\label{sec_assumption}
\begin{assumption}[Lipschitz Gradient]\label{assump:lipschitz}
The gradient of $f: \mathbb{R}^{m \times n} \to \mathbb{R}$ is Lipschitz continuous in one of the following norms:

\textbf{(a)} \textit{Frobenius norm:} There exists $L_F > 0$ such that for all $W, W' \in \mathbb{R}^{m \times n}$,
$$\|\nabla f(W) - \nabla f(W')\|_F \leq L_F\|W - W'\|_F.$$

\textbf{(b)} \textit{$(1,2)$-norm with respect to $(\infty,2)$-norm:} There exists $L_{\infty,2} > 0$ such that for all $W, W' \in \mathbb{R}^{m \times n}$,
$$\|\nabla f(W) - \nabla f(W')\|_{(1,2)} \leq L_{\infty,2}\|W - W'\|_{\infty,2}.$$
\end{assumption}

\begin{assumption}[Unbiased Gradient Estimator]\label{assump:unbiased}
For all $t$ and $W_t$,
$$\mathbb{E}_t[G_t \mid \mathcal{F}_{t-1}] = \mathbb{E}_t[\nabla f(W_t; \xi^t) \mid \mathcal{F}_{t-1}] = \nabla f(W_t).$$
\end{assumption}

\begin{assumption}[Bounded Gradient Variance]\label{assump:variance}
There exists a constant $\sigma > 0$ such that for all $t$ and $W_t$,
$$\mathbb{E}_t[\|G_t - \nabla f(W_t)\|_F^2 \mid \mathcal{F}_{t-1}] \leq \frac{\sigma^2}{B},$$
where $B$ is the batch size (i.e., the number of samples used to compute $G_t$).
\end{assumption}

\begin{assumption}[Lower Bound]\label{assump:lower}
$f$ is bounded below with $f^* = \inf_{W} f(W)$. Define $\Delta = f(W_0) - f^*$.
\end{assumption}

\subsection{Main Results}

We now present our main theoretical results, which establish convergence guarantees for \textsc{RMNP} under different smoothness assumptions and convergence criteria. Our analysis reveals how the choice of matrix norms---both in the smoothness assumption and in the convergence measure---affects the sample complexity.
\begin{theorem}[$\|\cdot\|_F$- Lipschitz]\label{thm:fro-convergence}
Under Assumptions~\ref{assump:lipschitz}(a), \ref{assump:unbiased}, \ref{assump:variance}, and \ref{assump:lower}, if Algorithm \ref{algoRMNP} uses constant $\eta_t = \eta$ and momentum $\beta \in [0,1)$, then
\begin{equation}\label{eq:fro-main}
\frac{1}{T}\sum_{t=1}^{T}\mathbb{E}\left[\|\nabla f(W_t)\|_F\right] \leq \frac{\Delta}{T\eta} + (\sqrt{m}+1)\left[\left(1-\frac{1}{T}\right)\frac{L_F\eta\sqrt{m}\beta}{1-\beta} + \frac{\sigma}{\sqrt{B}}\sqrt{\frac{1-\beta}{1+\beta}}\right] + \frac{L_F\eta m}{2}.
\end{equation}
\end{theorem}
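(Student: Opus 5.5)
Write $D_t=\mathrm{RN}(V_t)$, $E_t = V_t-\nabla f(W_t)$, and $\epsilon_t=G_t-\nabla f(W_t)$. The plan is the standard normalized-momentum descent argument, carried out row by row.

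\textbf{Step 1 (descent).} Under Assumption~\ref{assump:lipschitz}(a),
\[
f(W_{t+1})\le f(W_t)-\eta\langle \nabla f(W_t),D_t\rangle+\tfrac{L_F\eta^2}{2}\|D_t\|_F^2 .
\]
Every nonzero row of $D_t$ has unit norm, so $\|D_t\|_F^2\le m$. This produces the term $L_F\eta m/2$.

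\textbf{Step 2 (inner product).} For each row $i$, let $g_i=\nabla f(W_t)_{i,:}$ and $v_i=V_{t,i:}$. The elementary inequality
\[
\langle g_i, v_i/\|v_i\|\rangle \ge \|g_i\|-2\|v_i-g_i\|
\]
gives $\langle\nabla f(W_t),D_t\rangle\ge \|\nabla f(W_t)\|_{1,2}-2\|E_t\|_{1,2}$. Two comparisons finish this step: $\|\cdot\|_{1,2}\ge\|\cdot\|_F$, and $\|E_t\|_{1,2}\le\sqrt m\|E_t\|_F$. Together,
\[
f(W_{t+1})\le f(W_t)-\eta\|\nabla f(W_t)\|_F+2\eta\sqrt m\|E_t\|_F+\tfrac{L_F\eta^2 m}{2}.
\]
I expect the stated constant $(\sqrt m+1)$ in place of $2\sqrt m$ to come from a slightly finer split. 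Write $\langle g,D\rangle=\langle V,D\rangle-\langle E,D\rangle$, with $\langle V,D\rangle=\|V\|_{1,2}\ge\|V\|_F\ge\|g\|_F-\|E\|_F$ and $|\langle E,D\rangle|\le\|E\|_F\|D\|_F\le\sqrt m\|E\|_F$. I will use this version.

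\textbf{Step 3 (momentum error).} Unroll the recursion
\[
E_t=\beta E_{t-1}+\beta(\nabla f(W_{t-1})-\nabla f(W_t))+(1-\beta)\epsilon_t .
\]
The drift term is bounded using $\|\nabla f(W_{t-1})-\nabla f(W_t)\|_F\le L_F\eta\|D_{t-1}\|_F\le L_F\eta\sqrt m$. Summing the geometric series gives at most $\frac{\beta}{1-\beta}L_F\eta\sqrt m$. The noise part is $(1-\beta)\sum_k\beta^{t-k}\epsilon_k$, a martingale sum whose terms are orthogonal by Assumption~\ref{assump:unbiased}. By Jensen and Assumption~\ref{assump:variance}, its expected norm is at most
\[
(1-\beta)\sqrt{\textstyle\sum\beta^{2(t-k)}}\,\sigma/\sqrt B\le\sqrt{\tfrac{1-\beta}{1+\beta}}\,\sigma/\sqrt B .
\]
The initialization $V_0=0$ contributes an extra $\beta^t\|\nabla f\|$-type term. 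I will handle it so that it is absorbed or telescoped, and the $(1-\frac1T)$ factor suggests this is where it enters. At $t=1$ there is no drift term, so summing the drift over $t$ yields $(T-1)$ copies rather than $T$. That is the factor to track.

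\textbf{Step 4.} Sum over $t=1,\dots,T$, telescope $f$ using Assumption~\ref{assump:lower}, and divide by $T\eta$.

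I expect the main obstacle to be the careful bookkeeping in Step 3. It has to handle the initialization bias from $V_0=0$ and produce exactly the $(1-1/T)$ and $\sqrt{(1-\beta)/(1+\beta)}$ constants, rather than a looser $1/(1-\beta)$ noise factor.
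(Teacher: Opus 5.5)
\textbf{Where your plan matches the paper, and where it has a gap.} Your Steps 1, 2 (in the refined $(\sqrt m+1)$ form), the drift and martingale-noise estimates in Step 3, and Step 4 are the paper's proof. It uses the same descent lemma with $\|D_t\|_F^2\le m$, the same split $\langle V_t,D_t\rangle-\langle E_t,D_t\rangle$ with $\langle V_t,D_t\rangle=\|V_t\|_{1,2}\ge\|V_t\|_F$, and the same unrolled recursion. The gap is the step you defer. With $V_0=0$ the recursion unrolls exactly to
\[
E_t=-\beta^t\nabla f(W_1)+(1-\beta)\sum_{j=1}^{t}\beta^{t-j}\xi_j+\sum_{j=1}^{t-1}\beta^{t-j}\bigl(\nabla f(W_j)-\nabla f(W_{j+1})\bigr).
\]
The first term can be neither telescoped nor absorbed.
\begin{itemize}
\item If you keep it, it adds $(\sqrt m+1)\frac{\beta(1-\beta^T)}{(1-\beta)T}\|\nabla f(W_1)\|_F$ to the right-hand side, a term absent from the statement.
\item You cannot move it to the left-hand side. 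After Step 2 it carries the factor $\sqrt m+1$, so absorbing it into $\|\nabla f(W_t)\|_F$ would need $(\sqrt m+1)\beta^t<1$. That fails for the early iterations whenever $\beta\ge 1/(\sqrt m+1)$.
\item If you remove it via scale invariance, $\mathrm{RN}(V_t)=\mathrm{RN}\bigl(V_t/(1-\beta^t)\bigr)$, the noise bound at step $t$ is inflated by $\sqrt{(1+\beta^t)/(1-\beta^t)}$. That factor is large exactly in the regime $\beta\to 1$ used in the remark.
\end{itemize}
So the displayed constants do not come out of this argument. Also, the factor $1-\frac1T$ has nothing to do with initialization. It arises only because the drift sum is empty at $t=1$, as you note in your next sentence.

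\textbf{The paper does not close this step either.} It asserts $E_0=0$ ``since $V_0=0$'', but $E_0=V_0-\nabla f(W_0)=-\nabla f(W_0)$. Its unrolled formula also fails at its own base case $t=1$. That is how the $\beta^t$ term disappears there. What this argument actually proves is the stated bound plus the initialization term above. That extra term is only lower order for the complexity claim. With the remark's choice of $\beta$ and $\|\nabla f(W_1)\|_F^2\le 2L_F\Delta$ (reading $W_1=W_0$ as the initial point), it is $O(m\sigma/\sqrt T)$. Alternatively, the exact statement does follow if the momentum is initialized as $V_0=\nabla f(W_0)$ with $W_1=W_0$, since then $E_0=0$ genuinely holds. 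To complete your proposal, choose one of these two repairs explicitly rather than promising to absorb the term.
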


\begin{remark}[Complexity for Theorem~\ref{thm:fro-convergence}]
If we set $B=1$, $\eta=\sqrt{\frac{(1-\beta)\Delta}{L_F m T}}$, and
$1-\beta=\min\left\{\frac{\sqrt{L_F\Delta}}{(\sqrt{m}+1)\sigma\sqrt{T}}, 1\right\},$
then the bound in \eqref{eq:fro-main} yields
\begin{equation*}
\frac{1}{T}\sum_{t=1}^{T}\mathbb{E}\left[\|\nabla f(W_t)\|_F\right] \leq O\left(\sqrt[4]{\frac{m^2 L_F \Delta \sigma^2}{T}} + \sqrt{\frac{L_F m\Delta}{T}} + \frac{m\sigma^2}{\sqrt{L_F\Delta T}}\right).
\end{equation*}
Thus, we can find an $\epsilon$-stationary point (in Frobenius norm) of $f$ with a complexity of $O(m^2 L_F \sigma^2 \Delta \epsilon^{-4})$, exhibiting an $O(m^2)$ dimension dependence.
\end{remark}

The detailed proof of Theorem \ref{thm:fro-convergence} can be found in Appendix \ref{sec_proofthem}. While Theorem~\ref{thm:fro-convergence} establishes convergence in the Frobenius norm, matrix optimization problems often involve alternative matrix norms that better capture the underlying structure. Our next result analyzes convergence in the $\|\cdot\|_{1,2}$ norm under the same Frobenius smoothness assumption, demonstrating that \textsc{RMNP} achieves comparable complexity guarantees across different convergence measures.

\begin{theorem}[$\|\cdot\|_{1,2}$-Convergence under $\|\cdot\|_F$-Lipschitz]\label{thm:12-fro-convergence}
Under Assumptions~\ref{assump:lipschitz}(a), \ref{assump:unbiased}, \ref{assump:variance}, and \ref{assump:lower}, if Algorithm \ref{algoRMNP} uses constant $\eta_t = \eta$ and momentum $\beta \in [0,1)$, then
\begin{equation}\label{eq:12-fro-main}
\frac{1}{T}\sum_{t=1}^{T}\mathbb{E}\left[\|\nabla f(W_t)\|_{1,2}\right] \leq \frac{\Delta}{T\eta} + 2\left[\left(1-\frac{1}{T}\right)\frac{L_F\eta m\beta}{1-\beta} + \frac{\sqrt{m}\sigma}{\sqrt{B}}\sqrt{\frac{1-\beta}{1+\beta}}\right] + \frac{L_F\eta m}{2}.
\end{equation}
\end{theorem}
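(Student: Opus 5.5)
The approach is the standard descent-lemma argument for normalized momentum methods, with row-wise normalization handled one row at a time.

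\textbf{Step 1: descent inequality.} Write $D_t=\mathrm{RN}(V_t)$, so each row satisfies $\|D_{t,i:}\|_2=1$ and $\|D_t\|_F^2= m$ (rows with $V_{t,i:}=0$ are set to zero, giving $\le m$). Assumption~\ref{assump:lipschitz}(a) gives
\[
f(W_{t+1})\le f(W_t)-\eta\langle \nabla f(W_t),D_t\rangle+\frac{L_F\eta^2 m}{2}.
\]
Let $E_t=V_t-\nabla f(W_t)$. Fix a row $i$ and write $g=\nabla f(W_t)_{i,:}$, $v=V_{t,i:}$. Then
\[
\langle g, v/\|v\|_2\rangle=\|v\|_2-\langle v-g, v/\|v\|_2\rangle\ge \|v\|_2-\|v-g\|_2\ge \|g\|_2-2\|v-g\|_2 .
\]
Summing over rows yields $\langle\nabla f(W_t),D_t\rangle\ge \|\nabla f(W_t)\|_{1,2}-2\|E_t\|_{1,2}$. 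Telescoping, dividing by $T\eta$, and using Assumption~\ref{assump:lower} gives
\[
\frac1T\sum_t\mathbb{E}\|\nabla f(W_t)\|_{1,2}\le \frac{\Delta}{T\eta}+\frac{2}{T}\sum_t\mathbb{E}\|E_t\|_{1,2}+\frac{L_F\eta m}{2}.
\]
This produces the factor $2$ in the statement.

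\textbf{Step 2: momentum error.} Since $\|E_t\|_{1,2}\le\sqrt m\|E_t\|_F$, it suffices to bound $\mathbb{E}\|E_t\|_F$. Unroll the recursion
\[
E_t=\beta E_{t-1}+\beta(\nabla f(W_{t-1})-\nabla f(W_t))+(1-\beta)(G_t-\nabla f(W_t)).
\]
This gives a drift part $\sum_{s}\beta^{t-s}(\nabla f(W_{s-1})-\nabla f(W_s))$ and a noise part $(1-\beta)\sum_s\beta^{t-s}\xi_s$, plus an initialization term from $V_0=0$ that is handled as in the Frobenius theorem.

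\emph{Drift part.} Each gradient increment satisfies $\|\nabla f(W_{s-1})-\nabla f(W_s)\|_F\le L_F\eta\|D_{s-1}\|_F\le L_F\eta\sqrt m$. The geometric sum is then at most $\frac{\beta}{1-\beta}L_F\eta\sqrt m$, and after multiplying by $\sqrt m$ this becomes $\frac{L_F\eta m\beta}{1-\beta}$. The factor $(1-1/T)$ comes from the first iterate having no drift term, i.e.\ from counting the summed geometric weights exactly as in Theorem~\ref{thm:fro-convergence}.

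\emph{Noise part.} The noise terms form a martingale difference sequence under Assumption~\ref{assump:unbiased}, so by Jensen
\[
\mathbb{E}\Big\|(1-\beta)\sum_s\beta^{t-s}\xi_s\Big\|_F\le (1-\beta)\sqrt{\sum_s\beta^{2(t-s)}\sigma^2/B}\le \frac{\sigma}{\sqrt B}\sqrt{\frac{1-\beta}{1+\beta}},
\]
using Assumption~\ref{assump:variance}. Multiplying by $\sqrt m$ gives the variance term in the statement.

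\textbf{Step 3: assembly and main obstacle.} Combining Steps 1 and 2 yields the claimed inequality. The main obstacle is the bookkeeping in Step 2: the initialization $V_0=0$ and the exact prefactor $(1-1/T)$. The cleanest route is to reuse the error bound already derived for Theorem~\ref{thm:fro-convergence}, namely
\[
\mathbb{E}\|E_t\|_F\lesssim \frac{L_F\eta\sqrt m\beta}{1-\beta}+\frac{\sigma}{\sqrt B}\sqrt{\frac{1-\beta}{1+\beta}},
\]
and only replace the inner-product lower bound. The single genuinely new ingredient is the row-wise inequality in Step 1; it gives $\|\cdot\|_{1,2}$ on the left with the constant $2$, in place of the $\sqrt m+1$ of Theorem~\ref{thm:fro-convergence}.
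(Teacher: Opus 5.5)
Your outline is the paper's proof essentially step for step. It uses the same descent inequality with $\|D_t\|_F^2=m$ and the same bound $\langle\nabla f(W_t),D_t\rangle\ge\|\nabla f(W_t)\|_{1,2}-2\|E_t\|_{1,2}$; your row-by-row derivation is just the paper's duality argument $|\langle E_t,D_t\rangle|\le\|E_t\|_{1,2}\|D_t\|_{\infty,2}$ written per row. It then makes the same reduction $\|\cdot\|_{1,2}\le\sqrt m\|\cdot\|_F$ to the Frobenius momentum-error lemma, and converting before or after separating noise from drift gives identical constants. Your treatment of zero rows is a small improvement the paper omits.

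The gap is the step you defer: the initialization term ``handled as in the Frobenius theorem.'' The paper disposes of it by setting $E_0=0$ ``since $V_0=0$''. That is false: $E_0=V_0-\nabla f(W_0)=-\nabla f(W_0)\ne 0$, and already $E_1=(1-\beta)\xi_1-\beta\nabla f(W_1)$. Unrolling your recursion correctly, with the convention $W_0=W_1$ so the first drift increment vanishes, gives
\[
E_t=-\beta^t\nabla f(W_1)+(1-\beta)\sum_{s=1}^{t}\beta^{t-s}\xi_s+\sum_{s=2}^{t}\beta^{t-s+1}\bigl(\nabla f(W_{s-1})-\nabla f(W_s)\bigr).
\]
The right-hand side therefore picks up an extra $\frac{2\beta(1-\beta^T)}{T(1-\beta)}\|\nabla f(W_1)\|_{1,2}$, and the displayed bound has no slot for it. So neither your plan nor the paper's argument proves the inequality verbatim. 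Note also that the drift weights are $\beta^{t-s+1}$, not $\beta^{t-s}$ as you wrote; your bound $\frac{\beta}{1-\beta}L_F\eta\sqrt m$ requires the former.

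There are two honest repairs.
\begin{itemize}
\item Keep the extra term. Since $\|\nabla f(W_1)\|_{1,2}\le\sqrt{2L_Fm\Delta}$, under the parameter choice of the complexity remark it is $O(m\sigma/\sqrt T)$, so the $\epsilon^{-4}$ complexity is unchanged.
\item Use $\mathrm{RN}(cV)=\mathrm{RN}(V)$ for $c>0$ and work with the bias-corrected $\hat V_t=V_t/(1-\beta^t)$. This removes the initialization bias and keeps the per-step drift (in $\|\cdot\|_F$) at most $L_F\eta\sqrt m\min\{t-1,\beta/(1-\beta)\}$. However, the noise variance becomes $\frac{(1-\beta)(1+\beta^t)}{(1+\beta)(1-\beta^t)}\frac{\sigma^2}{B}$, which equals $\sigma^2/B$ at $t=1$. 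You again get a lower-order correction rather than the exact constant stated.
\end{itemize}
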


\begin{remark}[Complexity for Theorem~\ref{thm:12-fro-convergence}]
If we set $B=1$, $\eta=\sqrt{\frac{(1-\beta)\Delta}{L_F m T}}$, and
$$1-\beta=\min\left\{\frac{\sqrt{L_F\Delta}}{2\sqrt{m}\sigma\sqrt{T}}, 1\right\},$$
then the bound in \eqref{eq:12-fro-main} yields
\begin{equation*}
\frac{1}{T}\sum_{t=1}^{T}\mathbb{E}\left[\|\nabla f(W_t)\|_{1,2}\right] \leq O\left(\sqrt[4]{\frac{m^2 L_F \Delta \sigma^2}{T}} + \sqrt{\frac{L_F m\Delta}{T}} + \frac{m\sigma^2}{\sqrt{L_F\Delta T}}\right).
\end{equation*}
Thus, we can find an $\epsilon$-stationary point (in $\|\cdot\|_{1,2}$ norm) of $f$ with a complexity of $O(m^2 L_F \sigma^2 \Delta \epsilon^{-4})$, exhibiting an $O(m^2)$ dimension dependence.
\end{remark}

The detailed proof of Theorem \ref{thm:12-fro-convergence} can be found in Appendix \ref{sec_proofthem}. The preceding results demonstrate that under Frobenius smoothness, both convergence measures achieve $O(m^2)$ complexity. However, when the objective function exhibits a different geometric structure---specifically, when the gradient is Lipschitz continuous with respect to the $\|\cdot\|_{\infty,2}$ norm---\textsc{RMNP}'s row normalization operation can exploit this structure more effectively. Our final result establishes a significantly improved complexity bound in this setting.

\begin{theorem}[$\|\cdot\|_{1,2}$-Lipschitz]\label{thm:inf2-convergence}
Under Assumptions~\ref{assump:lipschitz}(b), \ref{assump:unbiased}, \ref{assump:variance}, and \ref{assump:lower}, if Algorithm~\ref{algoRMNP} uses constant $\eta_t = \eta$ and momentum $\beta \in [0,1)$, then
\begin{equation}\label{eq:inf2-main}
\frac{1}{T}\sum_{t=1}^{T}\mathbb{E}\left[\|\nabla f(W_t)\|_{1,2}\right] \leq \frac{\Delta}{T\eta} + 2\left[\left(1-\frac{1}{T}\right)\frac{L_{\infty,2}\eta\beta}{1-\beta} + \frac{\sqrt{m}\sigma}{\sqrt{B}}\sqrt{\frac{1-\beta}{1+\beta}}\right] + \frac{L_{\infty,2}\eta}{2}.
\end{equation}
\end{theorem}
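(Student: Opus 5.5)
The plan is the standard descent-lemma argument for normalized momentum methods (as in the Muon analyses of \citet{shen2025convergence}), carried out in the $(\infty,2)$/$(1,2)$ dual pair.

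\textbf{Step 1: one-step descent.} Assumption~\ref{assump:lipschitz}(b) together with the duality $|\langle A,B\rangle|\le\|A\|_{1,2}\|B\|_{\infty,2}$ gives
\[
f(W')\le f(W)+\langle\nabla f(W),W'-W\rangle+\tfrac{L_{\infty,2}}{2}\|W'-W\|_{\infty,2}^2 .
\]
To see this, integrate along the segment from $W$ to $W'$ and bound the remainder $\langle\nabla f(W+s(W'-W))-\nabla f(W),W'-W\rangle$ by $L_{\infty,2}s\|W'-W\|_{\infty,2}^2$. Every row of $D_t$ has unit $\ell_2$ norm, so $\|D_t\|_{\infty,2}=1$ and the quadratic term is exactly $L_{\infty,2}\eta^2/2$. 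Next write $\nabla f(W_t)=V_t+E_t$ with $E_t=\nabla f(W_t)-V_t$. Since $\langle V_t,D_t\rangle=\|V_t\|_{1,2}$, we get
\[
\langle\nabla f(W_t),D_t\rangle\ge\|V_t\|_{1,2}-\|E_t\|_{1,2}\ge\|\nabla f(W_t)\|_{1,2}-2\|E_t\|_{1,2}.
\]
Substituting this into the descent inequality, summing over $t$, telescoping with Assumption~\ref{assump:lower}, and dividing by $T\eta$ yields
\[
\frac1T\sum_t\mathbb E\|\nabla f(W_t)\|_{1,2}\le\frac{\Delta}{T\eta}+\frac{2}{T}\sum_t\mathbb E\|E_t\|_{1,2}+\frac{L_{\infty,2}\eta}{2}.
\]
This already accounts for the leading factor $2$ in \eqref{eq:inf2-main}.

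\textbf{Step 2: unroll the momentum error.} With $V_0=0$, write $E_t=\sum_{k=1}^t(1-\beta)\beta^{t-k}(\nabla f(W_t)-G_k)+\beta^t\nabla f(W_t)$. Split this into a bias part $\sum_k(1-\beta)\beta^{t-k}(\nabla f(W_t)-\nabla f(W_k))$ and a noise part $\sum_k(1-\beta)\beta^{t-k}(\nabla f(W_k)-G_k)$. The leftover $\beta^t\nabla f(W_t)$ term is where I must be careful. Either absorb it (for instance by initializing with $V_1=G_1$, or by bounding its telescoped contribution), or accept that it is what produces the $(1-1/T)$ factor. I expect the $(1-1/T)$ factor to come from the bias sum being empty at $t=1$.

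\textbf{Bias term.} Because each step moves the iterate by $\eta D_t$ with $\|D_t\|_{\infty,2}=1$, Assumption~\ref{assump:lipschitz}(b) gives
\[
\|\nabla f(W_t)-\nabla f(W_k)\|_{1,2}\le L_{\infty,2}\eta(t-k).
\]
Summing $(1-\beta)\beta^{j}j$ over $j$ gives at most $\beta/(1-\beta)$, so the bias contributes $L_{\infty,2}\eta\beta/(1-\beta)$ with no dimension factor. This is the key advantage of the $(\infty,2)$ geometry over Theorem~\ref{thm:12-fro-convergence}.

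\textbf{Noise term.} Convert norms using $\|X\|_{1,2}\le\sqrt m\|X\|_F$ (Cauchy--Schwarz over rows), then apply Jensen. The increments $\nabla f(W_k)-G_k$ form a martingale difference sequence by Assumption~\ref{assump:unbiased}, so cross terms vanish and Assumption~\ref{assump:variance} gives
\[
\mathbb E\|\cdot\|_F^2\le(1-\beta)^2\sum_j\beta^{2j}\sigma^2/B\le\frac{1-\beta}{1+\beta}\cdot\frac{\sigma^2}{B}.
\]
This produces the term $\frac{\sqrt m\sigma}{\sqrt B}\sqrt{\frac{1-\beta}{1+\beta}}$.

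\textbf{Main obstacle.} The delicate part is the bookkeeping in Step 2. I need the initialization term $\beta^t\nabla f(W_t)$ and the indexing of the bias sum to produce exactly the stated constants, including the $(1-1/T)$ factor, rather than an extra $\frac{1}{T(1-\beta)}\|\nabla f(W_1)\|$-type term. I would first try to bound $\beta^t\|\nabla f(W_t)\|$ via $\nabla f(W_t)=\nabla f(W_1)+(\text{drift})$ and fold the drift into the bias geometric sum. If that fails, I would reinterpret $V_1=(1-\beta)G_1$ consistently with the Frobenius-case proof of Theorem~\ref{thm:fro-convergence} and mirror its treatment line by line, only swapping Frobenius for the $(1,2)/(\infty,2)$ pair.
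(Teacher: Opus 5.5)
Your plan follows the paper's proof. Both use the same one-step descent via the $(1,2)$/$(\infty,2)$ duality and the same bound $\langle\nabla f(W_t),D_t\rangle\ge\|\nabla f(W_t)\|_{1,2}-2\|E_t\|_{1,2}$. Both also use the same drift estimate, empty at $t=1$ (which is exactly where $(1-\frac1T)$ comes from), and the same noise estimate via $\|\cdot\|_{1,2}\le\sqrt m\|\cdot\|_F$.

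\textbf{The gap.} It is the one you flag and leave open: the term $\beta^t\nabla f(W_t)$ in your unrolling.

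\emph{Your first fix only half works.} Split $\nabla f(W_t)=\nabla f(W_1)+(\nabla f(W_t)-\nabla f(W_1))$. The drift piece $\beta^tL_{\infty,2}\eta(t-1)$ does fit inside the slack of the geometric bound, since before relaxation the drift sum equals $L_{\infty,2}\eta[\frac{\beta}{1-\beta}-\beta^t(t+\frac{\beta}{1-\beta})]$. The residual $\sum_{t=1}^T\beta^t\|\nabla f(W_1)\|_{1,2}$ survives, however, and adds $\frac{2\beta}{T(1-\beta)}\|\nabla f(W_1)\|_{1,2}$ to the final bound. The statement has no room for it, because the $(1-\frac1T)$ factor is already used up by the empty drift sum.

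\emph{The other escapes fail too.} Setting $V_1=G_1$ changes the algorithm. Using the scale invariance of $\text{RN}$ to compare $\nabla f(W_t)$ with $V_t/(1-\beta^t)$ removes the initialization term. But it raises the noise factor to $\frac{1-\beta}{1+\beta}\cdot\frac{1+\beta^t}{1-\beta^t}$, which equals $1$ at $t=1$.

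In fact no estimate $\sum_t\mathbb{E}\|E_t\|_{1,2}\le(T-1)\frac{L_{\infty,2}\eta\beta}{1-\beta}+T\frac{\sqrt m\sigma}{\sqrt B}\sqrt{\frac{1-\beta}{1+\beta}}$ can hold. Take exact gradients (admissible under Assumption~\ref{assump:variance} for every $\sigma>0$) and $T=1$. Then $E_1=\beta\nabla f(W_1)$, whose norm is not controlled as $\sigma\to0$.

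\textbf{Mirroring the Frobenius case does not rescue this.} The paper's proof has exactly this hole. Lemma~\ref{lem:fro-error-app}, reused in Lemma~\ref{lem:inf2-error-app}, sets $E_0=0$ ``since $V_0=0$''. In its own convention, though, $E_0=V_0-\nabla f(W_0)=-\nabla f(W_0)$. Its unrolled formula therefore silently drops $-\beta^t\nabla f(W_1)$. It also drops the current noise $(1-\beta)\xi_t$, but that omission is harmless. So your diagnosis is correct, and the stated constants are not established by this line of argument, yours or the paper's.

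\textbf{What you can prove.} Take $W_1=W_0$. Then you get the displayed bound plus $\frac{2\beta}{T(1-\beta)}\|\nabla f(W_1)\|_{1,2}$. Here $\|\nabla f(W_1)\|_{1,2}\le\sqrt{2L_{\infty,2}\Delta}$, from one descent step of length $\|\nabla f(W_1)\|_{1,2}/L_{\infty,2}$ along $\text{RN}(\nabla f(W_1))$. Under the parameter choice in the complexity remark, this extra term is at most $4\sqrt2\sqrt m\sigma/\sqrt T$, which is lower order. The $O(mL_{\infty,2}\sigma^2\Delta\epsilon^{-4})$ conclusion therefore survives. You should state the theorem with this additional term rather than aim for the exact constants.
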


\begin{remark}[Complexity for Theorem~\ref{thm:inf2-convergence}]
If we set $B=1$, $\eta=\sqrt{\frac{(1-\beta)\Delta}{L_{\infty,2} T}}$, and
$1-\beta=\min\left\{\frac{\sqrt{L_{\infty,2}\Delta}}{2\sqrt{m}\sigma\sqrt{T}}, 1\right\},$
then the bound in \eqref{eq:inf2-main} yields
\begin{equation*}
\frac{1}{T}\sum_{t=1}^{T}\mathbb{E}\left[\|\nabla f(W_t)\|_{1,2}\right] \leq O\left(\sqrt[4]{\frac{m L_{\infty,2} \Delta \sigma^2}{T}} + \sqrt{\frac{L_{\infty,2}\Delta}{T}} + \frac{\sqrt{m}\sigma^2}{\sqrt{L_{\infty,2}\Delta T}}\right).
\end{equation*}
Thus, we can find an $\epsilon$-stationary point (in $\|\cdot\|_{1,2}$ norm) of $f$ with a complexity of $O(m L_{\infty,2} \sigma^2 \Delta \epsilon^{-4})$.
\end{remark}
The detailed proof of Theorem \ref{thm:inf2-convergence} can be found in Appendix \ref{sec_proofthem}.

\subsection{Comparison with Related Work}
We now compare our theoretical results with recent work on \textsc{Muon} optimizers, as summarized in Table~\ref{tab:comparison}. Similar to \textsc{Muon}, \textsc{RMNP} demonstrates geometry-dependent advantages: different smoothness assumptions lead to different convergence guarantees. Under Frobenius norm smoothness (Assumption~\ref{assump:lipschitz}(a)), both Theorem~\ref{thm:fro-convergence} and Theorem~\ref{thm:12-fro-convergence} achieve a sample complexity of $O(m^2 L_F \sigma^2 \Delta \epsilon^{-4})$, matching the recent results for \textsc{Muon}~\cite{shen2025convergence}. More importantly, under the $\|\cdot\|_{\infty,2}$-smoothness assumption (Assumption~\ref{assump:lipschitz}(b)), Theorem~\ref{thm:inf2-convergence} achieves an improved complexity of $O(m L_{\infty,2} \sigma^2 \Delta \epsilon^{-4})$, representing a quadratic improvement in dimension dependence from $O(m^2)$ to $O(m)$. This mirrors \textsc{Muon}'s improvement under nuclear norm smoothness, where convergence of $\|\nabla f\|_*$ also achieves $O(m)$ complexity~\cite{shen2025convergence}. Although the geometric structures differ---\textsc{Muon} exploits nuclear norm geometry while \textsc{RMNP} exploits $\|\cdot\|_{1,2}$ geometry---both methods achieve the same $O(m)$ dimension dependence in their respective favorable settings. This improvement stems from \textsc{RMNP}'s row normalization operation, which naturally aligns with the row-wise structure present in the $\|\cdot\|_{1,2}$ geometry.

\section{Conclusion}
In this paper, we introduced \textbf{RMNP} (Row Momentum Normalized Preconditioning), an efficient optimizer that significantly advances preconditioned adaptive methods for deep neural network training. Motivated by the diagonal block dominance structure observed in Transformer Hessians, RMNP replaces the Newton-Schulz iteration in \textsc{Muon} with a simple row-wise $\ell_2$ normalization operation, reducing the per-iteration computational complexity from $\mathcal{O}(mn\cdot\min(m,n))$ to $\mathcal{O}(mn)$---an order of magnitude improvement.

Our contributions span three key dimensions. \textbf{Algorithmically}, RMNP achieves substantial efficiency gains, delivering 13--44$\times$ speedup \textbf{on the preconditioning process} over \textsc{Muon} across model scales from 60M to 1.5B parameters while maintaining comparable memory usage. \textbf{Empirically}, extensive experiments on GPT-2 (125M, 355M, 770M, and 1.5B on FineWeb-Edu-100B) and LLaMA (60M, 130M, 350M, and 1B) demonstrate that RMNP consistently matches or outperforms both \textsc{Muon} and AdamW in terms of final performance. Our empirical analysis validates the diagonal dominance property of the \textsc{Muon} preconditioner, providing strong support for RMNP's design principle. We also provide practical hyperparameter recommendations, showing that $\text{lr}_{\text{Matrix}}$ is the primary factor influencing performance. \textbf{Theoretically}, we establish rigorous convergence guarantees in the non-convex setting that match recent results for \textsc{Muon} optimizers. As summarized in Table~\ref{tab:comparison}, RMNP achieves sample complexity of $O(m^2 L_F \sigma^2 \Delta \epsilon^{-4})$ under Frobenius smoothness and an improved $O(m L_{\infty,2} \sigma^2 \Delta \epsilon^{-4})$ under $\|\cdot\|_{\infty,2}$-smoothness, and both result's complexity achieving information-theoretic minimax optimality \cite{arjevani2023lower}. \looseness-1

By effectively balancing preconditioning effectiveness with computational efficiency, RMNP provides a more scalable preconditioning approach that becomes particularly advantageous when \textsc{Muon}'s preconditioning process emerges as a computational bottleneck in large-scale training scenarios.
\section*{Acknowledgments}

We thank our collaborators, colleagues, and funding agencies. This work is supported by the DARPA AIQ program, the U.S. Department of Energy under Award Number DE-SC0025584, Dartmouth College, and Lambda AI. We also thank the three ICML reviewers, the Area Chair, and Yushun Zhang for their valuable feedback and discussions on our paper. We have incorporated their suggestions to refine the work and include additional interesting findings and supporting evidence.

\bibliography{example_paper}
\bibliographystyle{unsrtnat}

\clearpage
\appendix
\onecolumn

\setcounter{tocdepth}{2}
\section*{\centering Appendix Contents}
\noindent\hrulefill\par\medskip

\startcontents[appendix]
\printcontents[appendix]{}{0}{}
\noindent\hrulefill\par
\bigskip

\section{Proof of Theorem}
\label{appendix:proof_main_theorem}
\subsection{Notation}
\label{appendix:notation}
We first recall our notation here, Let $W \in \mathbb{R}^{m \times n}$ denote the parameter matrix, where $W_{i,:} \in \mathbb{R}^n$ denotes the $i$-th row. The matrix inner product is $\langle Z, W \rangle = \text{Tr}(Z^\top W)$. We use the Frobenius norm $\|W\|_F = \sqrt{\sum_{i,j} W_{i,j}^2}$, the mixed norm $\|W\|_{1,2} = \sum_{i=1}^m \|W_{i,:}\|_2$, and the norm $\|W\|_{\infty,2} = \max_{i=1,\ldots,m} \|W_{i,:}\|_2$. These satisfy the duality $|\langle A, B \rangle| \leq \|A\|_{1,2} \|B\|_{\infty,2}$. We use $\mathbb{E}[\cdot]$ to denote the expectation and $\mathbb{E}_t[\cdot \mid \mathcal{F}_{t-1}]$ to denote the conditional expectation given $\mathcal{F}_{t-1}$.
\subsection{Assumptions}
\label{appendix:assumptions}

\begin{assumption-restated}[Lipschitz Gradient]\label{assump:lipschitz-app}
The gradient of $f: \mathbb{R}^{m \times n} \to \mathbb{R}$ is Lipschitz continuous in one of the following norms:

\textbf{(a)} \textit{Frobenius norm:} There exists a constant $L_F > 0$ such that for all $W, W' \in \mathbb{R}^{m \times n}$,
$$\|\nabla f(W) - \nabla f(W')\|_F \leq L_F\|W - W'\|_F.$$

\textbf{(b)} \textit{Mixed norm:} There exists a constant $L_{\infty,2} > 0$ such that for all $W, W' \in \mathbb{R}^{m \times n}$,
$$\|\nabla f(W) - \nabla f(W')\|_{1,2} \leq L_{\infty,2}\|W - W'\|_{\infty,2}.$$
\end{assumption-restated}

\begin{assumption-restated}[Unbiased Estimator]\label{assump:unbiased-app}
For all iterations $t$ and all parameter values $W_t$,
$$\mathbb{E}_t[G_t \mid \mathcal{F}_{t-1}] = \nabla f(W_t).$$
\end{assumption-restated}

\begin{assumption-restated}[Bounded Variance]\label{assump:variance-app}
There exists a constant $\sigma > 0$ such that for all iterations $t$ and all parameter values $W_t$,
$$\mathbb{E}_t[\|G_t - \nabla f(W_t)\|_F^2 \mid \mathcal{F}_{t-1}] \leq \frac{\sigma^2}{B},$$
where $B$ is the batch size.
\end{assumption-restated}

\begin{assumption-restated}[Lower Bound]\label{assump:lower-app}
The objective function $f$ is bounded below with $f^* = \inf_{W \in \mathbb{R}^{m \times n}} f(W) > -\infty$. We define the initial optimality gap as $\Delta = f(W_0) - f^*$.
\end{assumption-restated}

\subsection{Proof of Lemmas}
\label{appendix:proof_lemmas}

\begin{lemma}
\label{lem:fro-rn-property-app}
Let $V \in \mathbb{R}^{m \times n}$ be any matrix and define $D = \text{RN}(V)$. Then:
\begin{enumerate}
    \item $\|D\|_F = \sqrt{m}$,
    \item $\langle V, D \rangle = \sum_{i=1}^m \|V_{i,:}\|_2 \geq \|V\|_F$.
\end{enumerate}
\end{lemma}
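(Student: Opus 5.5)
The plan is to work row by row, since $\text{RN}$ acts independently on each row: by the definition in Algorithm~\ref{algoRMNP} and the identity following the definition of $H_{\text{RMNP}}$, the $i$-th row of $D$ is $D_{i,:} = V_{i,:}/\|V_{i,:}\|_2$. One caveat must be fixed first. If some row $V_{i,:}$ is zero, the normalization is undefined, and with any natural convention (e.g.\ $D_{i,:}=0$) the identity $\|D\|_F=\sqrt{m}$ fails. I will therefore assume every row of $V$ is nonzero, which is implicit in the statement.

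\textbf{Part 1.} Each normalized row has unit Euclidean norm, $\|D_{i,:}\|_2 = 1$. Summing the squared row norms gives
\[
\|D\|_F^2 = \sum_{i=1}^m \|D_{i,:}\|_2^2 = m,
\]
so $\|D\|_F=\sqrt{m}$.

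\textbf{Part 2, the equality.} Write the trace inner product as a sum of row inner products:
\[
\langle V, D\rangle = \sum_{i=1}^m \langle V_{i,:}, D_{i,:}\rangle = \sum_{i=1}^m \frac{\|V_{i,:}\|_2^2}{\|V_{i,:}\|_2} = \sum_{i=1}^m \|V_{i,:}\|_2 = \|V\|_{1,2}.
\]
If the zero-row convention $D_{i,:}=0$ is adopted instead, that row contributes $0=\|V_{i,:}\|_2$, so this equality still holds.

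\textbf{Part 2, the inequality.} Set $a_i = \|V_{i,:}\|_2 \ge 0$. Then
\[
\Big(\sum_i a_i\Big)^2 = \sum_i a_i^2 + \sum_{i\ne j} a_i a_j \;\ge\; \sum_i a_i^2 = \|V\|_F^2.
\]
Taking square roots gives $\|V\|_{1,2} \ge \|V\|_F$. This is just the comparison $\|a\|_1 \ge \|a\|_2$ applied to the vector of row norms.

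None of these steps is a real obstacle. The only point needing care is the zero-row convention for Part 1, which I handle by stating the nonzero-row assumption explicitly.
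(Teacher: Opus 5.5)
Your proof is correct and matches the paper's argument step for step. Both establish unit row norms for part 1, compute the row-wise inner products for the equality, and expand $(\sum_i a_i)^2$ to get $\|V\|_{1,2} \ge \|V\|_F$. Your explicit nonzero-row assumption is a sensible clarification that the paper leaves implicit. Note, however, that assigning an arbitrary unit vector to a zero row would preserve both identities, so the failure you describe is specific to conventions such as $D_{i,:}=0$ rather than to every natural convention.
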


\begin{proof}
By definition, $D_{i,:} = V_{i,:}/\|V_{i,:}\|_2$, so $\|D_{i,:}\|_2 = 1$ for all $i$. Thus
$$\|D\|_F^2 = \sum_{i=1}^m \|D_{i,:}\|_2^2 = m.$$

For the inner product,
\begin{align*}
\langle V, D \rangle &= \sum_{i=1}^m \sum_{j=1}^n V_{i,j} \cdot \frac{V_{i,j}}{\|V_{i,:}\|_2}\\
&= \sum_{i=1}^m \frac{\|V_{i,:}\|_2^2}{\|V_{i,:}\|_2}\\
&= \sum_{i=1}^m \|V_{i,:}\|_2.
\end{align*}

For the inequality, let $a_i = \|V_{i,:}\|_2 \geq 0$. Squaring both sides of $\sum_i a_i \geq \sqrt{\sum_i a_i^2}$, we need
$$\left(\sum_{i=1}^m a_i\right)^2 \geq \sum_{i=1}^m a_i^2.$$
This holds since $\left(\sum_i a_i\right)^2 = \sum_i a_i^2 + 2\sum_{i<j} a_i a_j \geq \sum_i a_i^2$.
\end{proof}

\begin{lemma}
\label{lem:inf2-property-app}
Let $V \in \mathbb{R}^{m \times n}$ be any matrix and define $D = \text{RN}(V)$. Then:
\begin{enumerate}
    \item $\|D\|_{\infty,2} = 1$,
    \item $\langle V, D \rangle = \|V\|_{1,2}$.
\end{enumerate}
\end{lemma}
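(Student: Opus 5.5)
The plan is a direct row-by-row computation, parallel to the proof of Lemma~\ref{lem:fro-rn-property-app}. By the definition of $\text{RN}$, each row of $D$ is $D_{i,:} = V_{i,:}/\|V_{i,:}\|_2$. I will first fix the degenerate case: when $V_{i,:}=0$, I adopt the convention $D_{i,:}=0$ (or any unit vector), and note that this row contributes zero to $\langle V,D\rangle$ either way. This convention should be stated explicitly. It is also already implicit in Lemma~\ref{lem:fro-rn-property-app}, which assumes every row is nonzero.

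For item 1, each nonzero row satisfies $\|D_{i,:}\|_2 = \|V_{i,:}\|_2/\|V_{i,:}\|_2 = 1$. Taking the maximum over $i$ gives
\[
\|D\|_{\infty,2} = \max_i \|D_{i,:}\|_2 = 1,
\]
provided at least one row is nonzero. Under the unit-vector convention the identity holds unconditionally. In all cases $\|D\|_{\infty,2}\le 1$, and that inequality is all the later descent arguments actually require.

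For item 2, I reuse the computation from Lemma~\ref{lem:fro-rn-property-app}. Splitting the trace inner product into row inner products gives
\[
\langle V,D\rangle = \sum_{i=1}^m \langle V_{i,:}, D_{i,:}\rangle = \sum_{i=1}^m \frac{\|V_{i,:}\|_2^2}{\|V_{i,:}\|_2} = \sum_{i=1}^m \|V_{i,:}\|_2 .
\]
The last sum is exactly the definition of $\|V\|_{1,2}$. This also shows that $D$ attains equality in the duality inequality $|\langle V,D\rangle|\le \|V\|_{1,2}\|D\|_{\infty,2}$.

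There is no real obstacle here. The only care needed is the zero-row convention in item 1, where I will add a one-line remark.
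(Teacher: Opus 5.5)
Your proposal is correct and matches the paper's proof essentially line for line. Both arguments compute $\|D_{i,:}\|_2=1$ row by row and take the maximum, and both split $\langle V,D\rangle$ into row inner products to get $\sum_i \|V_{i,:}\|_2=\|V\|_{1,2}$. Your zero-row remark is extra care that the paper omits, since it tacitly assumes every row of $V$ is nonzero. Your observation that the downstream lemmas only need $\|D\|_{\infty,2}\le 1$ is also accurate.
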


\begin{proof}
By definition of row normalization, $D_{i,:} = V_{i,:}/\|V_{i,:}\|_2$ for all $i$, which gives $\|D_{i,:}\|_2 = 1$ for all $i$. Therefore,
$$\|D\|_{\infty,2} = \max_{i=1,\ldots,m} \|D_{i,:}\|_2 = 1.$$

For the inner product, we have
\begin{align*}
\langle V, D \rangle &= \sum_{i=1}^m \sum_{j=1}^n V_{i,j} \cdot \frac{V_{i,j}}{\|V_{i,:}\|_2}\\
&= \sum_{i=1}^m \frac{\|V_{i,:}\|_2^2}{\|V_{i,:}\|_2}\\
&= \sum_{i=1}^m \|V_{i,:}\|_2\\
&= \|V\|_{1,2}.
\end{align*}
\end{proof}

\begin{lemma}
\label{lem:smooth-property-app}
Under Assumption~\ref{assump:lipschitz}(a), for any $W, W' \in \mathbb{R}^{m \times n}$,
$$f(W') \leq f(W) + \langle \nabla f(W), W' - W \rangle + \frac{L_F}{2}\|W' - W\|_F^2.$$
\end{lemma}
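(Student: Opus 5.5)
The plan is the standard descent lemma argument via the fundamental theorem of calculus along the segment from $W$ to $W'$. Set $\Delta W = W' - W$ and $\phi(s) = f(W + s\Delta W)$ for $s \in [0,1]$. Since $\nabla f$ is Lipschitz by Assumption~\ref{assump:lipschitz}(a), $f$ is continuously differentiable. Hence $\phi$ is $C^1$ with $\phi'(s) = \langle \nabla f(W + s\Delta W), \Delta W\rangle$, using the trace inner product fixed in the notation.

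Integrating gives
\[
f(W') - f(W) = \int_0^1 \langle \nabla f(W + s\Delta W), \Delta W\rangle\, ds .
\]
I then subtract $\langle \nabla f(W), \Delta W\rangle$ from both sides, which leaves the remainder
\[
\int_0^1 \langle \nabla f(W + s\Delta W) - \nabla f(W), \Delta W\rangle\, ds .
\]

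The remainder is controlled in two steps. First, Cauchy--Schwarz for the Frobenius inner product, $|\langle A,B\rangle| \le \|A\|_F\|B\|_F$, bounds the integrand by $\|\nabla f(W+s\Delta W) - \nabla f(W)\|_F\,\|\Delta W\|_F$. Second, the Frobenius Lipschitz condition bounds this by $L_F s\|\Delta W\|_F^2$. Integrating $s$ over $[0,1]$ produces the factor $1/2$, which yields the claimed inequality.

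There is no real obstacle here. The only point to state explicitly is that Lipschitz continuity of the gradient implies $f \in C^1$, which justifies the chain rule and the fundamental theorem of calculus. It also helps to note that the Frobenius inner product is exactly the Euclidean inner product on $\mathbb{R}^{mn}$, so the vector-case argument transfers verbatim.
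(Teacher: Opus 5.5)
Your proof is correct. The paper states this lemma without proof, as the standard descent lemma. Your argument (fundamental theorem of calculus along the segment, then a pairing bound on the remainder, the Lipschitz condition, and $\int_0^1 s\,ds = 1/2$) is the same one the paper writes out for the $(\infty,2)$ analogue in Lemma~\ref{lem:inf2-descent-app}, with Frobenius Cauchy--Schwarz in place of the $\|\cdot\|_{1,2}$/$\|\cdot\|_{\infty,2}$ duality.
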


\begin{lemma}
\label{lem:fro-descent-app}
Under Assumption~\ref{assump:lipschitz}(a), for any iteration $t$,
$$f(W_t) - f(W_{t+1}) \geq \eta\langle \nabla f(W_t), D_t \rangle - \frac{L_F\eta^2 m}{2}.$$
\end{lemma}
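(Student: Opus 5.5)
This follows directly from the quadratic upper bound of Lemma~\ref{lem:smooth-property-app}, applied to one step of Algorithm~\ref{algoRMNP}. Set $W = W_t$ and $W' = W_{t+1} = W_t - \eta D_t$, so that $W' - W = -\eta D_t$. The lemma then gives
\begin{equation*}
f(W_{t+1}) \leq f(W_t) - \eta \langle \nabla f(W_t), D_t \rangle + \frac{L_F \eta^2}{2}\|D_t\|_F^2 .
\end{equation*}

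Only the quadratic term needs to be controlled. By the first part of Lemma~\ref{lem:fro-rn-property-app}, the row-normalized direction $D_t = \text{RN}(V_t)$ satisfies $\|D_t\|_F^2 = m$. This holds deterministically, since every row of $D_t$ has unit $\ell_2$ norm. Substituting gives $f(W_{t+1}) \leq f(W_t) - \eta\langle \nabla f(W_t), D_t\rangle + \tfrac{L_F\eta^2 m}{2}$, and rearranging yields the claimed inequality $f(W_t) - f(W_{t+1}) \geq \eta\langle \nabla f(W_t), D_t\rangle - \tfrac{L_F\eta^2 m}{2}$.

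For completeness, one could also justify Lemma~\ref{lem:smooth-property-app} itself. Write $f(W') - f(W) - \langle \nabla f(W), W'-W\rangle = \int_0^1 \langle \nabla f(W + s(W'-W)) - \nabla f(W), W'-W\rangle\,ds$. Then apply Cauchy--Schwarz in the Frobenius inner product together with Assumption~\ref{assump:lipschitz}(a), which bounds the integrand by $L_F s\|W'-W\|_F^2$ and gives the factor $L_F/2$ after integration.

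There is no real obstacle here. The one point worth stating is well-definedness: row normalization is undefined when some row $V_{t,i:}$ is zero. I would adopt the convention that such rows map to zero in $D_t$. Under that convention $\|D_t\|_F^2 \leq m$, so the inequality still holds, and the bound is valid for every realization without any expectation.
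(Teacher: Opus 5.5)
Your proposal is correct and follows the same route as the paper: apply the Frobenius quadratic upper bound (Lemma~\ref{lem:smooth-property-app}) with $W' = W_t - \eta D_t$, substitute $\|D_t\|_F^2 = m$ from Lemma~\ref{lem:fro-rn-property-app}, and rearrange. Your additions are sound and go beyond what the paper writes: the integral derivation of the quadratic bound, and the zero-row convention, under which $\|D_t\|_F^2 \le m$ still gives the inequality.
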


\begin{proof}
We apply Lemma~\ref{lem:smooth-property-app} with $W = W_t$ and $W' = W_{t+1} = W_t - \eta D_t$:
\begin{align*}
f(W_{t+1}) &\leq f(W_t) + \langle \nabla f(W_t), W_{t+1} - W_t \rangle + \frac{L_F}{2}\|W_{t+1} - W_t\|_F^2\\
&= f(W_t) + \langle \nabla f(W_t), -\eta D_t \rangle + \frac{L_F}{2}\|-\eta D_t\|_F^2\\
&= f(W_t) - \eta\langle \nabla f(W_t), D_t \rangle + \frac{L_F\eta^2}{2}\|D_t\|_F^2\\
&= f(W_t) - \eta\langle \nabla f(W_t), D_t \rangle + \frac{L_F\eta^2}{2} \cdot m,
\end{align*}
where the last equality uses $\|D_t\|_F = \sqrt{m}$ from Lemma~\ref{lem:fro-rn-property-app}(i). Rearranging gives
$$f(W_t) - f(W_{t+1}) \geq \eta\langle \nabla f(W_t), D_t \rangle - \frac{L_F\eta^2 m}{2}.$$
\end{proof}

\begin{lemma}
\label{lem:fro-inner-bound-app}
Let $E_t = V_t - \nabla f(W_t)$. Then
$$\langle \nabla f(W_t), D_t \rangle \geq \|\nabla f(W_t)\|_F - (\sqrt{m}+1)\|E_t\|_F.$$
\end{lemma}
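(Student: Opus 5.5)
We split $\nabla f(W_t) = V_t - E_t$ and write
\[
\langle \nabla f(W_t), D_t\rangle = \langle V_t, D_t\rangle - \langle E_t, D_t\rangle .
\]
Each piece is then handled with Lemma~\ref{lem:fro-rn-property-app}.

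\textbf{The second term.} By Cauchy--Schwarz in the Frobenius inner product and part (i) of Lemma~\ref{lem:fro-rn-property-app},
\[
|\langle E_t, D_t\rangle| \le \|E_t\|_F\,\|D_t\|_F = \sqrt{m}\,\|E_t\|_F .
\]
This gives the $\sqrt{m}\|E_t\|_F$ part of the penalty.

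\textbf{The first term.} By part (ii) of Lemma~\ref{lem:fro-rn-property-app}, $\langle V_t, D_t\rangle = \sum_i \|V_{t,i:}\|_2 \ge \|V_t\|_F$. The reverse triangle inequality then gives
\[
\|V_t\|_F \ge \|\nabla f(W_t)\|_F - \|E_t\|_F ,
\]
which supplies the remaining $+1$ coefficient.

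\textbf{Combining.} Putting the two bounds together yields
\[
\langle \nabla f(W_t), D_t\rangle \ge \|\nabla f(W_t)\|_F - \|E_t\|_F - \sqrt{m}\|E_t\|_F ,
\]
which is exactly the claimed inequality.

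\textbf{Zero rows.} The only point needing care is the convention when some row $V_{t,i:}=0$, where RN is undefined. With the convention $D_{t,i:}=0$ for such rows, part (ii) still holds because zero rows contribute $0$ to both sides. Part (i) weakens to $\|D_t\|_F \le \sqrt{m}$, and that inequality is all the bound on $\langle E_t, D_t\rangle$ requires. We would remark on this convention explicitly; otherwise the argument is routine and has no real obstacle.
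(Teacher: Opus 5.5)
Your proposal is correct and matches the paper's proof step for step: the same split $\nabla f(W_t)=V_t-E_t$, Cauchy--Schwarz with $\|D_t\|_F=\sqrt{m}$, the bound $\sum_i\|V_{t,i:}\|_2\ge\|V_t\|_F$, and the reverse triangle inequality. Your zero-row remark (setting $D_{t,i:}=0$ and using only $\|D_t\|_F\le\sqrt{m}$) is valid and is a point of care the paper omits, but it does not change the argument.
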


\begin{proof}
We decompose the inner product by writing $\nabla f(W_t) = V_t - E_t$:
\begin{align*}
\langle \nabla f(W_t), D_t \rangle &= \langle V_t - E_t, D_t \rangle\\
&= \langle V_t, D_t \rangle - \langle E_t, D_t \rangle.
\end{align*}

By Lemma~\ref{lem:fro-rn-property-app}(ii), we have
\begin{align*}
\langle V_t, D_t \rangle &= \sum_{i=1}^m \|V_{t,i,:}\|_2\\
&\geq \|V_t\|_F.
\end{align*}

For the error term, by the Cauchy-Schwarz inequality,
\begin{align*}
|\langle E_t, D_t \rangle| &\leq \|E_t\|_F \|D_t\|_F\\
&= \|E_t\|_F \cdot \sqrt{m},
\end{align*}
where we used Lemma~\ref{lem:fro-rn-property-app}(i).

Since $V_t = \nabla f(W_t) + E_t$, the reverse triangle inequality gives
\begin{align*}
\|V_t\|_F &= \|\nabla f(W_t) + E_t\|_F\\
&\geq \|\nabla f(W_t)\|_F - \|E_t\|_F.
\end{align*}

Combining all inequalities:
\begin{align*}
\langle \nabla f(W_t), D_t \rangle &= \langle V_t, D_t \rangle - \langle E_t, D_t \rangle\\
&\geq \|V_t\|_F - |\langle E_t, D_t \rangle|\\
&\geq \|V_t\|_F - \sqrt{m}\|E_t\|_F\\
&\geq (\|\nabla f(W_t)\|_F - \|E_t\|_F) - \sqrt{m}\|E_t\|_F\\
&= \|\nabla f(W_t)\|_F - (1 + \sqrt{m})\|E_t\|_F\\
&= \|\nabla f(W_t)\|_F - (\sqrt{m}+1)\|E_t\|_F.
\end{align*}
\end{proof}

\begin{lemma}
\label{lem:fro-error-app}
Define the stochastic noise $\xi_t = G_t - \nabla f(W_t)$ for $t \geq 1$, which satisfies $\mathbb{E}_t[\xi_t \mid \mathcal{F}_{t-1}] = 0$ by Assumption~\ref{assump:unbiased}. Then
$$\sum_{t=1}^{T} \mathbb{E}[\|E_t\|_F] \leq (T-1)\frac{L_F\eta\sqrt{m}\beta}{1-\beta} + T\frac{\sigma}{\sqrt{B}}\sqrt{\frac{1-\beta}{1+\beta}}.$$
\end{lemma}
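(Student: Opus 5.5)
The plan is to unroll the momentum recursion into a deterministic drift part and a martingale noise part, and to bound each separately. From $V_t = \beta V_{t-1} + (1-\beta)G_t$ and $E_t = V_t - \nabla f(W_t)$ we get the one-step identity
\begin{equation*}
E_t = \beta E_{t-1} + \beta\bigl(\nabla f(W_{t-1}) - \nabla f(W_t)\bigr) + (1-\beta)\xi_t .
\end{equation*}
Unrolling from $t$ down to $1$ should give
\begin{equation*}
E_t = \sum_{s=2}^{t} \beta^{t-s+1}\bigl(\nabla f(W_{s-1}) - \nabla f(W_s)\bigr) + (1-\beta)\sum_{s=1}^{t}\beta^{t-s}\xi_s + (\text{initialization term}).
\end{equation*}
The drift sum is empty at $t=1$. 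This is exactly what produces the factor $(T-1)$ rather than $T$ in the first term of the claimed bound.

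\textbf{Drift term.} The update is $W_s - W_{s-1} = -\eta D_{s-1}$. Assumption~\ref{assump:lipschitz}(a) together with Lemma~\ref{lem:fro-rn-property-app}(i) gives
\begin{equation*}
\|\nabla f(W_{s-1}) - \nabla f(W_s)\|_F \le L_F\eta\|D_{s-1}\|_F = L_F\eta\sqrt{m}.
\end{equation*}
By the triangle inequality, the drift part of $\|E_t\|_F$ is therefore at most $L_F\eta\sqrt{m}\sum_{k\ge1}\beta^k \le L_F\eta\sqrt{m}\,\beta/(1-\beta)$ for every $t\ge2$, and it is zero for $t=1$. Summing over $t$ yields $(T-1)L_F\eta\sqrt{m}\,\beta/(1-\beta)$.

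\textbf{Noise term.} Jensen's inequality gives $\mathbb{E}\|X\|_F\le\sqrt{\mathbb{E}\|X\|_F^2}$. The $\xi_s$ form a martingale difference sequence: $\xi_s$ is $\mathcal{F}_s$-measurable and has zero conditional mean by Assumption~\ref{assump:unbiased}. Hence the cross terms $\mathbb{E}\langle\xi_s,\xi_{s'}\rangle$ vanish for $s<s'$, by conditioning on $\mathcal{F}_{s'-1}$. Assumption~\ref{assump:variance} then gives
\begin{equation*}
\mathbb{E}\Bigl\|(1-\beta)\sum_{s=1}^t\beta^{t-s}\xi_s\Bigr\|_F^2 \le (1-\beta)^2\frac{\sigma^2}{B}\sum_{k\ge0}\beta^{2k} = \frac{\sigma^2}{B}\,\frac{1-\beta}{1+\beta}.
\end{equation*}
Taking square roots and summing over $t$ gives $T\frac{\sigma}{\sqrt{B}}\sqrt{\frac{1-\beta}{1+\beta}}$. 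Adding the two contributions gives the statement.

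\textbf{Main obstacle: the initialization.} With $V_0 = 0$ literally, $E_1 = (1-\beta)G_1 - \nabla f(W_1) = -\beta\nabla f(W_1) + (1-\beta)\xi_1$. This carries a bias $-\beta^t\nabla f(W_1)$ that propagates into every $E_t$, and the stated bound has no term for it. I would first try to read the recursion with the convention that makes the unrolled expression above exact, namely that the $\beta^t$ initialization contribution is absent; for instance, take the momentum buffer to be initialized so that $E_0$ contributes nothing, in the spirit of the standard analyses of \citet{shen2025convergence}. I would state this convention explicitly. Under the literal $V_0 = 0$, one instead picks up an extra additive $\frac{\beta}{1-\beta}\|\nabla f(W_1)\|_F$ in the sum, a lower-order term that I would record as a remark. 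Everything else reduces to routine geometric-series and martingale bookkeeping.
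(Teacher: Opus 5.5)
Your proposal is correct and is essentially the paper's proof: the same recursion for $E_t$, the same split into a drift sum and a martingale noise sum. The drift sum is bounded via $L_F$-smoothness, $\|D_j\|_F=\sqrt{m}$ and a geometric series, which gives the $(T-1)$ factor; the noise sum is bounded via orthogonality of the increments and Jensen.

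Your initialization caveat is well founded and points to a real flaw in the paper's proof. The paper asserts ``$E_0=0$ (since $V_0=0$)'' although $E_0=-\nabla f(W_0)$, and its closed form for $E_t$ gives $E_1=0$ and drops the $(1-\beta)\xi_t$ term. With a literal $V_0=0$ the lemma is false: with noiseless gradients and $T=1$, the left side is $\beta\|\nabla f(W_1)\|_F$ while the right side is only $\frac{\sigma}{\sqrt{B}}\sqrt{\frac{1-\beta}{1+\beta}}$. So the lemma genuinely needs either the convention $V_0=\nabla f(W_1)$, which is exactly what makes your unrolled formula exact, or your extra $\frac{\beta}{1-\beta}\|\nabla f(W_1)\|_F$ term.
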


\begin{proof}
From the momentum update rule $V_t = \beta V_{t-1} + (1-\beta)G_t$ and the definition $E_t = V_t - \nabla f(W_t)$, we derive:
\begin{align*}
E_t &= V_t - \nabla f(W_t)\\
&= \beta V_{t-1} + (1-\beta)G_t - \nabla f(W_t).
\end{align*}
We add and subtract $\beta\nabla f(W_{t-1})$ and $(1-\beta)\nabla f(W_t)$:
\begin{align*}
E_t &= \beta V_{t-1} - \beta\nabla f(W_{t-1}) + \beta\nabla f(W_{t-1})\\
&\quad + (1-\beta)G_t - (1-\beta)\nabla f(W_t) + (1-\beta)\nabla f(W_t) - \nabla f(W_t)\\
&= \beta(V_{t-1} - \nabla f(W_{t-1})) + \beta(\nabla f(W_{t-1}) - \nabla f(W_t))\\
&\quad + (1-\beta)(G_t - \nabla f(W_t))\\
&= \beta E_{t-1} + \beta(\nabla f(W_{t-1}) - \nabla f(W_t)) + (1-\beta)\xi_t.
\end{align*}

Assuming $E_0 = 0$ (since $V_0 = 0$), we expand this recursion by repeated application. For $t \geq 1$, we can show by induction that
$$E_t = \sum_{j=1}^{t-1} \beta^{t-j}(1-\beta)\xi_j + \sum_{j=1}^{t-1} \beta^{t-j}(\nabla f(W_j) - \nabla f(W_{j+1})).$$

For the base case $t=1$, we have $E_1 = \beta \cdot 0 + \beta(\nabla f(W_0) - \nabla f(W_1)) + (1-\beta)\xi_1$, which matches the formula when both sums are empty (upper limit $j=0$). For the inductive step, assume the formula holds for $t-1$. Then:
\begin{align*}
E_t &= \beta E_{t-1} + \beta(\nabla f(W_{t-1}) - \nabla f(W_t)) + (1-\beta)\xi_t\\
&= \beta\left[\sum_{j=1}^{t-2} \beta^{t-1-j}(1-\beta)\xi_j + \sum_{j=1}^{t-2} \beta^{t-1-j}(\nabla f(W_j) - \nabla f(W_{j+1}))\right]\\
&\quad + \beta(\nabla f(W_{t-1}) - \nabla f(W_t)) + (1-\beta)\xi_t\\
&= \sum_{j=1}^{t-2} \beta^{t-j}(1-\beta)\xi_j + \beta(1-\beta)\xi_t\\
&\quad + \sum_{j=1}^{t-2} \beta^{t-j}(\nabla f(W_j) - \nabla f(W_{j+1})) + \beta(\nabla f(W_{t-1}) - \nabla f(W_t))\\
&= \sum_{j=1}^{t-1} \beta^{t-j}(1-\beta)\xi_j + \sum_{j=1}^{t-1} \beta^{t-j}(\nabla f(W_j) - \nabla f(W_{j+1})).
\end{align*}

By the triangle inequality,
$$\|E_t\|_F \leq \left\|\sum_{j=1}^{t-1} \beta^{t-j}(1-\beta)\xi_j\right\|_F + \left\|\sum_{j=1}^{t-1} \beta^{t-j}(\nabla f(W_j) - \nabla f(W_{j+1}))\right\|_F.$$

For the gradient difference term, by the triangle inequality and Assumption~\ref{assump:lipschitz}(a),
\begin{align*}
&\left\|\sum_{j=1}^{t-1} \beta^{t-j}(\nabla f(W_j) - \nabla f(W_{j+1}))\right\|_F\\
&\leq \sum_{j=1}^{t-1} \beta^{t-j}\|\nabla f(W_j) - \nabla f(W_{j+1})\|_F\\
&\leq \sum_{j=1}^{t-1} \beta^{t-j} L_F\|W_j - W_{j+1}\|_F\\
&= \sum_{j=1}^{t-1} \beta^{t-j} L_F\|W_j - (W_j - \eta D_j)\|_F\\
&= \sum_{j=1}^{t-1} \beta^{t-j} L_F\eta\|D_j\|_F\\
&= \sum_{j=1}^{t-1} \beta^{t-j} L_F\eta\sqrt{m}\\
&= L_F\eta\sqrt{m} \sum_{j=1}^{t-1} \beta^{t-j}\\
&= L_F\eta\sqrt{m} \sum_{k=1}^{t-1} \beta^k\\
&= L_F\eta\sqrt{m} \cdot \beta\frac{1-\beta^{t-1}}{1-\beta}\\
&\leq L_F\eta\sqrt{m} \cdot \frac{\beta}{1-\beta}.
\end{align*}
Summing over $t=1,\ldots,T$ and changing the order of summation:
\begin{align*}
&\sum_{t=1}^{T} \left\|\sum_{j=1}^{t-1} \beta^{t-j}(\nabla f(W_j) - \nabla f(W_{j+1}))\right\|_F\\
&\leq L_F\eta\sqrt{m} \sum_{t=1}^{T} \sum_{j=1}^{t-1} \beta^{t-j}\\
&= L_F\eta\sqrt{m} \sum_{j=1}^{T-1} \sum_{t=j+1}^{T} \beta^{t-j}\\
&= L_F\eta\sqrt{m} \sum_{j=1}^{T-1} \sum_{k=1}^{T-j} \beta^k\\
&= L_F\eta\sqrt{m} \sum_{j=1}^{T-1} \beta\frac{1-\beta^{T-j}}{1-\beta}\\
&\leq L_F\eta\sqrt{m} \sum_{j=1}^{T-1} \frac{\beta}{1-\beta}\\
&= (T-1)\frac{L_F\eta\sqrt{m}\beta}{1-\beta}.
\end{align*}
For the noise term, since $\mathbb{E}_j[\xi_j \mid \mathcal{F}_{j-1}] = 0$ and the noises are conditionally independent,
\begin{align*}
&\mathbb{E}\left[\left\|\sum_{j=1}^{t-1} \beta^{t-j}(1-\beta)\xi_j\right\|_F^2\right]\\
&= \mathbb{E}\left[\left\langle \sum_{j=1}^{t-1} \beta^{t-j}(1-\beta)\xi_j, \sum_{k=1}^{t-1} \beta^{t-k}(1-\beta)\xi_k \right\rangle\right]\\
&= \sum_{j=1}^{t-1}\sum_{k=1}^{t-1} \beta^{t-j}\beta^{t-k}(1-\beta)^2 \mathbb{E}[\langle \xi_j, \xi_k \rangle]\\
&= \sum_{j=1}^{t-1} \beta^{2(t-j)}(1-\beta)^2 \mathbb{E}[\|\xi_j\|_F^2]\\
&\leq \sum_{j=1}^{t-1} \beta^{2(t-j)}(1-\beta)^2 \cdot \frac{\sigma^2}{B}\\
&= \frac{\sigma^2}{B}(1-\beta)^2 \sum_{j=1}^{t-1} \beta^{2(t-j)}\\
&= \frac{\sigma^2}{B}(1-\beta)^2 \sum_{k=1}^{t-1} \beta^{2k}\\
&\leq \frac{\sigma^2}{B}(1-\beta)^2 \sum_{k=0}^{\infty} \beta^{2k}\\
&= \frac{\sigma^2}{B}(1-\beta)^2 \cdot \frac{1}{1-\beta^2}\\
&= \frac{\sigma^2}{B}(1-\beta)^2 \cdot \frac{1}{(1-\beta)(1+\beta)}\\
&= \frac{\sigma^2}{B} \cdot \frac{1-\beta}{1+\beta}.
\end{align*}
By Jensen's inequality,
$$\mathbb{E}\left[\left\|\sum_{j=1}^{t-1} \beta^{t-j}(1-\beta)\xi_j\right\|_F\right] \leq \sqrt{\mathbb{E}\left[\left\|\sum_{j=1}^{t-1} \beta^{t-j}(1-\beta)\xi_j\right\|_F^2\right]} \leq \frac{\sigma}{\sqrt{B}}\sqrt{\frac{1-\beta}{1+\beta}}.$$
Summing over $t=1,\ldots,T$:
$$\sum_{t=1}^{T} \mathbb{E}\left[\left\|\sum_{j=1}^{t-1} \beta^{t-j}(1-\beta)\xi_j\right\|_F\right] \leq T\frac{\sigma}{\sqrt{B}}\sqrt{\frac{1-\beta}{1+\beta}}.$$
Combining both bounds:
\begin{align*}
\sum_{t=1}^{T} \mathbb{E}[\|E_t\|_F] &\leq (T-1)\frac{L_F\eta\sqrt{m}\beta}{1-\beta} + T\frac{\sigma}{\sqrt{B}}\sqrt{\frac{1-\beta}{1+\beta}}.
\end{align*}
\end{proof}

\begin{lemma}
\label{lem:inf2-descent-app}
Under Assumption~\ref{assump:lipschitz}(b), for any iteration $t$,
$$f(W_t) - f(W_{t+1}) \geq \eta\langle \nabla f(W_t), D_t \rangle - \frac{L_{\infty,2}\eta^2}{2}.$$
\end{lemma}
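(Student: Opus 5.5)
The argument mirrors Lemma~\ref{lem:fro-descent-app}. The only change is that the Frobenius quadratic upper bound is replaced by a descent inequality adapted to the pair of norms $(\|\cdot\|_{\infty,2},\|\cdot\|_{1,2})$ in Assumption~\ref{assump:lipschitz}(b).

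\textbf{Step 1: the smoothness upper bound.} Under Assumption~\ref{assump:lipschitz}(b), for any $W,W'$ we have
\[
f(W') \le f(W) + \langle \nabla f(W), W'-W\rangle + \tfrac{L_{\infty,2}}{2}\|W'-W\|_{\infty,2}^2 .
\]
To prove this, set $\Phi(s)=f(W+s(W'-W))$. The fundamental theorem of calculus gives
\[
f(W')-f(W)-\langle \nabla f(W),W'-W\rangle=\int_0^1\langle \nabla f(W+s(W'-W))-\nabla f(W),\,W'-W\rangle\,ds .
\]
We bound the integrand using the duality $|\langle A,B\rangle|\le\|A\|_{1,2}\|B\|_{\infty,2}$. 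This gives
\[
\|\nabla f(W+s(W'-W))-\nabla f(W)\|_{1,2}\,\|W'-W\|_{\infty,2}\le L_{\infty,2}\,s\,\|W'-W\|_{\infty,2}^2 .
\]
Integrating $s$ over $[0,1]$ produces the factor $1/2$.

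\textbf{Step 2: plug in the update.} Take $W=W_t$ and $W'=W_{t+1}=W_t-\eta D_t$. This gives
\[
f(W_{t+1})\le f(W_t)-\eta\langle\nabla f(W_t),D_t\rangle+\tfrac{L_{\infty,2}\eta^2}{2}\|D_t\|_{\infty,2}^2 .
\]
Lemma~\ref{lem:inf2-property-app}(i) gives $\|D_t\|_{\infty,2}=1$. Rearranging yields the claim.

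\textbf{Main obstacle.} The only nontrivial point is Step 1. It is the analogue of Lemma~\ref{lem:smooth-property-app}, but the Lipschitz condition mixes two different norms, so I must check that the duality pairing goes in the right direction: the gradient difference is measured in $\|\cdot\|_{1,2}$ and the displacement in $\|\cdot\|_{\infty,2}$. It does, by the stated duality inequality.

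A further subtlety is that RN is undefined on zero rows. I would adopt the convention that a zero row of $V_t$ gives a zero row of $D_t$. Under that convention $\|D_t\|_{\infty,2}\le 1$, and the bound still holds.
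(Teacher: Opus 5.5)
Your proposal is correct and is essentially the paper's argument. The paper applies the fundamental theorem of calculus along the segment from $W_t$ to $W_t-\eta D_t$, bounds the integrand by $L_{\infty,2}s\eta\|D_t\|_{\infty,2}^2$ via the $\|\cdot\|_{1,2}$/$\|\cdot\|_{\infty,2}$ duality and Assumption~\ref{assump:lipschitz}(b), integrates to get $L_{\infty,2}\eta/2$, and uses $\|D_t\|_{\infty,2}=1$; you do the same but first state it as a general quadratic upper bound and then specialize.

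Two small points in your favour. You state the final inequality in the correct direction, whereas the paper's last combined display is written as a lower bound on $f(W_{t+1})-f(W_t)$ (a slip, though the needed upper bound follows from the same absolute-value estimate). Your zero-row convention giving $\|D_t\|_{\infty,2}\le 1$ is also a harmless precaution that the paper omits.
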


\begin{proof}
We apply Lemma~\ref{lem:smooth-property-app} with $W = W_t$ and $W' = W_{t+1} = W_t - \eta D_t$. However, we need to be careful as Lemma~\ref{lem:smooth-property-app} is stated in terms of Frobenius norm. For the $\|\cdot\|_{\infty,2}$ case, we use the fundamental theorem of calculus directly:
\begin{align*}
f(W_{t+1}) - f(W_t) &= \int_0^1 \langle \nabla f(W_t + s(W_{t+1} - W_t)), W_{t+1} - W_t \rangle ds\\
&= \int_0^1 \langle \nabla f(W_t - s\eta D_t), -\eta D_t \rangle ds\\
&= -\eta\langle \nabla f(W_t), D_t \rangle\\
&\quad - \eta\int_0^1 \langle \nabla f(W_t - s\eta D_t) - \nabla f(W_t), D_t \rangle ds.
\end{align*}

For the second integral, by the duality $|\langle A, B \rangle| \leq \|A\|_{1,2}\|B\|_{\infty,2}$ and Assumption~\ref{assump:lipschitz}(b),
\begin{align*}
&\left|\langle \nabla f(W_t - s\eta D_t) - \nabla f(W_t), D_t \rangle\right|\\
&\leq \|\nabla f(W_t - s\eta D_t) - \nabla f(W_t)\|_{1,2} \|D_t\|_{\infty,2}\\
&\leq L_{\infty,2}\|W_t - s\eta D_t - W_t\|_{\infty,2} \|D_t\|_{\infty,2}\\
&= L_{\infty,2} \cdot s\eta\|D_t\|_{\infty,2} \cdot \|D_t\|_{\infty,2}\\
&= L_{\infty,2} s\eta \|D_t\|_{\infty,2}^2\\
&= L_{\infty,2} s\eta \cdot 1,
\end{align*}
where we used Lemma~\ref{lem:inf2-property-app}(i).

Therefore,
\begin{align*}
\left|\int_0^1 \langle \nabla f(W_t - s\eta D_t) - \nabla f(W_t), D_t \rangle ds\right| &\leq \int_0^1 L_{\infty,2} s\eta \, ds\\
&= L_{\infty,2} \eta \int_0^1 s \, ds\\
&= L_{\infty,2} \eta \cdot \frac{1}{2}\\
&= \frac{L_{\infty,2}\eta}{2}.
\end{align*}

Combining these results:
$$f(W_{t+1}) - f(W_t) \geq -\eta\langle \nabla f(W_t), D_t \rangle - \frac{L_{\infty,2}\eta^2}{2},$$
which rearranges to the desired inequality.
\end{proof}

\begin{lemma}
\label{lem:inf2-inner-bound-app}
Let $E_t = V_t - \nabla f(W_t)$. Then
$$\langle \nabla f(W_t), D_t \rangle \geq \|\nabla f(W_t)\|_{1,2} - 2\|E_t\|_{1,2}.$$
\end{lemma}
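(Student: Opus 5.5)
The argument follows the proof of Lemma~\ref{lem:fro-inner-bound-app}, with the Frobenius pairing replaced by the $\|\cdot\|_{1,2}$--$\|\cdot\|_{\infty,2}$ duality and the properties of row normalization taken from Lemma~\ref{lem:inf2-property-app} instead of Lemma~\ref{lem:fro-rn-property-app}. Write $\nabla f(W_t) = V_t - E_t$ and split the inner product:
\[
\langle \nabla f(W_t), D_t\rangle = \langle V_t, D_t\rangle - \langle E_t, D_t\rangle .
\]

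\textbf{First term.} Lemma~\ref{lem:inf2-property-app}(ii) gives exactly $\langle V_t, D_t\rangle = \|V_t\|_{1,2}$. Here we have equality, whereas in the Frobenius case we only had a lower bound $\geq \|V_t\|_F$.

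\textbf{Second term.} By the duality $|\langle A,B\rangle|\le \|A\|_{1,2}\|B\|_{\infty,2}$ together with Lemma~\ref{lem:inf2-property-app}(i),
\[
|\langle E_t, D_t\rangle| \le \|E_t\|_{1,2}\,\|D_t\|_{\infty,2} = \|E_t\|_{1,2}.
\]
Unlike the Frobenius case, no $\sqrt{m}$ factor appears.

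\textbf{Passing to the gradient.} Since $\|\cdot\|_{1,2}$ is a norm (a sum of row $\ell_2$ norms), the reverse triangle inequality applied to $V_t = \nabla f(W_t) + E_t$ gives
\[
\|V_t\|_{1,2} \ge \|\nabla f(W_t)\|_{1,2} - \|E_t\|_{1,2}.
\]

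\textbf{Combining.}
\[
\langle \nabla f(W_t), D_t\rangle \ge \|V_t\|_{1,2} - \|E_t\|_{1,2} \ge \|\nabla f(W_t)\|_{1,2} - 2\|E_t\|_{1,2}.
\]

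No step is a genuine obstacle. The one technical point is that $\mathrm{RN}$ is undefined on rows where $V_{t,i,:}=0$. We adopt the convention $D_{t,i,:}=0$ on such rows. Under this convention part (ii) of Lemma~\ref{lem:inf2-property-app} still holds, and part (i) becomes $\|D_t\|_{\infty,2}\le 1$, which is all the second step needs. The duality inequality itself follows from Cauchy--Schwarz applied row by row:
\[
\sum_i \langle A_{i,:},B_{i,:}\rangle \le \sum_i \|A_{i,:}\|_2\|B_{i,:}\|_2 \le \|A\|_{1,2}\max_i\|B_{i,:}\|_2 .
\]
This can be cited as stated in the Notation section.
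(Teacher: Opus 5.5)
Your proof is correct and follows the paper's argument step for step. It uses the same split $\langle V_t,D_t\rangle-\langle E_t,D_t\rangle$, the identity $\langle V_t,D_t\rangle=\|V_t\|_{1,2}$, the $\|\cdot\|_{1,2}$--$\|\cdot\|_{\infty,2}$ duality with $\|D_t\|_{\infty,2}=1$, and the reverse triangle inequality. Your convention $D_{t,i,:}=0$ on zero rows of $V_t$, giving $\|D_t\|_{\infty,2}\le 1$, is a small gain in rigor that the paper leaves implicit.
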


\begin{proof}
We decompose the inner product by writing $\nabla f(W_t) = V_t - E_t$:
\begin{align*}
\langle \nabla f(W_t), D_t \rangle &= \langle V_t - E_t, D_t \rangle\\
&= \langle V_t, D_t \rangle - \langle E_t, D_t \rangle.
\end{align*}

By Lemma~\ref{lem:inf2-property-app}(ii),
$$\langle V_t, D_t \rangle = \|V_t\|_{1,2}.$$

For the error term, by the duality between $\|\cdot\|_{1,2}$ and $\|\cdot\|_{\infty,2}$,
\begin{align*}
|\langle E_t, D_t \rangle| &\leq \|E_t\|_{1,2} \|D_t\|_{\infty,2}\\
&= \|E_t\|_{1,2} \cdot 1,
\end{align*}
where we used Lemma~\ref{lem:inf2-property-app}(i).

Since $V_t = \nabla f(W_t) + E_t$, the triangle inequality gives
\begin{align*}
\|V_t\|_{1,2} &= \|\nabla f(W_t) + E_t\|_{1,2}\\
&\geq \|\nabla f(W_t)\|_{1,2} - \|E_t\|_{1,2}.
\end{align*}

Combining all inequalities:
\begin{align*}
\langle \nabla f(W_t), D_t \rangle &= \langle V_t, D_t \rangle - \langle E_t, D_t \rangle\\
&\geq \|V_t\|_{1,2} - |\langle E_t, D_t \rangle|\\
&\geq \|V_t\|_{1,2} - \|E_t\|_{1,2}\\
&\geq (\|\nabla f(W_t)\|_{1,2} - \|E_t\|_{1,2}) - \|E_t\|_{1,2}\\
&= \|\nabla f(W_t)\|_{1,2} - 2\|E_t\|_{1,2}.
\end{align*}
\end{proof}

\begin{lemma}
\label{lem:inf2-error-app}
Define the stochastic noise $\xi_t = G_t - \nabla f(W_t)$ for $t \geq 1$, which satisfies $\mathbb{E}_t[\xi_t \mid \mathcal{F}_{t-1}] = 0$ by Assumption~\ref{assump:unbiased}. Then
$$\sum_{t=1}^{T} \mathbb{E}[\|E_t\|_{1,2}] \leq (T-1)\frac{L_{\infty,2}\eta\beta}{1-\beta} + T\frac{\sqrt{m}\sigma}{\sqrt{B}}\sqrt{\frac{1-\beta}{1+\beta}}.$$
\end{lemma}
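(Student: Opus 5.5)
We follow the template of Lemma~\ref{lem:fro-error-app}, replacing the Frobenius norm by $\|\cdot\|_{1,2}$ wherever possible. The recursion does not depend on the norm. From $V_t=\beta V_{t-1}+(1-\beta)G_t$ we obtain $E_t=\beta E_{t-1}+\beta(\nabla f(W_{t-1})-\nabla f(W_t))+(1-\beta)\xi_t$. We unroll it exactly as in Lemma~\ref{lem:fro-error-app}, using the same indexing convention for the two sums. The triangle inequality for the norm $\|\cdot\|_{1,2}$ then splits $\|E_t\|_{1,2}$ into a drift term and a noise term.

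For the drift term we apply Assumption~\ref{assump:lipschitz}(b) directly. This gives
\[
\|\nabla f(W_j)-\nabla f(W_{j+1})\|_{1,2}\le L_{\infty,2}\|W_j-W_{j+1}\|_{\infty,2}=L_{\infty,2}\eta\|D_j\|_{\infty,2}=L_{\infty,2}\eta,
\]
where the last equality is Lemma~\ref{lem:inf2-property-app}(i). The factor $\sqrt{m}$ from the Frobenius case therefore disappears. Summing the geometric weights $\beta^{t-j}$ and exchanging the order of summation, word for word as before, gives the bound $(T-1)L_{\infty,2}\eta\beta/(1-\beta)$.

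The noise term is the main obstacle. We have no variance bound in $\|\cdot\|_{1,2}$, since Assumption~\ref{assump:variance} is stated in Frobenius norm, so we pass through $\|X\|_{1,2}\le\sqrt{m}\,\|X\|_F$. This is Cauchy--Schwarz over the $m$ row norms, because $\sum_i\|X_{i,:}\|_2\le\sqrt{m}\bigl(\sum_i\|X_{i,:}\|_2^2\bigr)^{1/2}$. This step is where the factor $\sqrt{m}$ in front of $\sigma$ comes from.

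With that reduction, we reuse the martingale second-moment computation of Lemma~\ref{lem:fro-error-app}. Cross terms $\mathbb{E}\langle\xi_j,\xi_k\rangle$ with $j\neq k$ vanish by conditioning on the later filtration and applying Assumption~\ref{assump:unbiased}, so no independence is needed. The variance bound gives $\frac{\sigma^2}{B}\frac{1-\beta}{1+\beta}$, and Jensen's inequality then yields a per-$t$ bound of $\frac{\sqrt{m}\sigma}{\sqrt{B}}\sqrt{\frac{1-\beta}{1+\beta}}$. Summing over $t=1,\dots,T$ and adding the drift bound gives the stated inequality.
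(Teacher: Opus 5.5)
Your argument is, step for step, the paper's proof of this lemma. It uses the same recursion, the same drift bound via Assumption~\ref{assump:lipschitz}(b) and $\|D_j\|_{\infty,2}=1$, the same reduction $\|\cdot\|_{1,2}\le\sqrt{m}\,\|\cdot\|_F$ for the noise, and the same second-moment computation. Your tower-property justification of the vanishing cross terms is cleaner than the paper's appeal to conditional independence.

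The trouble is the step you import wholesale, unrolling ``exactly as in Lemma~\ref{lem:fro-error-app}''. That expansion is wrong, in the paper as well. Since the algorithm sets $V_0=0$, one has $E_0=V_0-\nabla f(W_0)=-\nabla f(W_0)$, not $E_0=0$. The paper's own base case computes a nonzero $E_1$ and then says it matches the formula with both sums empty. Expanding $V_t=(1-\beta)\sum_{j=1}^{t}\beta^{t-j}G_j$ directly gives
\[
E_t=-\beta^{t}\nabla f(W_1)+\sum_{j=1}^{t}\beta^{t-j}(1-\beta)\xi_j+\sum_{j=1}^{t-1}\beta^{t-j}\bigl(\nabla f(W_j)-\nabla f(W_{j+1})\bigr).
\]
The noise sum now runs up to $j=t$. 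This is harmless, because your bound $(1-\beta)^2\sum_{k\ge 0}\beta^{2k}=\frac{1-\beta}{1+\beta}$ already contains the $k=0$ term. The drift sum is exactly the one you control. The initialization-bias term $-\beta^{t}\nabla f(W_1)$, however, is bounded by nothing on the right-hand side.

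In fact the lemma as stated is false for $\beta>0$, so no argument can close this gap. By Jensen,
\[
\sum_{t=1}^T\mathbb{E}\|E_t\|_{1,2}\ge\mathbb{E}\|E_1\|_{1,2}\ge\|\mathbb{E}[E_1]\|_{1,2}=\beta\|\nabla f(W_1)\|_{1,2}.
\]
With exact gradients, which satisfy Assumption~\ref{assump:variance} for every $\sigma>0$, the claimed right-hand side tends to $0$ as $\sigma,\eta\to 0$, while this lower bound does not move.

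What your argument actually proves, once the expansion is corrected, is the stated bound plus the extra term $\sum_{t=1}^{T}\beta^{t}\|\nabla f(W_1)\|_{1,2}\le\frac{\beta}{1-\beta}\|\nabla f(W_1)\|_{1,2}$. Carried into Theorem~\ref{thm:inf2-convergence}, this adds $\frac{2\beta}{T(1-\beta)}\|\nabla f(W_1)\|_{1,2}$ to the bound. Under the remark's choice $1-\beta\asymp T^{-1/2}$ that is $O(T^{-1/2})$, so the rate is unaffected at leading order. Still, the lemma needs this extra term, and so does Lemma~\ref{lem:fro-error-app}, which has the same defect.
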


\begin{proof}
As in the proof of Lemma~\ref{lem:fro-error-app}, we have the recursion
$$E_t = \beta E_{t-1} + \beta(\nabla f(W_{t-1}) - \nabla f(W_t)) + (1-\beta)\xi_t,$$
which expands to
$$E_t = \sum_{j=1}^{t-1} \beta^{t-j}(1-\beta)\xi_j + \sum_{j=1}^{t-1} \beta^{t-j}(\nabla f(W_j) - \nabla f(W_{j+1})).$$

By the triangle inequality,
$$\|E_t\|_{1,2} \leq \left\|\sum_{j=1}^{t-1} \beta^{t-j}(1-\beta)\xi_j\right\|_{1,2} + \left\|\sum_{j=1}^{t-1} \beta^{t-j}(\nabla f(W_j) - \nabla f(W_{j+1}))\right\|_{1,2}.$$

For the gradient difference term, by the triangle inequality and Assumption~\ref{assump:lipschitz}(b),
\begin{align*}
&\left\|\sum_{j=1}^{t-1} \beta^{t-j}(\nabla f(W_j) - \nabla f(W_{j+1}))\right\|_{1,2}\\
&\leq \sum_{j=1}^{t-1} \beta^{t-j}\|\nabla f(W_j) - \nabla f(W_{j+1})\|_{1,2}\\
&\leq \sum_{j=1}^{t-1} \beta^{t-j} L_{\infty,2}\|W_j - W_{j+1}\|_{\infty,2}\\
&= \sum_{j=1}^{t-1} \beta^{t-j} L_{\infty,2}\|W_j - (W_j - \eta D_j)\|_{\infty,2}\\
&= \sum_{j=1}^{t-1} \beta^{t-j} L_{\infty,2}\eta\|D_j\|_{\infty,2}\\
&= \sum_{j=1}^{t-1} \beta^{t-j} L_{\infty,2}\eta\\
&= L_{\infty,2}\eta \sum_{j=1}^{t-1} \beta^{t-j}\\
&= L_{\infty,2}\eta \sum_{k=1}^{t-1} \beta^k\\
&= L_{\infty,2}\eta \cdot \beta\frac{1-\beta^{t-1}}{1-\beta}\\
&\leq L_{\infty,2}\eta \cdot \frac{\beta}{1-\beta}.
\end{align*}

Summing over $t=1,\ldots,T$ and changing the order of summation:
\begin{align*}
&\sum_{t=1}^{T} \left\|\sum_{j=1}^{t-1} \beta^{t-j}(\nabla f(W_j) - \nabla f(W_{j+1}))\right\|_{1,2}\\
&\leq L_{\infty,2}\eta \sum_{t=1}^{T} \sum_{j=1}^{t-1} \beta^{t-j}\\
&= L_{\infty,2}\eta \sum_{j=1}^{T-1} \sum_{t=j+1}^{T} \beta^{t-j}\\
&= L_{\infty,2}\eta \sum_{j=1}^{T-1} \sum_{k=1}^{T-j} \beta^k\\
&= L_{\infty,2}\eta \sum_{j=1}^{T-1} \beta\frac{1-\beta^{T-j}}{1-\beta}\\
&\leq L_{\infty,2}\eta \sum_{j=1}^{T-1} \frac{\beta}{1-\beta}\\
&= (T-1)\frac{L_{\infty,2}\eta\beta}{1-\beta}.
\end{align*}

For the noise term, we use the fact that $\|\cdot\|_{1,2} \leq \sqrt{m}\|\cdot\|_F$ by Cauchy-Schwarz. Specifically, for any matrix $A$,
\begin{align*}
\|A\|_{1,2} &= \sum_{i=1}^m \|A_{i,:}\|_2\\
&\leq \sqrt{m} \sqrt{\sum_{i=1}^m \|A_{i,:}\|_2^2}\\
&= \sqrt{m} \|A\|_F.
\end{align*}

Therefore,
\begin{align*}
\left\|\sum_{j=1}^{t-1} \beta^{t-j}(1-\beta)\xi_j\right\|_{1,2} &\leq \sqrt{m} \left\|\sum_{j=1}^{t-1} \beta^{t-j}(1-\beta)\xi_j\right\|_F.
\end{align*}

By Cauchy-Schwarz inequality (for expectations) and Jensen's inequality,
\begin{align*}
&\mathbb{E}\left[\left\|\sum_{j=1}^{t-1} \beta^{t-j}(1-\beta)\xi_j\right\|_{1,2}\right]\\
&\leq \sqrt{m} \mathbb{E}\left[\left\|\sum_{j=1}^{t-1} \beta^{t-j}(1-\beta)\xi_j\right\|_F\right]\\
&\leq \sqrt{m} \sqrt{\mathbb{E}\left[\left\|\sum_{j=1}^{t-1} \beta^{t-j}(1-\beta)\xi_j\right\|_F^2\right]}.
\end{align*}

From the proof of Lemma~\ref{lem:fro-error-app}, we know that
$$\mathbb{E}\left[\left\|\sum_{j=1}^{t-1} \beta^{t-j}(1-\beta)\xi_j\right\|_F^2\right] \leq \frac{\sigma^2}{B} \cdot \frac{1-\beta}{1+\beta}.$$

Therefore,
$$\mathbb{E}\left[\left\|\sum_{j=1}^{t-1} \beta^{t-j}(1-\beta)\xi_j\right\|_{1,2}\right] \leq \sqrt{m} \cdot \frac{\sigma}{\sqrt{B}}\sqrt{\frac{1-\beta}{1+\beta}}.$$

Summing over $t=1,\ldots,T$:
$$\sum_{t=1}^{T} \mathbb{E}\left[\left\|\sum_{j=1}^{t-1} \beta^{t-j}(1-\beta)\xi_j\right\|_{1,2}\right] \leq T\frac{\sqrt{m}\sigma}{\sqrt{B}}\sqrt{\frac{1-\beta}{1+\beta}}.$$

Combining both bounds:
\begin{align*}
\sum_{t=1}^{T} \mathbb{E}[\|E_t\|_{1,2}] &\leq (T-1)\frac{L_{\infty,2}\eta\beta}{1-\beta} + T\frac{\sqrt{m}\sigma}{\sqrt{B}}\sqrt{\frac{1-\beta}{1+\beta}}.
\end{align*}
\end{proof}

\begin{lemma}
\label{lem:12-error-fro-app}
Under Assumption~\ref{assump:lipschitz}(a) (Frobenius smoothness), and under the same noise conditions as Lemma~\ref{lem:fro-error-app}, we have
$$\sum_{t=1}^{T} \mathbb{E}[\|E_t\|_{1,2}] \leq (T-1)\frac{L_F\eta m\beta}{1-\beta} + T\frac{\sqrt{m}\sigma}{\sqrt{B}}\sqrt{\frac{1-\beta}{1+\beta}}.$$
\end{lemma}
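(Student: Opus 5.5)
We reuse the error decomposition from the proof of Lemma~\ref{lem:fro-error-app}. There, the recursion $E_t = \beta E_{t-1} + \beta(\nabla f(W_{t-1}) - \nabla f(W_t)) + (1-\beta)\xi_t$ was unrolled into
$$E_t = \sum_{j=1}^{t-1} \beta^{t-j}(1-\beta)\xi_j + \sum_{j=1}^{t-1} \beta^{t-j}(\nabla f(W_j) - \nabla f(W_{j+1})).$$
We then apply the triangle inequality for $\|\cdot\|_{1,2}$ to split $\|E_t\|_{1,2}$ into a noise part and a drift part, and bound each part separately. The only tool not already used in the Frobenius version is the norm comparison $\|A\|_{1,2} \le \sqrt{m}\,\|A\|_F$. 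This follows from Cauchy--Schwarz over the $m$ rows and was already shown in the proof of Lemma~\ref{lem:inf2-error-app}.

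\emph{Noise part.} We argue exactly as in Lemma~\ref{lem:inf2-error-app}: first pass to the Frobenius norm, which costs a factor $\sqrt{m}$, then apply Jensen's inequality. The second-moment computation from Lemma~\ref{lem:fro-error-app} shows that the cross terms vanish because the $\xi_j$ form a martingale difference sequence. This gives
$$\mathbb{E}\Bigl\|\sum_{j=1}^{t-1} \beta^{t-j}(1-\beta)\xi_j\Bigr\|_{1,2} \le \sqrt{m}\,\frac{\sigma}{\sqrt B}\sqrt{\frac{1-\beta}{1+\beta}}.$$
Summing over $t=1,\dots,T$ yields the second term of the claimed bound.

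\emph{Drift part.} Here only Frobenius smoothness is available, so we chain three estimates for each $j$. First, $\|\nabla f(W_j)-\nabla f(W_{j+1})\|_{1,2} \le \sqrt{m}\,\|\nabla f(W_j)-\nabla f(W_{j+1})\|_F$. Second, Assumption~\ref{assump:lipschitz}(a) bounds the Frobenius difference by $\sqrt{m}\,L_F\|W_j-W_{j+1}\|_F$. Third, $\|W_j-W_{j+1}\|_F = \eta\|D_j\|_F = \eta\sqrt{m}$ by Lemma~\ref{lem:fro-rn-property-app}(i). Together these give
$$\|\nabla f(W_j)-\nabla f(W_{j+1})\|_{1,2} \le \sqrt{m}\,L_F\,\eta\sqrt{m} = L_F\eta m.$$
We then carry out the same double-sum exchange as before: $\sum_{t=1}^T\sum_{j=1}^{t-1}\beta^{t-j} \le (T-1)\beta/(1-\beta)$. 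This produces the first term $(T-1)L_F\eta m\beta/(1-\beta)$.

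None of these steps is a real obstacle, since everything is inherited from Lemmas~\ref{lem:fro-error-app} and~\ref{lem:inf2-error-app}. The only point needing care is the drift term. The factor $m$ there is produced as $\sqrt{m}\cdot\sqrt{m}$: one $\sqrt{m}$ comes from the norm comparison and the other from $\|D_j\|_F=\sqrt{m}$. The proof should state this cleanly. It should also note, as in the earlier lemmas, that $E_0=0$ because $V_0=0$.
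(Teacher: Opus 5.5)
Your argument follows the paper's proof essentially step for step: the same unrolled formula, $\|\cdot\|_{1,2}\le\sqrt m\|\cdot\|_F$ on both parts, and the $\sqrt m\cdot\sqrt m$ in the drift. The gap is the step you treat as a formality, ``$E_0=0$ because $V_0=0$''. It is false, and the unrolled formula built on it is wrong. Since $E_t=V_t-\nabla f(W_t)$, one has $E_0=-\nabla f(W_0)$, not $0$. The error already shows at $t=1$. There your formula has both sums empty and predicts $E_1=0$, whereas directly $E_1=V_1-\nabla f(W_1)=(1-\beta)G_1-\nabla f(W_1)=(1-\beta)\xi_1-\beta\nabla f(W_1)$. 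Unrolling the recursion (valid for $t\ge 2$) from this base case gives
\[
E_t=-\beta^{t}\nabla f(W_1)+\sum_{j=1}^{t}\beta^{t-j}(1-\beta)\xi_j+\sum_{j=1}^{t-1}\beta^{t-j}\bigl(\nabla f(W_j)-\nabla f(W_{j+1})\bigr).
\]
The extra noise term $j=t$ is harmless: the second-moment bound $\frac{\sigma^2}{B}(1-\beta)^2\sum_{k\ge 0}\beta^{2k}=\frac{\sigma^2}{B}\frac{1-\beta}{1+\beta}$ already covers it. Your drift estimate also goes through verbatim. The initialization-bias term $-\beta^t\nabla f(W_1)$, however, is bounded by nothing on the right-hand side.

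This cannot be repaired within the stated bound, because the statement itself is false for every $\beta\in(0,1)$. Take $f(W)=\frac{L_F}{2}\|W\|_F^2$ with exact gradients, so Assumption~\ref{assump:variance} holds for any $\sigma>0$. Then $\sum_{t}\mathbb{E}\|E_t\|_{1,2}\ge\|E_1\|_{1,2}=\beta L_F\|W_1\|_{1,2}$, which can be made arbitrarily large by the choice of initialization, while the right-hand side does not depend on it. The paper's proof contains exactly the same slip: its ``base case'' check at $t=1$ claims the formula matches with both sums empty, which is incorrect. So the defect lies in the lemma, not in your reading of it. After the correction, and using $\sum_{t=1}^T\beta^t\le\frac{\beta}{1-\beta}$, your argument does prove
\[
\sum_{t=1}^{T}\mathbb{E}\|E_t\|_{1,2}\le\frac{\beta}{1-\beta}\|\nabla f(W_1)\|_{1,2}+(T-1)\frac{L_F\eta m\beta}{1-\beta}+T\frac{\sqrt m\,\sigma}{\sqrt B}\sqrt{\frac{1-\beta}{1+\beta}}.
\]
This adds $\frac{2\beta}{(1-\beta)T}\|\nabla f(W_1)\|_{1,2}$ to the bound of Theorem~\ref{thm:12-fro-convergence}. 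For the paper's choice $1-\beta\propto T^{-1/2}$ this extra term is $O(T^{-1/2})$, so it does not affect the $\epsilon^{-4}$ complexity.
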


\begin{proof}
As in the proof of Lemma~\ref{lem:fro-error-app}, we have the recursion
$$E_t = \beta E_{t-1} + \beta(\nabla f(W_{t-1}) - \nabla f(W_t)) + (1-\beta)\xi_t,$$
which expands to
$$E_t = \sum_{j=1}^{t-1} \beta^{t-j}(1-\beta)\xi_j + \sum_{j=1}^{t-1} \beta^{t-j}(\nabla f(W_j) - \nabla f(W_{j+1})).$$

By the triangle inequality,
$$\|E_t\|_{1,2} \leq \left\|\sum_{j=1}^{t-1} \beta^{t-j}(1-\beta)\xi_j\right\|_{1,2} + \left\|\sum_{j=1}^{t-1} \beta^{t-j}(\nabla f(W_j) - \nabla f(W_{j+1}))\right\|_{1,2}.$$

For the gradient difference term, we first use $\|\cdot\|_{1,2} \leq \sqrt{m}\|\cdot\|_F$, then the triangle inequality and Assumption~\ref{assump:lipschitz}(a):
\begin{align*}
&\left\|\sum_{j=1}^{t-1} \beta^{t-j}(\nabla f(W_j) - \nabla f(W_{j+1}))\right\|_{1,2}\\
&\leq \sqrt{m}\left\|\sum_{j=1}^{t-1} \beta^{t-j}(\nabla f(W_j) - \nabla f(W_{j+1}))\right\|_F\\
&\leq \sqrt{m}\sum_{j=1}^{t-1} \beta^{t-j}\|\nabla f(W_j) - \nabla f(W_{j+1})\|_F\\
&\leq \sqrt{m}\sum_{j=1}^{t-1} \beta^{t-j} L_F\|W_j - W_{j+1}\|_F\\
&= \sqrt{m}\sum_{j=1}^{t-1} \beta^{t-j} L_F\eta\|D_j\|_F\\
&= \sqrt{m}\sum_{j=1}^{t-1} \beta^{t-j} L_F\eta\sqrt{m}\\
&= L_F\eta m \sum_{j=1}^{t-1} \beta^{t-j}\\
&= L_F\eta m \sum_{k=1}^{t-1} \beta^k\\
&= L_F\eta m \cdot \beta\frac{1-\beta^{t-1}}{1-\beta}\\
&\leq L_F\eta m \cdot \frac{\beta}{1-\beta}.
\end{align*}

Summing over $t=1,\ldots,T$ and changing the order of summation:
\begin{align*}
&\sum_{t=1}^{T} \left\|\sum_{j=1}^{t-1} \beta^{t-j}(\nabla f(W_j) - \nabla f(W_{j+1}))\right\|_{1,2}\\
&\leq L_F\eta m \sum_{t=1}^{T} \sum_{j=1}^{t-1} \beta^{t-j}\\
&= L_F\eta m \sum_{j=1}^{T-1} \sum_{t=j+1}^{T} \beta^{t-j}\\
&= L_F\eta m \sum_{j=1}^{T-1} \sum_{k=1}^{T-j} \beta^k\\
&= L_F\eta m \sum_{j=1}^{T-1} \beta\frac{1-\beta^{T-j}}{1-\beta}\\
&\leq L_F\eta m \sum_{j=1}^{T-1} \frac{\beta}{1-\beta}\\
&= (T-1)\frac{L_F\eta m\beta}{1-\beta}.
\end{align*}

For the noise term, the analysis is identical to Lemma~\ref{lem:inf2-error-app}. We use $\|\cdot\|_{1,2} \leq \sqrt{m}\|\cdot\|_F$ and the result from Lemma~\ref{lem:fro-error-app}:
$$\mathbb{E}\left[\left\|\sum_{j=1}^{t-1} \beta^{t-j}(1-\beta)\xi_j\right\|_{1,2}\right] \leq \sqrt{m} \cdot \frac{\sigma}{\sqrt{B}}\sqrt{\frac{1-\beta}{1+\beta}}.$$

Summing over $t=1,\ldots,T$:
$$\sum_{t=1}^{T} \mathbb{E}\left[\left\|\sum_{j=1}^{t-1} \beta^{t-j}(1-\beta)\xi_j\right\|_{1,2}\right] \leq T\frac{\sqrt{m}\sigma}{\sqrt{B}}\sqrt{\frac{1-\beta}{1+\beta}}.$$

Combining both bounds:
\begin{align*}
\sum_{t=1}^{T} \mathbb{E}[\|E_t\|_{1,2}] &\leq (T-1)\frac{L_F\eta m\beta}{1-\beta} + T\frac{\sqrt{m}\sigma}{\sqrt{B}}\sqrt{\frac{1-\beta}{1+\beta}}.
\end{align*}
\end{proof}

\subsection{Proof of Theorem}
\label{sec_proofthem}
\begin{theorem-restated}
\label{thm:fro-app}
~\ref{thm:fro-convergence}
Under Assumptions~\ref{assump:lipschitz}(a), \ref{assump:unbiased}, \ref{assump:variance}, and \ref{assump:lower}, if Algorithm~\ref{algoRMNP} uses constant step size $\eta_t = \eta$ for all $t$ and momentum parameter $\beta \in [0,1)$, then
\begin{align*}
&\frac{1}{T}\sum_{t=1}^{T}\mathbb{E}\left[\|\nabla f(W_t)\|_F\right]\\
&\leq \frac{\Delta}{T\eta} + (\sqrt{m}+1)\Bigg[\left(1-\frac{1}{T}\right)\frac{L_F\eta\sqrt{m}\beta}{1-\beta}\\
&\qquad + \frac{\sigma}{\sqrt{B}}\sqrt{\frac{1-\beta}{1+\beta}}\Bigg] + \frac{L_F\eta m}{2}.
\end{align*}
\end{theorem-restated}

\begin{proof}
We sum the descent inequality from Lemma~\ref{lem:fro-descent-app} over all iterations $t = 1, \ldots, T$:
\begin{align*}
\sum_{t=1}^{T} [f(W_t) - f(W_{t+1})] &\geq \sum_{t=1}^{T} \left[\eta\langle \nabla f(W_t), D_t \rangle - \frac{L_F\eta^2 m}{2}\right]\\
&= \eta\sum_{t=1}^{T}\langle \nabla f(W_t), D_t \rangle - \sum_{t=1}^{T}\frac{L_F\eta^2 m}{2}\\
&= \eta\sum_{t=1}^{T}\langle \nabla f(W_t), D_t \rangle - \frac{TL_F\eta^2 m}{2}.
\end{align*}

The left-hand side is a telescoping sum:
\begin{align*}
\sum_{t=1}^{T} [f(W_t) - f(W_{t+1})] &= [f(W_1) - f(W_2)] + [f(W_2) - f(W_3)] + \cdots + [f(W_T) - f(W_{T+1})]\\
&= f(W_1) - f(W_{T+1}).
\end{align*}

However, we need to account for the initial iteration. From the algorithm, $W_1$ is obtained from $W_0$ via $W_1 = W_0 - \eta D_0$. Including this, the telescoping sum gives:
$$\sum_{t=0}^{T-1} [f(W_t) - f(W_{t+1})] = f(W_0) - f(W_T).$$

For consistency with our indexing where we sum from $t=1$ to $T$, we have:
$$\sum_{t=1}^{T} [f(W_t) - f(W_{t+1})] = f(W_1) - f(W_{T+1}).$$

To include the initial step, we note that
$$f(W_0) - f(W_1) \geq \eta\langle \nabla f(W_0), D_0 \rangle - \frac{L_F\eta^2 m}{2}.$$

For simplicity, we proceed with the standard formulation where we analyze iterations $t=1,\ldots,T$ starting from $W_0$:
$$f(W_0) - f(W_T) \geq \eta\sum_{t=1}^{T}\langle \nabla f(W_t), D_t \rangle - \frac{TL_F\eta^2 m}{2}.$$

We now apply Lemma~\ref{lem:fro-inner-bound-app} to bound the inner product from below. For each $t$, we have:
$$\langle \nabla f(W_t), D_t \rangle \geq \|\nabla f(W_t)\|_F - (\sqrt{m}+1)\|E_t\|_F.$$

Summing over $t = 1, \ldots, T$:
\begin{align*}
\sum_{t=1}^{T}\langle \nabla f(W_t), D_t \rangle &\geq \sum_{t=1}^{T}\left[\|\nabla f(W_t)\|_F - (\sqrt{m}+1)\|E_t\|_F\right]\\
&= \sum_{t=1}^{T}\|\nabla f(W_t)\|_F - (\sqrt{m}+1)\sum_{t=1}^{T}\|E_t\|_F.
\end{align*}

Substituting this into our previous inequality:
\begin{align*}
f(W_0) - f(W_T) &\geq \eta\left[\sum_{t=1}^{T}\|\nabla f(W_t)\|_F - (\sqrt{m}+1)\sum_{t=1}^{T}\|E_t\|_F\right] - \frac{TL_F\eta^2 m}{2}\\
&= \eta\sum_{t=1}^{T}\|\nabla f(W_t)\|_F - \eta(\sqrt{m}+1)\sum_{t=1}^{T}\|E_t\|_F - \frac{TL_F\eta^2 m}{2}.
\end{align*}

Rearranging to isolate the gradient norm sum:
\begin{align*}
\eta\sum_{t=1}^{T}\|\nabla f(W_t)\|_F &\leq f(W_0) - f(W_T) + \eta(\sqrt{m}+1)\sum_{t=1}^{T}\|E_t\|_F + \frac{TL_F\eta^2 m}{2}.
\end{align*}

Since $f(W_T) \geq f^* = \inf_W f(W)$ by definition, we have $f(W_0) - f(W_T) \leq f(W_0) - f^* = \Delta$. Thus:
\begin{align*}
\eta\sum_{t=1}^{T}\|\nabla f(W_t)\|_F &\leq \Delta + \eta(\sqrt{m}+1)\sum_{t=1}^{T}\|E_t\|_F + \frac{TL_F\eta^2 m}{2}.
\end{align*}

Dividing both sides by $\eta$:
\begin{align*}
\sum_{t=1}^{T}\|\nabla f(W_t)\|_F &\leq \frac{\Delta}{\eta} + (\sqrt{m}+1)\sum_{t=1}^{T}\|E_t\|_F + \frac{TL_F\eta m}{2}.
\end{align*}

Taking the expectation of both sides:
\begin{align*}
\sum_{t=1}^{T}\mathbb{E}[\|\nabla f(W_t)\|_F] &\leq \frac{\Delta}{\eta} + (\sqrt{m}+1)\sum_{t=1}^{T}\mathbb{E}[\|E_t\|_F] + \frac{TL_F\eta m}{2}.
\end{align*}

We now apply Lemma~\ref{lem:fro-error-app} to bound the error accumulation:
$$\sum_{t=1}^{T} \mathbb{E}[\|E_t\|_F] \leq (T-1)\frac{L_F\eta\sqrt{m}\beta}{1-\beta} + T\frac{\sigma}{\sqrt{B}}\sqrt{\frac{1-\beta}{1+\beta}}.$$

Substituting this bound:
\begin{align*}
\sum_{t=1}^{T}\mathbb{E}[\|\nabla f(W_t)\|_F] &\leq \frac{\Delta}{\eta} + (\sqrt{m}+1)\left[(T-1)\frac{L_F\eta\sqrt{m}\beta}{1-\beta} + T\frac{\sigma}{\sqrt{B}}\sqrt{\frac{1-\beta}{1+\beta}}\right]\\
&\quad + \frac{TL_F\eta m}{2}.
\end{align*}

Dividing both sides by $T$:
\begin{align*}
\frac{1}{T}\sum_{t=1}^{T}\mathbb{E}[\|\nabla f(W_t)\|_F] &\leq \frac{\Delta}{T\eta} + (\sqrt{m}+1)\left[\frac{T-1}{T}\cdot\frac{L_F\eta\sqrt{m}\beta}{1-\beta}\right.\\
&\quad\left. + \frac{\sigma}{\sqrt{B}}\sqrt{\frac{1-\beta}{1+\beta}}\right] + \frac{L_F\eta m}{2}.
\end{align*}

Since $\frac{T-1}{T} = 1 - \frac{1}{T}$, we obtain:
\begin{align*}
\frac{1}{T}\sum_{t=1}^{T}\mathbb{E}[\|\nabla f(W_t)\|_F] &\leq \frac{\Delta}{T\eta} + (\sqrt{m}+1)\Bigg[\left(1-\frac{1}{T}\right)\frac{L_F\eta\sqrt{m}\beta}{1-\beta}\\
&\qquad + \frac{\sigma}{\sqrt{B}}\sqrt{\frac{1-\beta}{1+\beta}}\Bigg] + \frac{L_F\eta m}{2}.
\end{align*}

This completes the proof.
\end{proof}

\begin{theorem-restated}
\label{thm:inf2-app}
~\ref{thm:inf2-convergence}
Under Assumptions~\ref{assump:lipschitz}(b), \ref{assump:unbiased}, \ref{assump:variance}, and \ref{assump:lower}, if Algorithm~\ref{algoRMNP} uses constant step size $\eta_t = \eta$ for all $t$ and momentum parameter $\beta \in [0,1)$, then
\begin{align*}
&\frac{1}{T}\sum_{t=1}^{T}\mathbb{E}\left[\|\nabla f(W_t)\|_{1,2}\right]\\
&\leq \frac{\Delta}{T\eta} + 2\Bigg[\left(1-\frac{1}{T}\right)\frac{L_{\infty,2}\eta\beta}{1-\beta}\\
&\qquad + \frac{\sqrt{m}\sigma}{\sqrt{B}}\sqrt{\frac{1-\beta}{1+\beta}}\Bigg] + \frac{L_{\infty,2}\eta}{2}.
\end{align*}
\end{theorem-restated}

\begin{proof}
We sum the descent inequality from Lemma~\ref{lem:inf2-descent-app} over all iterations $t = 1, \ldots, T$:
\begin{align*}
\sum_{t=1}^{T} [f(W_t) - f(W_{t+1})] &\geq \sum_{t=1}^{T} \left[\eta\langle \nabla f(W_t), D_t \rangle - \frac{L_{\infty,2}\eta^2}{2}\right]\\
&= \eta\sum_{t=1}^{T}\langle \nabla f(W_t), D_t \rangle - \sum_{t=1}^{T}\frac{L_{\infty,2}\eta^2}{2}\\
&= \eta\sum_{t=1}^{T}\langle \nabla f(W_t), D_t \rangle - \frac{TL_{\infty,2}\eta^2}{2}.
\end{align*}

The left-hand side is the telescoping sum $f(W_0) - f(W_T)$ (using the same argument as in the proof of Theorem~\ref{thm:fro-app}). Thus:
$$f(W_0) - f(W_T) \geq \eta\sum_{t=1}^{T}\langle \nabla f(W_t), D_t \rangle - \frac{TL_{\infty,2}\eta^2}{2}.$$

We now apply Lemma~\ref{lem:inf2-inner-bound-app} to bound the inner product from below. For each $t$, we have:
$$\langle \nabla f(W_t), D_t \rangle \geq \|\nabla f(W_t)\|_{1,2} - 2\|E_t\|_{1,2}.$$

Summing over $t = 1, \ldots, T$:
\begin{align*}
\sum_{t=1}^{T}\langle \nabla f(W_t), D_t \rangle &\geq \sum_{t=1}^{T}\left[\|\nabla f(W_t)\|_{1,2} - 2\|E_t\|_{1,2}\right]\\
&= \sum_{t=1}^{T}\|\nabla f(W_t)\|_{1,2} - 2\sum_{t=1}^{T}\|E_t\|_{1,2}.
\end{align*}

Substituting this into our previous inequality:
\begin{align*}
f(W_0) - f(W_T) &\geq \eta\left[\sum_{t=1}^{T}\|\nabla f(W_t)\|_{1,2} - 2\sum_{t=1}^{T}\|E_t\|_{1,2}\right] - \frac{TL_{\infty,2}\eta^2}{2}\\
&= \eta\sum_{t=1}^{T}\|\nabla f(W_t)\|_{1,2} - 2\eta\sum_{t=1}^{T}\|E_t\|_{1,2} - \frac{TL_{\infty,2}\eta^2}{2}.
\end{align*}

Rearranging to isolate the gradient norm sum:
\begin{align*}
\eta\sum_{t=1}^{T}\|\nabla f(W_t)\|_{1,2} &\leq f(W_0) - f(W_T) + 2\eta\sum_{t=1}^{T}\|E_t\|_{1,2} + \frac{TL_{\infty,2}\eta^2}{2}.
\end{align*}

Since $f(W_T) \geq f^*$, we have $f(W_0) - f(W_T) \leq \Delta$. Thus:
\begin{align*}
\eta\sum_{t=1}^{T}\|\nabla f(W_t)\|_{1,2} &\leq \Delta + 2\eta\sum_{t=1}^{T}\|E_t\|_{1,2} + \frac{TL_{\infty,2}\eta^2}{2}.
\end{align*}

Dividing both sides by $\eta$:
\begin{align*}
\sum_{t=1}^{T}\|\nabla f(W_t)\|_{1,2} &\leq \frac{\Delta}{\eta} + 2\sum_{t=1}^{T}\|E_t\|_{1,2} + \frac{TL_{\infty,2}\eta}{2}.
\end{align*}

Taking the expectation of both sides:
\begin{align*}
\sum_{t=1}^{T}\mathbb{E}[\|\nabla f(W_t)\|_{1,2}] &\leq \frac{\Delta}{\eta} + 2\sum_{t=1}^{T}\mathbb{E}[\|E_t\|_{1,2}] + \frac{TL_{\infty,2}\eta}{2}.
\end{align*}

We now apply Lemma~\ref{lem:inf2-error-app} to bound the error accumulation:
$$\sum_{t=1}^{T} \mathbb{E}[\|E_t\|_{1,2}] \leq (T-1)\frac{L_{\infty,2}\eta\beta}{1-\beta} + T\frac{\sqrt{m}\sigma}{\sqrt{B}}\sqrt{\frac{1-\beta}{1+\beta}}.$$

Substituting this bound:
\begin{align*}
\sum_{t=1}^{T}\mathbb{E}[\|\nabla f(W_t)\|_{1,2}] &\leq \frac{\Delta}{\eta} + 2\left[(T-1)\frac{L_{\infty,2}\eta\beta}{1-\beta} + T\frac{\sqrt{m}\sigma}{\sqrt{B}}\sqrt{\frac{1-\beta}{1+\beta}}\right]\\
&\quad + \frac{TL_{\infty,2}\eta}{2}.
\end{align*}

Dividing both sides by $T$:
\begin{align*}
\frac{1}{T}\sum_{t=1}^{T}\mathbb{E}[\|\nabla f(W_t)\|_{1,2}] &\leq \frac{\Delta}{T\eta} + 2\left[\frac{T-1}{T}\cdot\frac{L_{\infty,2}\eta\beta}{1-\beta}\right.\\
&\quad\left. + \frac{\sqrt{m}\sigma}{\sqrt{B}}\sqrt{\frac{1-\beta}{1+\beta}}\right] + \frac{L_{\infty,2}\eta}{2}.
\end{align*}

Since $\frac{T-1}{T} = 1 - \frac{1}{T}$, we obtain:
\begin{align*}
\frac{1}{T}\sum_{t=1}^{T}\mathbb{E}[\|\nabla f(W_t)\|_{1,2}] &\leq \frac{\Delta}{T\eta} + 2\Bigg[\left(1-\frac{1}{T}\right)\frac{L_{\infty,2}\eta\beta}{1-\beta}\\
&\qquad + \frac{\sqrt{m}\sigma}{\sqrt{B}}\sqrt{\frac{1-\beta}{1+\beta}}\Bigg] + \frac{L_{\infty,2}\eta}{2}.
\end{align*}

This completes the proof.
\end{proof}
\begin{theorem-restated}
\label{thm:12-fro-app}
~\ref{thm:12-fro-convergence}
Under Assumptions~\ref{assump:lipschitz}(a), \ref{assump:unbiased}, \ref{assump:variance}, and \ref{assump:lower}, if Algorithm~\ref{algoRMNP} uses constant step size $\eta_t = \eta$ for all $t$ and momentum parameter $\beta \in [0,1)$, then
\begin{align*}
&\frac{1}{T}\sum_{t=1}^{T}\mathbb{E}\left[\|\nabla f(W_t)\|_{1,2}\right]\\
&\leq \frac{\Delta}{T\eta} + 2\Bigg[\left(1-\frac{1}{T}\right)\frac{L_F\eta m\beta}{1-\beta}\\
&\qquad + \frac{\sqrt{m}\sigma}{\sqrt{B}}\sqrt{\frac{1-\beta}{1+\beta}}\Bigg] + \frac{L_F\eta m}{2}.
\end{align*}
\end{theorem-restated}

\begin{proof}
We sum the descent inequality from Lemma~\ref{lem:fro-descent-app} over all iterations $t = 1, \ldots, T$:
\begin{align*}
\sum_{t=1}^{T} [f(W_t) - f(W_{t+1})] &\geq \sum_{t=1}^{T} \left[\eta\langle \nabla f(W_t), D_t \rangle - \frac{L_F\eta^2 m}{2}\right]\\
&= \eta\sum_{t=1}^{T}\langle \nabla f(W_t), D_t \rangle - \sum_{t=1}^{T}\frac{L_F\eta^2 m}{2}\\
&= \eta\sum_{t=1}^{T}\langle \nabla f(W_t), D_t \rangle - \frac{TL_F\eta^2 m}{2}.
\end{align*}

The left-hand side is the telescoping sum $f(W_0) - f(W_T)$ (using the same argument as in the proof of Theorem~\ref{thm:fro-app}). Thus:
$$f(W_0) - f(W_T) \geq \eta\sum_{t=1}^{T}\langle \nabla f(W_t), D_t \rangle - \frac{TL_F\eta^2 m}{2}.$$

We now apply Lemma~\ref{lem:inf2-inner-bound-app} to bound the inner product from below. For each $t$, we have:
$$\langle \nabla f(W_t), D_t \rangle \geq \|\nabla f(W_t)\|_{1,2} - 2\|E_t\|_{1,2}.$$

Summing over $t = 1, \ldots, T$:
\begin{align*}
\sum_{t=1}^{T}\langle \nabla f(W_t), D_t \rangle &\geq \sum_{t=1}^{T}\left[\|\nabla f(W_t)\|_{1,2} - 2\|E_t\|_{1,2}\right]\\
&= \sum_{t=1}^{T}\|\nabla f(W_t)\|_{1,2} - 2\sum_{t=1}^{T}\|E_t\|_{1,2}.
\end{align*}

Substituting this into our previous inequality:
\begin{align*}
f(W_0) - f(W_T) &\geq \eta\left[\sum_{t=1}^{T}\|\nabla f(W_t)\|_{1,2} - 2\sum_{t=1}^{T}\|E_t\|_{1,2}\right] - \frac{TL_F\eta^2 m}{2}\\
&= \eta\sum_{t=1}^{T}\|\nabla f(W_t)\|_{1,2} - 2\eta\sum_{t=1}^{T}\|E_t\|_{1,2} - \frac{TL_F\eta^2 m}{2}.
\end{align*}

Rearranging to isolate the gradient norm sum:
\begin{align*}
\eta\sum_{t=1}^{T}\|\nabla f(W_t)\|_{1,2} &\leq f(W_0) - f(W_T) + 2\eta\sum_{t=1}^{T}\|E_t\|_{1,2} + \frac{TL_F\eta^2 m}{2}.
\end{align*}

Since $f(W_T) \geq f^*$, we have $f(W_0) - f(W_T) \leq \Delta$. Thus:
\begin{align*}
\eta\sum_{t=1}^{T}\|\nabla f(W_t)\|_{1,2} &\leq \Delta + 2\eta\sum_{t=1}^{T}\|E_t\|_{1,2} + \frac{TL_F\eta^2 m}{2}.
\end{align*}

Dividing both sides by $\eta$:
\begin{align*}
\sum_{t=1}^{T}\|\nabla f(W_t)\|_{1,2} &\leq \frac{\Delta}{\eta} + 2\sum_{t=1}^{T}\|E_t\|_{1,2} + \frac{TL_F\eta m}{2}.
\end{align*}

Taking the expectation of both sides:
\begin{align*}
\sum_{t=1}^{T}\mathbb{E}[\|\nabla f(W_t)\|_{1,2}] &\leq \frac{\Delta}{\eta} + 2\sum_{t=1}^{T}\mathbb{E}[\|E_t\|_{1,2}] + \frac{TL_F\eta m}{2}.
\end{align*}

We now apply Lemma~\ref{lem:12-error-fro-app} to bound the error accumulation:
$$\sum_{t=1}^{T} \mathbb{E}[\|E_t\|_{1,2}] \leq (T-1)\frac{L_F\eta m\beta}{1-\beta} + T\frac{\sqrt{m}\sigma}{\sqrt{B}}\sqrt{\frac{1-\beta}{1+\beta}}.$$

Substituting this bound:
\begin{align*}
\sum_{t=1}^{T}\mathbb{E}[\|\nabla f(W_t)\|_{1,2}] &\leq \frac{\Delta}{\eta} + 2\left[(T-1)\frac{L_F\eta m\beta}{1-\beta} + T\frac{\sqrt{m}\sigma}{\sqrt{B}}\sqrt{\frac{1-\beta}{1+\beta}}\right]\\
&\quad + \frac{TL_F\eta m}{2}.
\end{align*}

Dividing both sides by $T$:
\begin{align*}
\frac{1}{T}\sum_{t=1}^{T}\mathbb{E}[\|\nabla f(W_t)\|_{1,2}] &\leq \frac{\Delta}{T\eta} + 2\left[\frac{T-1}{T}\cdot\frac{L_F\eta m\beta}{1-\beta}\right.\\
&\quad\left. + \frac{\sqrt{m}\sigma}{\sqrt{B}}\sqrt{\frac{1-\beta}{1+\beta}}\right] + \frac{L_F\eta m}{2}.
\end{align*}

Since $\frac{T-1}{T} = 1 - \frac{1}{T}$, we obtain:
\begin{align*}
\frac{1}{T}\sum_{t=1}^{T}\mathbb{E}[\|\nabla f(W_t)\|_{1,2}] &\leq \frac{\Delta}{T\eta} + 2\Bigg[\left(1-\frac{1}{T}\right)\frac{L_F\eta m\beta}{1-\beta}\\
&\qquad + \frac{\sqrt{m}\sigma}{\sqrt{B}}\sqrt{\frac{1-\beta}{1+\beta}}\Bigg] + \frac{L_F\eta m}{2}.
\end{align*}

This completes the proof.
\end{proof}

\section{Analysis of \textsc{Muon} Preconditioner}
\label{appendix:diagonal_dominance_setup}

This section provides implementation details for the diagonal dominance analysis presented in Section~\ref{subsec:diagonal_dominance}.

\paragraph{Metric Computation}
For each matrix parameter $V_t \in \mathbb{R}^{m \times n}$ in the network, we compute the diagonal dominance metrics as follows:

\begin{enumerate}
    \item \textbf{Gram Matrix Computation:} We first compute the Gram matrix $G = V_t V_t^T \in \mathbb{R}^{m \times m}$.

    \item \textbf{Row-wise Ratio Calculation:} For each row $i \in \{1, \ldots, m\}$, we compute the ratio $r_i$ between the diagonal element and the average magnitude of off-diagonal elements:
    \begin{equation}
        r_i = \frac{G_{ii}}{\frac{1}{m-1}\sum_{j \neq i} |G_{ij}|}
    \end{equation}
    where $G_{ii} = \|V_{t,i:}\|_2^2$ is the squared norm of the $i$-th row of $V_t$.

    \item \textbf{Per-Parameter Aggregation:} For each matrix parameter, we aggregate the row-wise ratios into three statistics:
    \begin{align}
        r_{\text{avg}} &= \frac{1}{m}\sum_{i=1}^{m} r_i, \\
        r_{\min} &= \min_{i \in \{1,\ldots,m\}} r_i, \\
        r_{\max} &= \max_{i \in \{1,\ldots,m\}} r_i.
    \end{align}

    \item \textbf{Global Aggregation:} The global statistics $\overline{r}_{\text{avg}}$, $\overline{r}_{\min}$, and $\overline{r}_{\max}$ are computed by averaging the corresponding per-parameter metrics across all $K$ matrix parameters in the network:
    \begin{align}
        \overline{r}_{\text{avg}} &= \frac{1}{K}\sum_{k=1}^{K} r_{\text{avg}}^{(k)}, \\
        \overline{r}_{\min} &= \frac{1}{K}\sum_{k=1}^{K} r_{\min}^{(k)}, \\
        \overline{r}_{\max} &= \frac{1}{K}\sum_{k=1}^{K} r_{\max}^{(k)},
    \end{align}
    where the superscript $(k)$ denotes the metric for the $k$-th matrix parameter.
\end{enumerate}

\paragraph{Logging Configuration}
We use Weights \& Biases (wandb) for metric tracking. The diagonal dominance ratios are computed and logged at every training step. The metrics are computed within the optimizer's \texttt{step()} function, immediately after the momentum update and before the Newton-Schulz orthogonalization. In distributed training settings, the per-parameter metrics are computed locally on each GPU (parameters are distributed across GPUs), and the global statistics are synchronized via \texttt{all\_reduce} operations.

\paragraph{Model and Training Configuration}
We conduct the analysis on both GPT-2 and LLaMA model families to align with the main pre-training setting. For GPT-2, we analyze Small (125M), Medium (355M), and Large (770M) on OpenWebText; for LLaMA, we analyze 60M, 130M, 350M, and 1B on C4. Model scales, training steps, warm-up schedules, sequence length, and batch size follow the settings in Section~D.2 (Appendix~\ref{appendix:model_config}). In particular, GPT-2 uses 10K/20K/40K steps with sequence length 1024 and batch size 480, while LLaMA uses 10K/20K/60K/90K steps with sequence length 256 and batch size 512. Optimization hyperparameters follow Appendix~\ref{appendix:hyperparams}; specifically, we use \textsc{Muon} with momentum $0.95$, weight decay $0.1$, and Newton-Schulz iteration steps of 5.

\paragraph{Visualization}
In all dominance figures of this appendix, transparent curves represent the raw logged values, while the solid curves are smoothed using simple moving average with a window size of 50. The red dashed line at $y=1$ serves as a reference threshold---values above this line indicate that the diagonal elements dominate over the average off-diagonal magnitude, confirming diagonal dominance of the Gram matrix. Per-parameter $r_{\text{avg}}, r_{\min}, r_{\max}$ for three representative matrix parameters of GPT-2 and LLaMA are shown in Figures~\ref{fig:dominance_curves_appendix} and~\ref{fig:Llama_dominance_curves_appendix}. The cross-scale, cross-architecture comparison of the global ratios $\overline{r}_{\text{avg}}, \overline{r}_{\min}, \overline{r}_{\max}$ is reproduced from the main body in Figure~\ref{fig:gpt2_llama_global_ratio_appendix}. Per-parameter ratios for the two largest models---GPT-2 XLarge (1.5B) and LLaMA 1B---are reported in Figure~\ref{fig:xl_1b_dominance_curves_appendix}.

\begin{figure*}[!htb]
    \centering
    \includegraphics[width=1.00\linewidth]{img/Combined_9grid_ratio.pdf}
    \caption{Per-parameter diagonal dominance ratios $r_{\text{avg}}$, $r_{\min}$, $r_{\max}$ (rows) for three representative matrix parameters (columns) during GPT-2 Small (125M), GPT-2 Medium (355M), and GPT-2 Large (770M) pre-training. Transparent curves: raw values; solid curves: smoothed with window size 50. Red dashed line: $y=1$ threshold.}
    \label{fig:dominance_curves_appendix}
\end{figure*}

\begin{figure*}[!htb]
    \centering
    \includegraphics[width=1.00\linewidth]{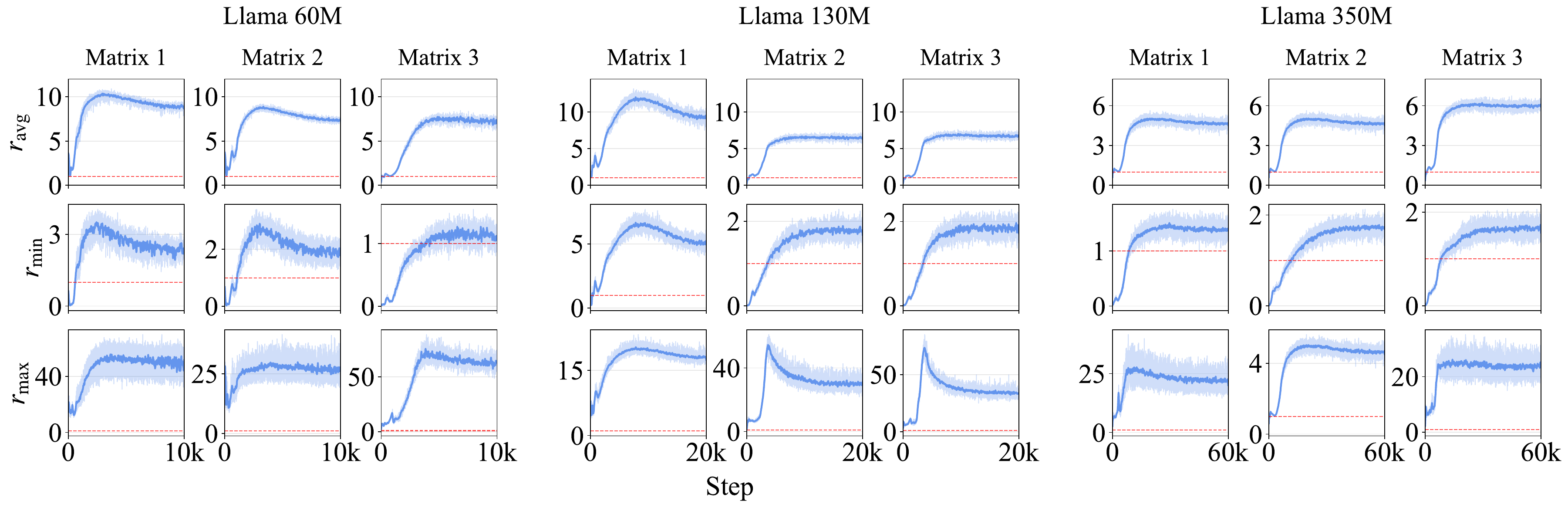}
    \caption{Per-parameter diagonal dominance ratios $r_{\text{avg}}$, $r_{\min}$, $r_{\max}$ (rows) for three representative matrix parameters (columns) during LLaMA 60M, LLaMA 130M, and LLaMA 350M pre-training. Transparent curves: raw values; solid curves: smoothed with window size 50. Red dashed line: $y=1$ threshold.}
    \label{fig:Llama_dominance_curves_appendix}
\end{figure*}

\begin{figure*}[!htb]
    \centering
    \includegraphics[width=1.00\linewidth]{img/GPT-2-LLaMA-2x3-Size_Comparison_global_ratio.pdf}
    \caption{Cross-architecture, cross-scale comparison of the global diagonal dominance ratios $\overline{r}_{\text{avg}}$, $\overline{r}_{\min}$, $\overline{r}_{\max}$ (columns). Top row: GPT-2 Small (125M), Medium (355M), and Large (770M) on OpenWebText. Bottom row: LLaMA 60M, 130M, and 350M on C4. The x-axis is rescaled to the relative training progress (\%) so that all model scales within a row align on a shared horizontal range; the y-axis is in log scale. Transparent curves: raw values; solid curves: smoothed with window size 50. Red dashed line: $y=1$ threshold. The figure reproduces the main-body version (Figure~\ref{fig:global_dominance_curves}) and makes explicit that for both architecture families, larger models exhibit progressively stronger diagonal dominance across all three statistics.}
    \label{fig:gpt2_llama_global_ratio_appendix}
\end{figure*}

\begin{figure*}[!htb]
    \centering
    \includegraphics[width=1.00\linewidth]{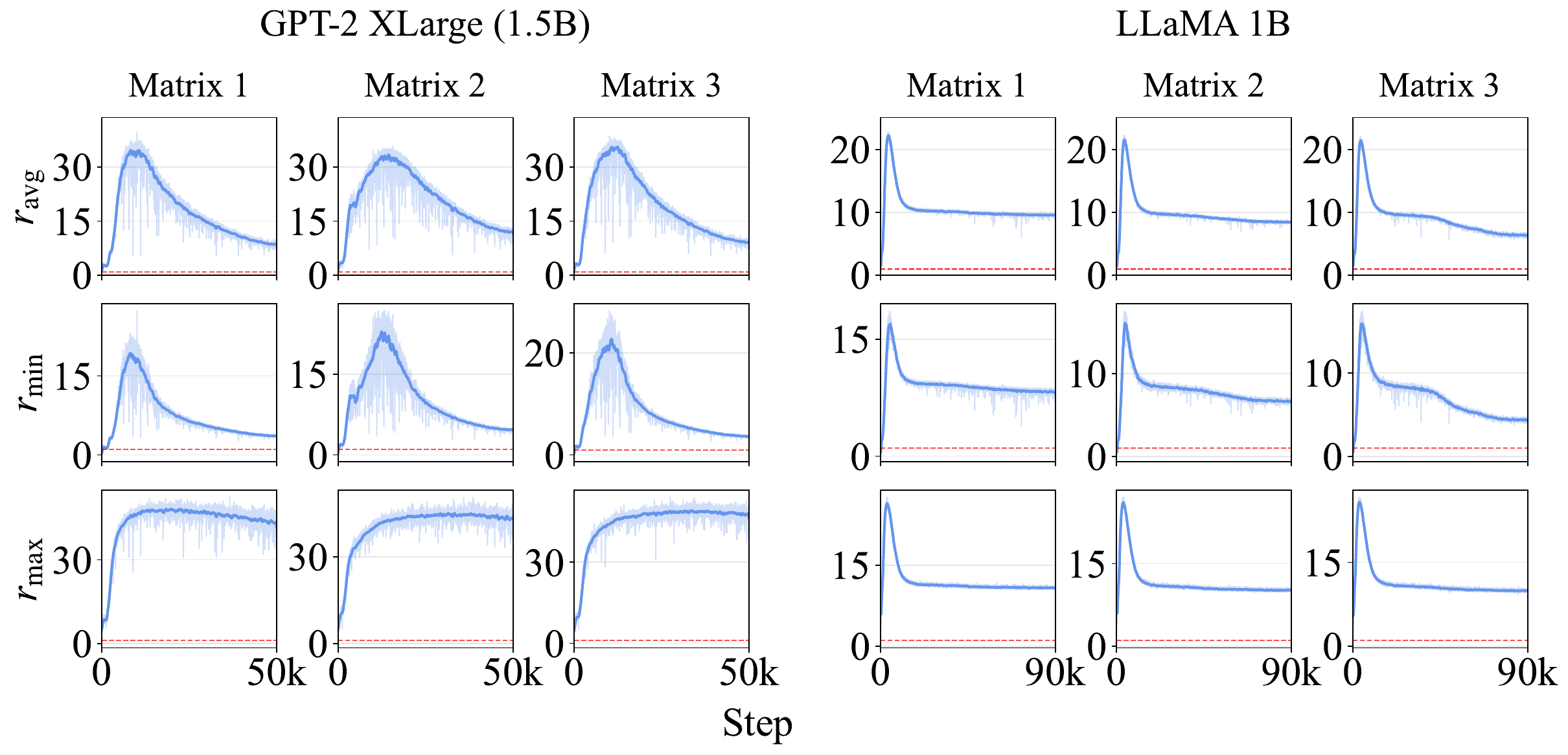}
    \caption{Per-parameter diagonal dominance ratios $r_{\text{avg}}$, $r_{\min}$, $r_{\max}$ (rows) for three representative matrix parameters (columns) on the largest scales evaluated in this paper: GPT-2 XLarge (1.5B) on FineWeb-Edu-100B (left block) and LLaMA 1B on C4 (right block). Transparent curves: raw values; solid curves: smoothed with window size 50. Red dashed line: $y=1$ threshold. All metrics remain comfortably above the threshold throughout training, confirming that the row-wise block-diagonal dominance of the \textsc{Muon} preconditioner persists at the largest scales we evaluate.}
    \label{fig:xl_1b_dominance_curves_appendix}
\end{figure*}

\section{Preconditioning Process Wall-Clock Time}
\label{appendix:wall_clock}

This section provides detailed efficiency measurements for the preconditioning time cost analysis presented in Section~\ref{sec:efficiency}. As shown in Table~\ref{tab:memory}, \textsc{RMNP} achieves significant speedups over \textsc{Muon} across all model sizes while maintaining identical memory usage. Specifically, \textsc{RMNP} reduces the preconditioner computation time by approximately $13\times$ to $43\times$, demonstrating its scalability advantage for large-scale training.

\begin{table*}[!htbp]
\centering
\caption{Efficiency comparison between \textsc{Muon} (Newton-Schulz orthogonalization) and \textsc{RMNP} (row normalization) on GPT-2 models. Time and memory usage are measured over 100 steps with batch size 16 on one single RTX Pro 6000 GPU.}
\begin{tabular}{lcccc}
\toprule
\multirow{2}{*}{Size} & \multicolumn{2}{c}{Time Cost (s)} & \multicolumn{2}{c}{Memory (MB)} \\
\cmidrule(lr){2-3} \cmidrule(lr){4-5}
& \textsc{Muon} & \textsc{RMNP} & \textsc{Muon} & \textsc{RMNP} \\
\midrule
60M  & 1.480  & \textbf{0.115} & 7804 & 7804 \\
125M & 2.975 & \textbf{0.201} & 11797 & 11797 \\
200M & 4.140  & \textbf{0.260} & 15352 & 15352 \\
355M & 7.380  & \textbf{0.401} & 23225 & 23225 \\
500M & 15.720 & \textbf{0.462} & 30011 & 30011 \\
770M & 27.070 & \textbf{0.611} & 41508 & 41508 \\
1.3B & 30.570 & \textbf{0.783} & 61043 & 61043 \\
1.5B & 36.650 & \textbf{0.855} & 69465 & 69465 \\
\bottomrule
\end{tabular}

\label{tab:memory}
\end{table*}

\subsection{Model Configuration for Preconditioning Time Cost}
\label{app:efficiency_config}
Table~\ref{tab:gpt2_config_efficiency} presents the detailed model configurations used for measuring preconditioning time cost in Section~\ref{sec:efficiency}.

\begin{table}[!htbp]
\centering
\caption{GPT-2 Model Configurations for Preconditioning Time Cost}
\label{tab:gpt2_config_efficiency}
\begin{tabular}{@{}lcccc@{}}
\toprule
\textbf{Model} & Params & Layers & Heads & $d_{\text{model}}$ \\
\midrule
GPT-2 60M    & 60M   & 6  & 10 & 640  \\
GPT-2 Small  & 125M  & 12 & 12 & 768  \\
GPT-2 200M   & 200M  & 16 & 14 & 896  \\
GPT-2 Medium & 355M  & 24 & 16 & 1024 \\
GPT-2 500M   & 500M  & 28 & 18 & 1152 \\
GPT-2 Large  & 770M  & 36 & 20 & 1280 \\
GPT-2 1.3B   & 1.3B & 44 & 24 & 1536 \\
GPT-2 XL     & 1.5B & 48 & 25 & 1600 \\
\bottomrule
\end{tabular}
\end{table}

\section{Hyperparameter Search for Pretraining Performance}
\label{hyper_search}

This section provides detailed hyperparameter search results for the pretraining experiments described in Section~\ref{sec:pretrain_performance}. We perform a systematic hyperparameter grid search for both \textsc{RMNP} and \textsc{Muon} across GPT-2 (Small and Medium) and LLaMA (60M, 130M, and 350M) models. Following the standard \textsc{Muon} training protocol, \textsc{RMNP} is integrated with \textsc{AdamW}, with the learning rate decoupled into $\text{lr}_{\text{AdamW}}$ and $\text{lr}_{\text{Matrix}}$. We fix $\text{lr}_{\text{AdamW}}$ and vary $\text{lr}_{\text{Matrix}}$ to evaluate its impact on convergence. For all LLaMA \textsc{RMNP} runs (60M / 130M / 350M / 1B), we further adopt a shared-LR convention $\text{lr}_{\text{AdamW}} = \text{lr}_{\text{Matrix}}$, i.e., the matrix LR reported in Tables~\ref{tab:llama_60m_hyperparam}--\ref{tab:llama_130m_hyperparam} and~\ref{tab:llama_350m_hyperparam} is also used as the \textsc{AdamW} LR for the non-matrix parameters in those rows; this differs from the GPT-2 protocol, where $\text{lr}_{\text{AdamW}}$ is held fixed independently of $\text{lr}_{\text{Matrix}}$. The results are summarized in Tables~\ref{tab:small_hyperparam} and~\ref{tab:medium_hyperparam} for GPT-2, and Tables~\ref{tab:llama_60m_hyperparam},~\ref{tab:llama_130m_hyperparam}, and~\ref{tab:llama_350m_hyperparam} for LLaMA. Due to compute constraints, we did not perform a full LR sweep for LLaMA-1B; instead, we use a fixed configuration: \textsc{AdamW} with $\text{lr}=6\times 10^{-4}$, \textsc{Muon} with $\text{lr}_{\text{AdamW}}=6\times 10^{-4}$ and $\text{lr}_{\text{Matrix}}=5\times 10^{-3}$, and \textsc{RMNP} with $\text{lr}_{\text{AdamW}}=\text{lr}_{\text{Matrix}}=5\times 10^{-3}$, all with weight decay $0.1$ and $\beta=(0.9, 0.95)$. All values reported are evaluation perplexity (lower is better). \fwedu{We also present GPT-2 experiments on FineWeb-Edu-100B~\cite{penedo2024finewebdatasetsdecantingweb}; see Tables~\ref{tab:gpt2_fwedu_config},~\ref{tab:fwedu_hyperparam}, and~\ref{tab:fwedu_results} in Appendix~\ref{appendix:model_config}.}

\subsection{Hyperparameter Settings}
\label{appendix:hyperparams}
This section provides detailed hyperparameter settings for the experiments described in Section~\ref{sec:exp_setup}.

\paragraph{\textsc{Muon}}
For \textsc{Muon}, we set the momentum to $0.95$ and weight decay to $0.1$. Following \citet{jordan2024muon}, we apply an RMS scaling coefficient to the learning rate:
\begin{equation}
    \eta = \text{lr}_{\text{Matrix}} \cdot \max\left(1, \sqrt{\frac{m}{n}}\right),
\end{equation}
where $m$ and $n$ denote the number of rows and columns of the parameter matrix, respectively. During hyperparameter search, we exclusively tune $\text{lr}_{\text{Matrix}}$. For \textsc{AdamW}, we set $\text{lr}_{\text{AdamW}} = 0.003$, $0.0015$, and $0.001$ for GPT-2 Small, medium, and large models, respectively.

\paragraph{\textsc{RMNP}}
To ensure a fair comparison, we adopt the same RMS scaling as \textsc{Muon}:
\begin{equation}
     \eta = \text{lr}_{\text{Matrix}} \cdot \max\left(1, \sqrt{\frac{m}{n}}\right),
\end{equation}
as well as identical hyperparameters for \textsc{AdamW}.

For GPT-2 experiments, the matrix optimizer is applied to all matrix parameters, including the LM head and token-embedding layers. For LLaMA experiments, the LM head and token-embedding parameters are handled by \textsc{AdamW} in the main results (Tables~\ref{tab:llama_60m_hyperparam},~\ref{tab:llama_130m_hyperparam}, and~\ref{tab:llama_350m_hyperparam}); an ablation on this choice is provided in Appendix~\ref{appendix:lm_head_embedding_ablation}.

\subsection{Model Configurations}
\label{appendix:model_config}
\label{appendix:finewebedu}
In this section, we present the model configurations and hyperparameters for GPT-2 (Table~\ref{tab:gpt2_config})\fwedu{, GPT-2 on FineWeb-Edu-100B (Table~\ref{tab:gpt2_fwedu_config}),} and LLaMA (Table~\ref{tab:llama_config}). All GPT-2 models are trained with a maximum sequence length of 1024 and a batch size of 480. All LLaMA models are trained with a maximum sequence length of 256 and a batch size of 512. GPT-2 Small and medium models are trained in parallel on 4 NVIDIA RTX Pro 6000 GPUs, while GPT-2 large models are trained on a single NVIDIA Blackwell B200 Tensor Core GPU. LLaMA-60M and LLaMA-130M are trained in parallel on 2 NVIDIA L40. LLaMA-350M is trained in parallel on 4 NVIDIA RTX Pro 6000. LLaMA-1B is trained in parallel on 8 NVIDIA GPUs. \fwedu{For FineWeb-Edu-100B~\cite{penedo2024finewebdatasetsdecantingweb}\footnote{\fwedu{We use the shuffled version by \citet{karpathy2024finewebedu100b}: \url{https://huggingface.co/datasets/karpathy/fineweb-edu-100b-shuffle}.}}, the GPT-2 Small, Medium, and Large configurations are identical to the OpenWebText setup, while the GPT-2 XLarge (1.5B) model is trained only on FineWeb-Edu-100B (no OpenWebText counterpart). Optimizer hyperparameters are listed in Table~\ref{tab:fwedu_hyperparam}, and evaluation results in Figure~\ref{fig:fwedu_results_bar} and Table~\ref{tab:fwedu_results}.}

\begin{table*}[!htbp]
\centering
\caption{GPT-2 Model Configurations and specified hyperparameters for OpenWebText experiments.}
\label{tab:gpt2_config}
\begin{tabular}{@{}lccccccccc@{}}
\toprule
\textbf{Model} & Params & Layer & Heads & $d_{\text{emb}}$ & Steps & Warm-up & Token Count & Batch Size & LR schedule \\
\midrule
GPT-2 Small  & 125M & 12 & 12 & 768 & 10K & 1K & 5B & 480 & Cosine \\
GPT-2 Medium & 355M & 24 & 16 & 1024 & 20K & 2K & 10B & 480 & Cosine\\
GPT-2 Large  & 770M & 36 & 20 & 1280 & 40K & 4K & 20B & 480 & Cosine\\
\bottomrule
\end{tabular}
\end{table*}

\begin{table*}[!htbp]
\centering
\caption{GPT-2 Model Configurations and specified hyperparameters for FineWeb-Edu-100B experiments.}
\label{tab:gpt2_fwedu_config}
\begin{tabular}{@{}lccccccccc@{}}
\toprule
\textbf{Model} & Params & Layer & Heads & $d_{\text{emb}}$ & Steps & Warm-up & Token Count & Batch Size & LR schedule \\
\midrule
GPT-2 Small  & 125M & 12 & 12 & 768  & 10K & 1K & 5B & 480 & Cosine \\
GPT-2 Medium & 355M & 24 & 16 & 1024 & 20K & 2K & 10B & 480 & Cosine \\
GPT-2 Large  & 770M & 36 & 20 & 1280 & 40K & 4K & 20B & 480 & Cosine \\
GPT-2 XLarge & 1.5B & 48 & 25 & 1600 & 50K & 5K & 25B & 480 & Cosine \\
\bottomrule
\end{tabular}
\end{table*}

\begin{table*}[!htbp]
\centering
\caption{Optimizer hyperparameters for GPT-2 pre-training on FineWeb-Edu-100B.}
\label{tab:fwedu_hyperparam}
\begin{tabular}{@{}lcccccc@{}}
\toprule
\textbf{Model} & Optimizer & $\text{lr}_{\text{AdamW}}$ & $\text{lr}_{\text{Matrix}}$ & Weight Decay & $\beta$ & Schedule \\
\midrule
\multirow{3}{*}{Small (125M)}
& \textsc{AdamW} & $6 \times 10^{-4}$ & --- & 0.1 & (0.9, 0.95) & Cosine \\
& \textsc{Muon} & $3 \times 10^{-3}$ & $2 \times 10^{-2}$ & 0.1 & (0.9, 0.95) & Cosine \\
& \textsc{RMNP} & $3 \times 10^{-3}$ & $2 \times 10^{-2}$ & 0.1 & (0.9, 0.95) & Cosine \\
\midrule
\multirow{3}{*}{Medium (355M)}
& \textsc{AdamW} & $3 \times 10^{-4}$ & --- & 0.1 & (0.9, 0.95) & Cosine \\
& \textsc{Muon} & $1.5 \times 10^{-3}$ & $1 \times 10^{-2}$ & 0.1 & (0.9, 0.95) & Cosine \\
& \textsc{RMNP} & $1.5 \times 10^{-3}$ & $1 \times 10^{-2}$ & 0.1 & (0.9, 0.95) & Cosine \\
\midrule
\multirow{3}{*}{Large (770M)}
& \textsc{AdamW} & $2 \times 10^{-4}$ & --- & 0.1 & (0.9, 0.95) & Cosine \\
& \textsc{Muon} & $1 \times 10^{-3}$ & $6.67 \times 10^{-3}$ & 0.1 & (0.9, 0.95) & Cosine \\
& \textsc{RMNP} & $1 \times 10^{-3}$ & $6.67 \times 10^{-3}$ & 0.1 & (0.9, 0.95) & Cosine \\
\midrule
\multirow{3}{*}{XLarge (1.5B)}
& \textsc{AdamW} & $2 \times 10^{-4}$ & --- & 0.1 & (0.9, 0.95) & Cosine \\
& \textsc{Muon} & $1 \times 10^{-3}$ & $6.67 \times 10^{-3}$ & 0.1 & (0.9, 0.95) & Cosine \\
& \textsc{RMNP} & $1 \times 10^{-3}$ & $2 \times 10^{-3}$ & 0.1 & (0.9, 0.95) & Cosine \\
\bottomrule
\end{tabular}
\end{table*}

\begin{table*}[!htbp]
\centering
\caption{LLaMA Model Configurations and specified hyperparameters.}
\scalebox{0.99}{
\begin{tabular}{@{}lccccccccc@{}}
\toprule
 Params & Hidden  & Intermediate &  Heads & Blocks &  Steps & Warm-up  & Token Count & Batch Size & LR schedule\\
\midrule
60M & 512 & 1376 & 8  & 8&  10K & 1K& 1B & 512 &Cosine \\
130M &  768& 2048 &  12& 12&   20K &2K& 2B & 512& Cosine \\
350M &  1024& 2736 &  16 & 24&   60K &6K& 6B & 512& Cosine \\
1B  & 2048 & 5461 &  32 & 24&   90K &9K& 9B & 512& Cosine \\
\bottomrule
\end{tabular}}
\label{tab:llama_config}
\end{table*}

\begin{table*}[!htbp]
\centering
\caption{Hyperparameter search on GPT-2 Small with \textsc{AdamW} learning rate fixed at $3 \times 10^{-3}$.}
\label{tab:small_hyperparam}
\begin{tabular}{lcccc}
\toprule
Matrix LR & 0.01 & 0.015 & 0.02 & 0.025 \\
\midrule
\textsc{Muon} & 23.62 & 26.74 & \textbf{22.86} & 22.87 \\
\midrule
Matrix LR & 0.002 & 0.003 & 0.004 & 0.005 \\
\midrule
\textsc{RMNP} & 23.58 & 22.95 & \textbf{22.82} & 26.42  \\
\bottomrule
\end{tabular}
\end{table*}

\begin{table*}[!htbp]
\centering
\caption{Hyperparameter search on GPT-2 Medium with \textsc{AdamW} learning rate fixed at $1.5 \times 10^{-3}$.}
\label{tab:medium_hyperparam}
\begin{tabular}{lcccc}
\toprule
Matrix LR & 0.005 & 0.01 & 0.02 & 0.03 \\
\midrule
\textsc{Muon} & 18.33 & 18.26 & \textbf{17.38} & 17.44 \\
\midrule
Matrix LR & 0.001 & 0.002 & 0.003 & 0.005 \\
\midrule
\textsc{RMNP} & 18.58 & \textbf{17.31} & 17.42 & 17.88  \\
\bottomrule
\end{tabular}
\end{table*}

\clearpage

\begin{table*}[!htbp]
\centering
\caption{Hyperparameter search on LLaMA-60M (LM head and token-embedding parameters handled by \textsc{AdamW}). Validation perplexity is reported.}
\label{tab:llama_60m_hyperparam}
\begin{tabular}{lccccc}
\toprule
Matrix LR & 0.005 & 0.01 & 0.02 & 0.03 & 0.04 \\
\midrule
\textsc{Muon} & 29.90 & \textbf{29.58} & 30.46 & 30.49 & 30.03 \\
\midrule
Matrix LR & 0.001 & 0.004 & 0.005 & 0.01 & 0.02 \\
\midrule
\textsc{RMNP} & 31.00 & 28.99 & \textbf{28.95} & 29.26 & 29.64 \\
\midrule
Matrix LR & 0.005 & 0.01 & 0.02 & 0.03 & 0.04 \\
\midrule
\textsc{Shampoo} & 31.04 & 30.69 & 31.07 & \textbf{29.74} & 30.61 \\
\midrule
Matrix LR & 0.001 & 0.002 & 0.003 & 0.004 & 0.005 \\
\midrule
\textsc{SOAP} & 30.85 & 29.30 & \textbf{29.14} & 29.36 & 29.57 \\
\bottomrule
\end{tabular}
\end{table*}

\begin{table*}[!htbp]
\centering
\caption{Hyperparameter search on LLaMA-130M (LM head and token-embedding parameters handled by \textsc{AdamW}). Validation perplexity is reported.}
\label{tab:llama_130m_hyperparam}
\begin{tabular}{lcccc}
\toprule
Matrix LR & 0.005 & 0.01 & 0.02 & 0.03  \\
\midrule
\textsc{Muon} & 22.51 & \textbf{22.42} & 22.47 & 22.51  \\
\midrule
Matrix LR & 0.01 & 0.02 & 0.03 & 0.04  \\
\midrule
\textsc{RMNP} & 22.42 & 22.49 & \textbf{22.14} & 23.31  \\
\midrule
Matrix LR & 0.005 & 0.01 & 0.03 & 0.04  \\
\midrule
\textsc{Shampoo} & 23.22 & 22.70 & \textbf{22.69} & 23.49  \\
\midrule
Matrix LR & 0.001 & 0.002 & 0.003 & 0.005  \\
\midrule
\textsc{SOAP} & 23.13 & \textbf{22.61} & 22.78 & 23.11  \\
\bottomrule
\end{tabular}
\end{table*}

\begin{table*}[!htbp]
\centering
\caption{Hyperparameter search on LLaMA-350M (LM head and token-embedding parameters handled by \textsc{AdamW}). Validation perplexity is reported.}
\label{tab:llama_350m_hyperparam}
\begin{tabular}{lccc}
\toprule
Matrix LR & 0.003 & 0.004 & 0.005 \\
\midrule
\textsc{Muon} & 17.01 & \textbf{16.87} & 16.89 \\
\midrule
Matrix LR & 0.003 & 0.004 & 0.005 \\
\midrule
\textsc{RMNP} & 17.02 & 16.86 & \textbf{16.85} \\
\bottomrule
\end{tabular}
\end{table*}

\subsection{Extended Training Budget}
\label{appendix:extended_training}

To verify that the advantage of \textsc{RMNP} over \textsc{Muon} and \textsc{AdamW} persists at longer training horizons, we additionally extend the training budget to $2\times$ the standard length for three model-dataset combinations: GPT-2 Small on OpenWebText (20K steps), LLaMA-60M on C4 (20K steps), and LLaMA-130M on C4 (40K steps). Final validation perplexity is reported in Table~\ref{tab:extended_training}. \textsc{RMNP} achieves the lowest perplexity in every cell, indicating that its advantage is not a short-horizon artifact.

\begin{table*}[!htbp]
\centering
\caption{Final validation PPL (\textdownarrow) under an extended training budget ($2\times$ standard). Lower is better.}
\label{tab:extended_training}
\begin{tabular}{@{}lccc@{}}
\toprule
Optimizer & LLaMA 60M & LLaMA 130M & GPT-2 Small (OWT) \\
\midrule
\textsc{AdamW} & 28.23 & 21.35 & 20.97 \\
\textsc{Muon}  & 27.03 & 20.84 & 20.88 \\
\textsc{RMNP}  & \textbf{26.44} & \textbf{20.53} & \textbf{20.41} \\
\bottomrule
\end{tabular}
\end{table*}

\subsection{LM Head and Embedding Ablation}
\label{appendix:lm_head_embedding_ablation}

We additionally study the effect of extending the matrix-aware optimizer to cover the LM head and token-embedding parameters (rather than letting \textsc{AdamW} handle them). Tables~\ref{tab:llama_60m_lmhead_hyperparam} and~\ref{tab:llama_130m_lmhead_hyperparam} report the LR-sweep results for \textsc{Muon} and \textsc{RMNP} on LLaMA-60M and LLaMA-130M when LM head and embedding parameters are included in the matrix-optimizer parameter group.

\begin{table*}[!htbp]
\centering
\caption{Hyperparameter search on LLaMA-60M with LM head and embedding parameters optimized by the matrix optimizer. Validation perplexity is reported.}
\label{tab:llama_60m_lmhead_hyperparam}
\begin{tabular}{lccccc}
\toprule
Matrix LR & 0.005 & 0.01 & 0.02 & 0.03 & 0.04 \\
\midrule
\textsc{Muon} & 30.41 & 29.49 & 29.63 & \textbf{29.38} & 30.57 \\
\midrule
Matrix LR & 0.001 & 0.004 & 0.005 & 0.01 & 0.02 \\
\midrule
\textsc{RMNP} & 34.92 & 29.56 & 29.28 & \textbf{29.03} & 31.45 \\
\bottomrule
\end{tabular}
\end{table*}

\begin{table*}[!htbp]
\centering
\caption{Hyperparameter search on LLaMA-130M with LM head and embedding parameters optimized by the matrix optimizer. Validation perplexity is reported.}
\label{tab:llama_130m_lmhead_hyperparam}
\begin{tabular}{lcccc}
\toprule
Matrix LR & 0.005 & 0.01 & 0.02 & 0.03 \\
\midrule
\textsc{Muon} & 22.89 & \textbf{22.55} & 22.80 & 22.87 \\
\midrule
Matrix LR & 0.01 & 0.02 & 0.03 & 0.04 \\
\midrule
\textsc{RMNP} & 22.16 & 22.11 & \textbf{22.06} & 23.62 \\
\bottomrule
\end{tabular}
\end{table*}

Overall, as shown in Tables~\ref{tab:llama_60m_lmhead_hyperparam} and~\ref{tab:llama_130m_lmhead_hyperparam}, including the LM head and token-embedding parameters in the matrix-optimizer group has a negligible effect on final perplexity: the differences across all settings are within 0.13 PPL and show no consistent trend across model scales or optimizers. For the GPT-2 experiments reported in the main body, the LM head and embedding parameters are optimized together with the other matrix parameters using the matrix optimizer.

\section{Full Training Curves}
\label{appendix:training_curves}

This section presents the complete set of training-loss, validation-loss, and gradient clip-rate curves for every model-dataset combination evaluated in this paper, comparing \textsc{AdamW}, \textsc{Muon}, and \textsc{RMNP}. In every plot \textsc{RMNP} is drawn on top so that it is never occluded by the other two curves. All curves use the canonical hyperparameters reported in Appendix~\ref{appendix:model_config}.

\subsection{Final Validation Perplexity Summary}
\label{appendix:final_ppl_summary}

Before presenting the full training curves, we summarize the final validation perplexity for the three main pre-training settings as bar charts paired with the corresponding numeric tables. \textsc{RMNP} attains the lowest final perplexity in every cell.

\begin{figure}[!htbp]
    \centering
    \includegraphics[width=0.65\linewidth]{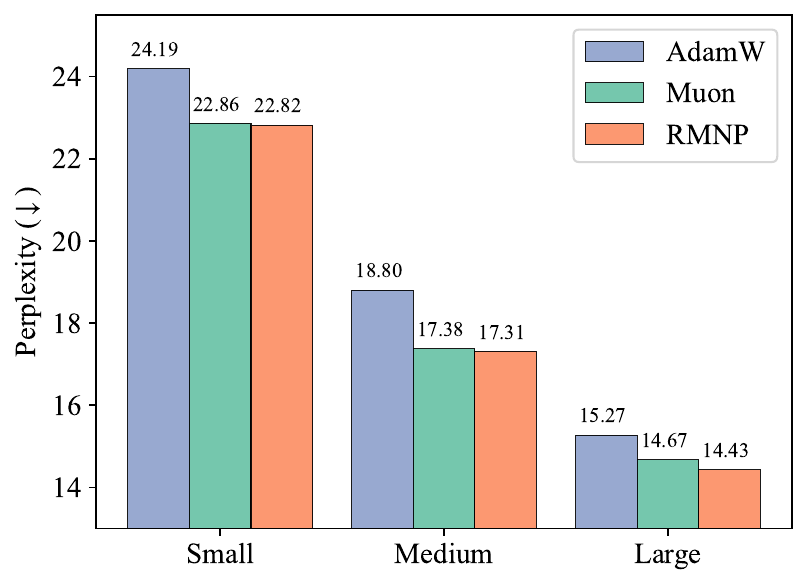}
    \caption{Final validation perplexity ($\downarrow$) on OpenWebText for GPT-2 Small, Medium, and Large. Numeric values are reported in Table~\ref{tab:owt_results}.}
    \label{fig:owt_results_bar}
\end{figure}

\begin{table}[!htbp]
\centering
\caption{Final validation perplexity ($\downarrow$) on OpenWebText for GPT-2 models.}
\label{tab:owt_results}
\begin{tabular}{@{}lccc@{}}
\toprule
 & Small (125M) & Medium (355M) & Large (770M) \\
\midrule
\textsc{AdamW} & 24.19 & 18.80 & 15.27 \\
\textsc{Muon}  & 22.86 & 17.38 & 14.67 \\
\textsc{RMNP}  & \textbf{22.82} & \textbf{17.31} & \textbf{14.43} \\
\bottomrule
\end{tabular}
\end{table}

\begin{figure}[!htbp]
    \centering
    \includegraphics[width=0.7\linewidth]{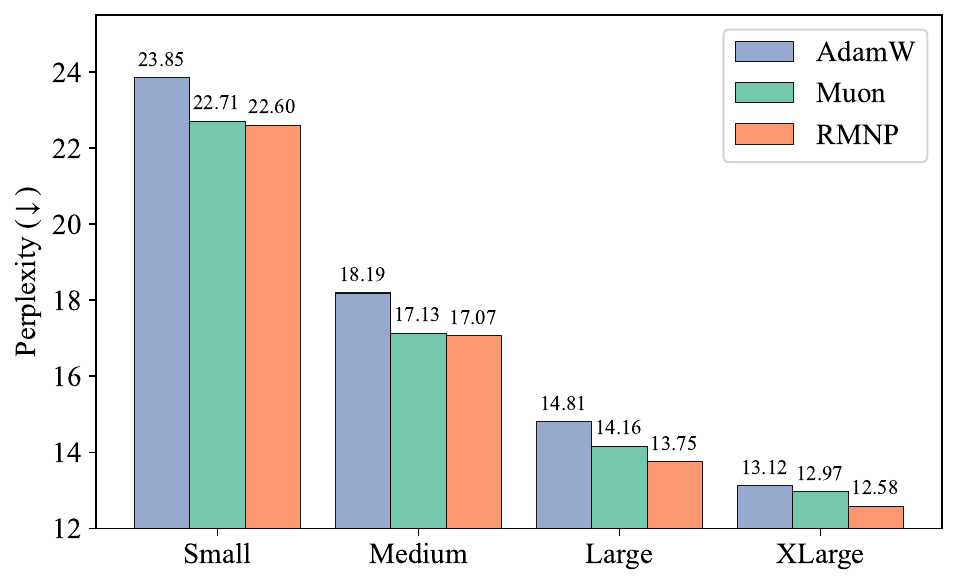}
    \caption{Final validation perplexity ($\downarrow$) on FineWeb-Edu-100B for GPT-2 Small, Medium, Large, and XLarge. Numeric values are reported in Table~\ref{tab:fwedu_results}.}
    \label{fig:fwedu_results_bar}
\end{figure}

\begin{table}[!htbp]
\centering
\caption{Final validation perplexity ($\downarrow$) on FineWeb-Edu-100B for GPT-2 models.}
\label{tab:fwedu_results}
\begin{tabular}{@{}lcccc@{}}
\toprule
 & Small (125M) & Medium (355M) & Large (770M) & XLarge (1.5B) \\
\midrule
\textsc{AdamW} & 23.85 & 18.19 & 14.81 & 13.12 \\
\textsc{Muon}  & 22.71 & 17.13 & 14.16 & 12.97 \\
\textsc{RMNP}  & \textbf{22.60} & \textbf{17.07} & \textbf{13.75} & \textbf{12.58} \\
\bottomrule
\end{tabular}
\end{table}

\begin{figure}[!htbp]
    \centering
    \includegraphics[width=0.7\linewidth]{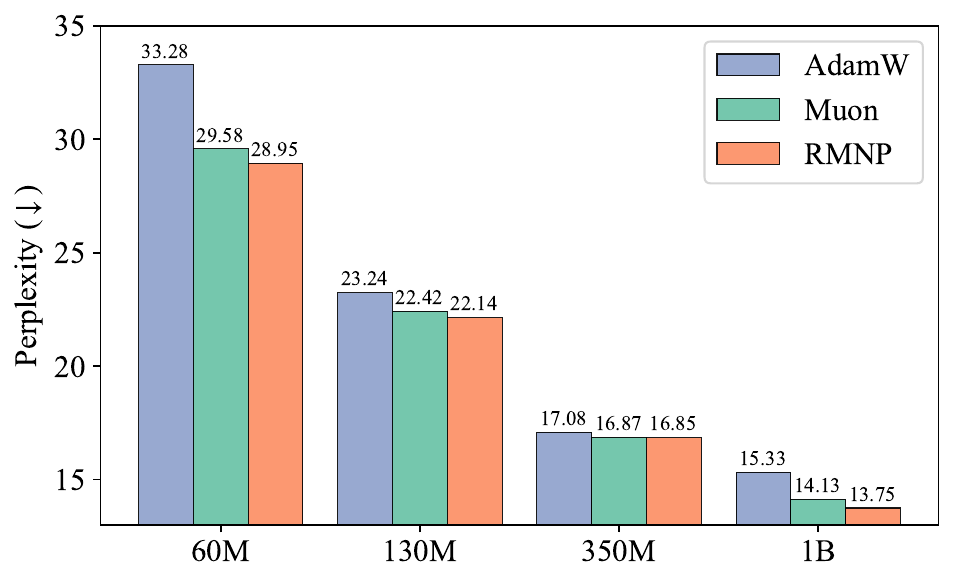}
    \caption{Final validation perplexity ($\downarrow$) on C4 for LLaMA 60M, 130M, 350M, and 1B. Numeric values are reported in Table~\ref{tab:llama_c4_results}.}
    \label{fig:llama_c4_results_bar}
\end{figure}

\begin{table}[!htbp]
\centering
\caption{Final validation perplexity ($\downarrow$) on C4 for LLaMA models.}
\label{tab:llama_c4_results}
\begin{tabular}{@{}lcccc@{}}
\toprule
 & 60M & 130M & 350M & 1B \\
\midrule
\textsc{AdamW} & 33.28 & 23.24 & 17.08 & 15.33 \\
\textsc{Muon}  & 29.58 & 22.42 & 16.87 & 14.13 \\
\textsc{RMNP}  & \textbf{28.95} & \textbf{22.14} & \textbf{16.85} & \textbf{13.75} \\
\bottomrule
\end{tabular}
\end{table}

\subsection{GPT-2 on OpenWebText}
\label{appendix:curves_owt}

Figures~\ref{fig:owt_small_curves}--\ref{fig:owt_large_curves} show the training and validation loss for \textsc{GPT-2 Small}, \textsc{Medium}, and \textsc{Large} pre-trained on OpenWebText. Across all three scales \textsc{RMNP} consistently matches or slightly improves upon \textsc{Muon}, while both clearly outperform \textsc{AdamW}.

\begin{figure}[!htbp]
    \centering
    \begin{subfigure}[t]{0.48\linewidth}
        \centering
        \includegraphics[width=\linewidth]{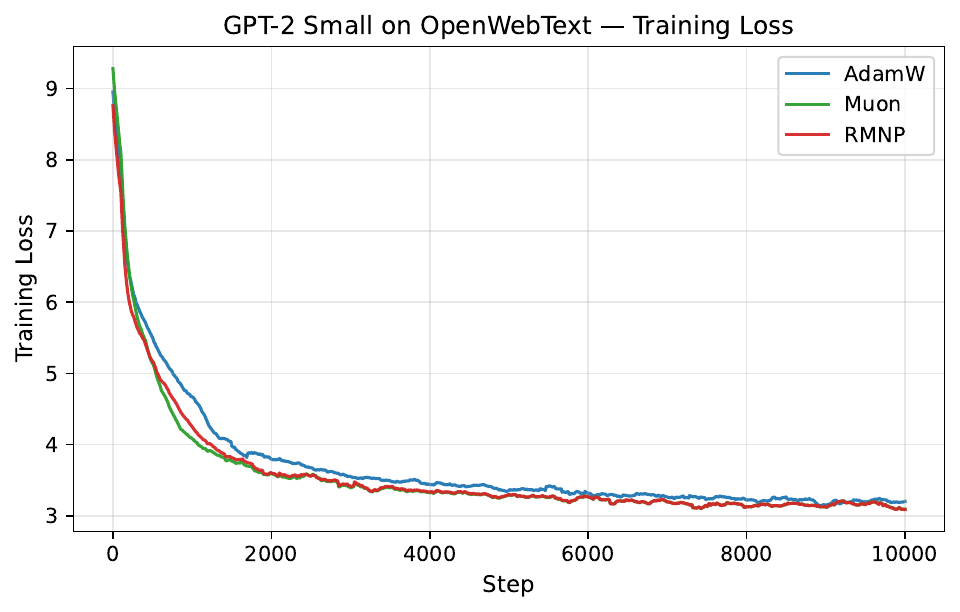}
        \caption{Training Loss}
    \end{subfigure}\hfill
    \begin{subfigure}[t]{0.48\linewidth}
        \centering
        \includegraphics[width=\linewidth]{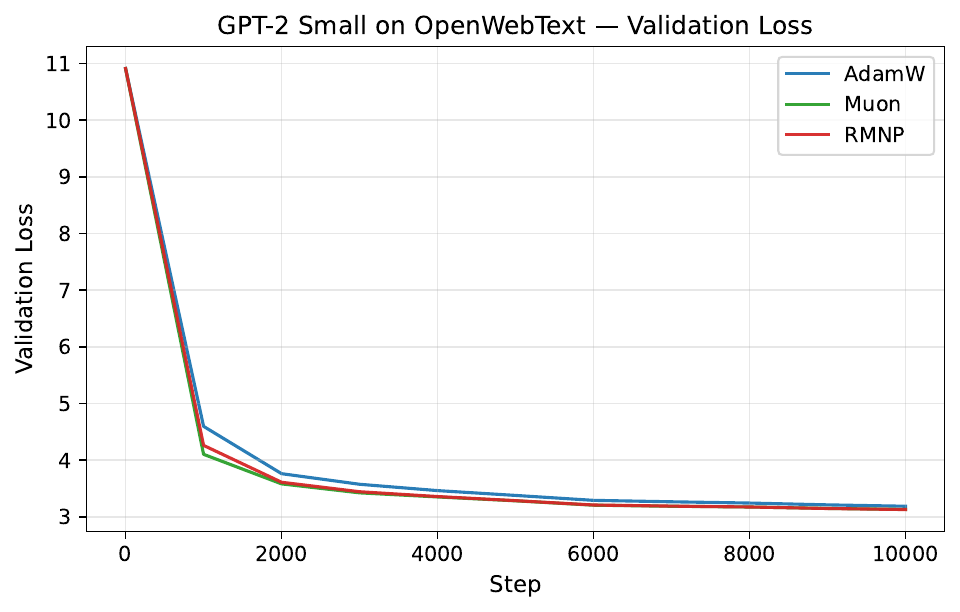}
        \caption{Validation Loss}
    \end{subfigure}
    \caption{\textsc{GPT-2 Small} (125M) on OpenWebText. Training loss is smoothed with a 20-step rolling window. \textsc{RMNP} ends with the lowest training and validation loss.}
    \label{fig:owt_small_curves}
\end{figure}

\begin{figure}[!htbp]
    \centering
    \begin{subfigure}[t]{0.48\linewidth}
        \centering
        \includegraphics[width=\linewidth]{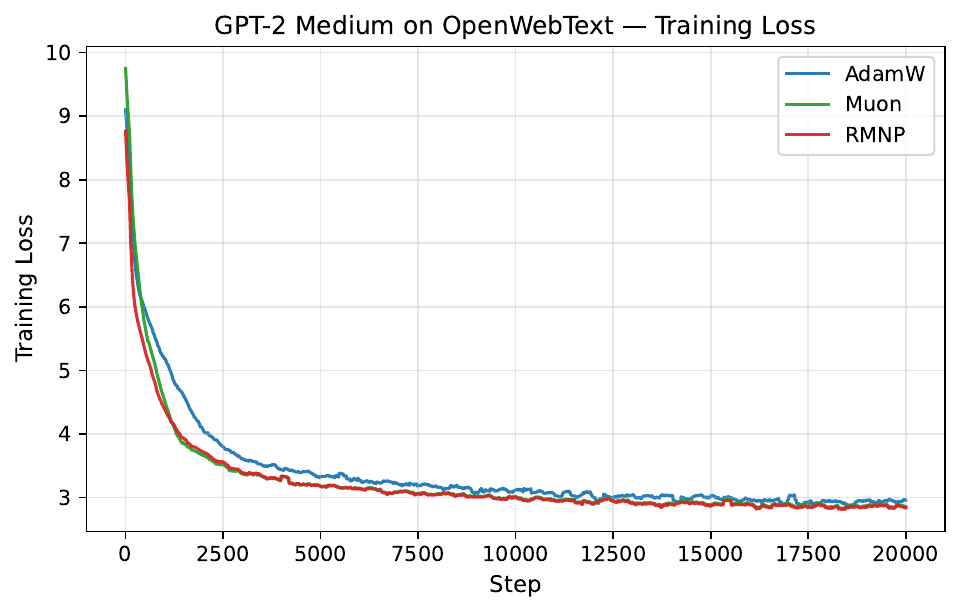}
        \caption{Training Loss}
    \end{subfigure}\hfill
    \begin{subfigure}[t]{0.48\linewidth}
        \centering
        \includegraphics[width=\linewidth]{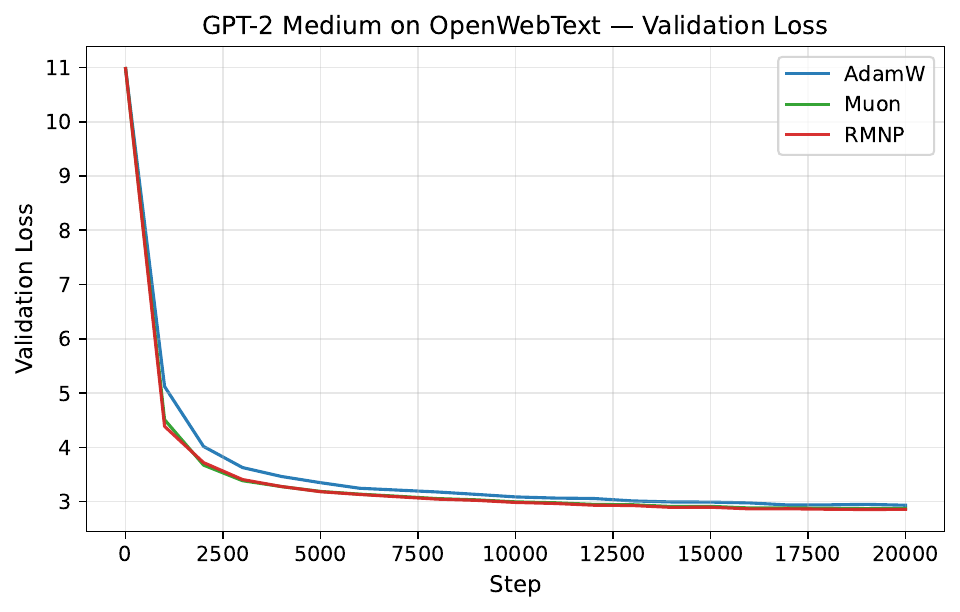}
        \caption{Validation Loss}
    \end{subfigure}
    \caption{\textsc{GPT-2 Medium} (355M) on OpenWebText. \textsc{RMNP} achieves the lowest validation loss among the three optimizers.}
    \label{fig:owt_medium_curves}
\end{figure}

\begin{figure}[!htbp]
    \centering
    \begin{subfigure}[t]{0.48\linewidth}
        \centering
        \includegraphics[width=\linewidth]{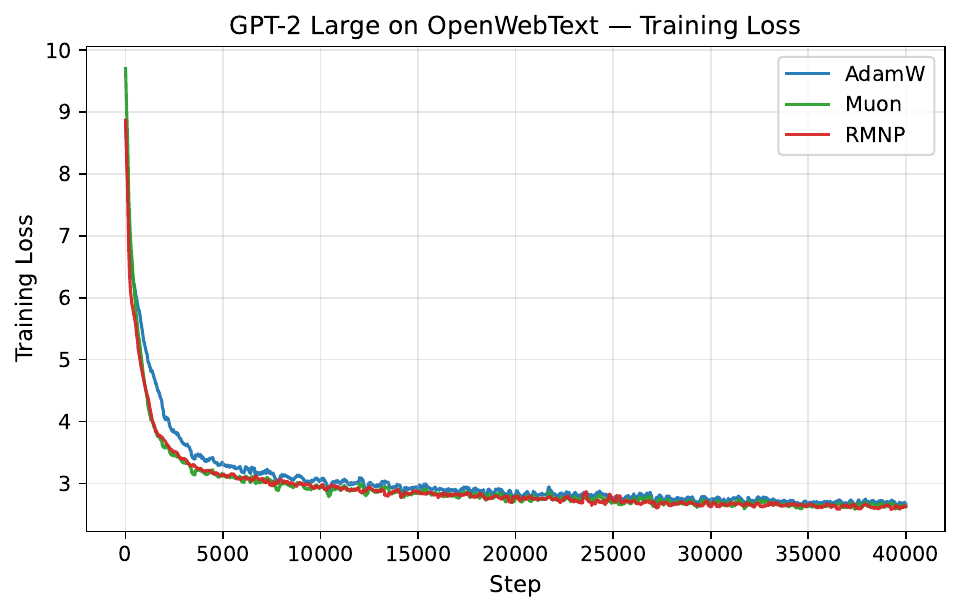}
        \caption{Training Loss}
    \end{subfigure}\hfill
    \begin{subfigure}[t]{0.48\linewidth}
        \centering
        \includegraphics[width=\linewidth]{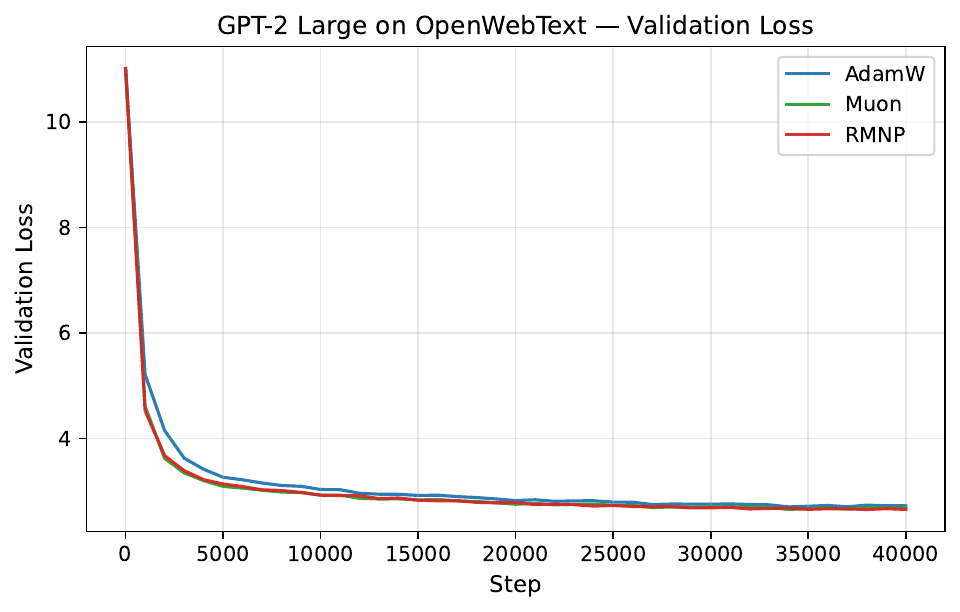}
        \caption{Validation Loss}
    \end{subfigure}
    \caption{\textsc{GPT-2 Large} (770M) on OpenWebText. \textsc{RMNP}'s lead over \textsc{Muon} grows with model scale.}
    \label{fig:owt_large_curves}
\end{figure}

\subsection{GPT-2 on FineWeb-Edu-100B}
\label{appendix:curves_fwedu}

Figures~\ref{fig:fwedu_small_curves}--\ref{fig:fwedu_xlarge_curves} present the training and validation loss curves for \textsc{GPT-2 Small}, \textsc{Medium}, \textsc{Large}, and \textsc{XLarge} pre-trained on FineWeb-Edu-100B. Across all four scales \textsc{RMNP} again matches or surpasses \textsc{Muon} and clearly outperforms \textsc{AdamW}, demonstrating that the trend observed on OpenWebText extends to a more competitive corpus and a larger token budget.

\begin{figure}[!htbp]
    \centering
    \begin{subfigure}[t]{0.48\linewidth}
        \centering
        \includegraphics[width=\linewidth]{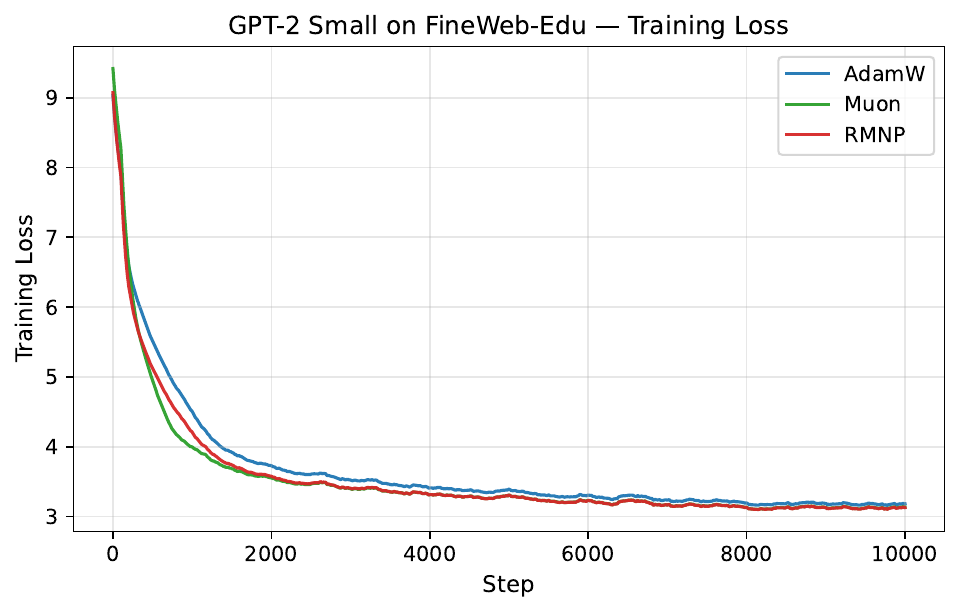}
        \caption{Training Loss}
    \end{subfigure}\hfill
    \begin{subfigure}[t]{0.48\linewidth}
        \centering
        \includegraphics[width=\linewidth]{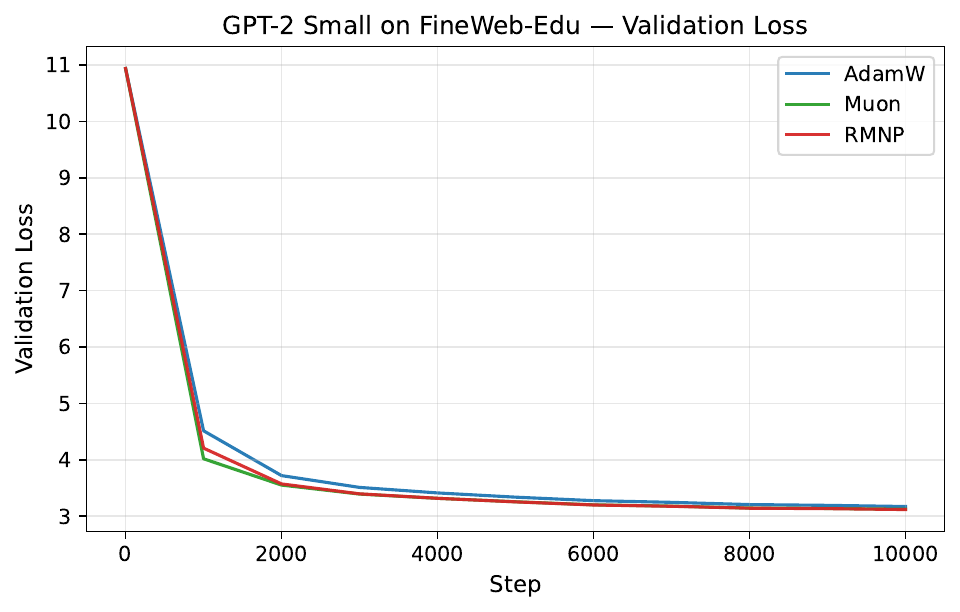}
        \caption{Validation Loss}
    \end{subfigure}
    \caption{\textsc{GPT-2 Small} (125M) on FineWeb-Edu-100B. \textsc{RMNP} attains the lowest training and validation loss.}
    \label{fig:fwedu_small_curves}
\end{figure}

\begin{figure}[!htbp]
    \centering
    \begin{subfigure}[t]{0.48\linewidth}
        \centering
        \includegraphics[width=\linewidth]{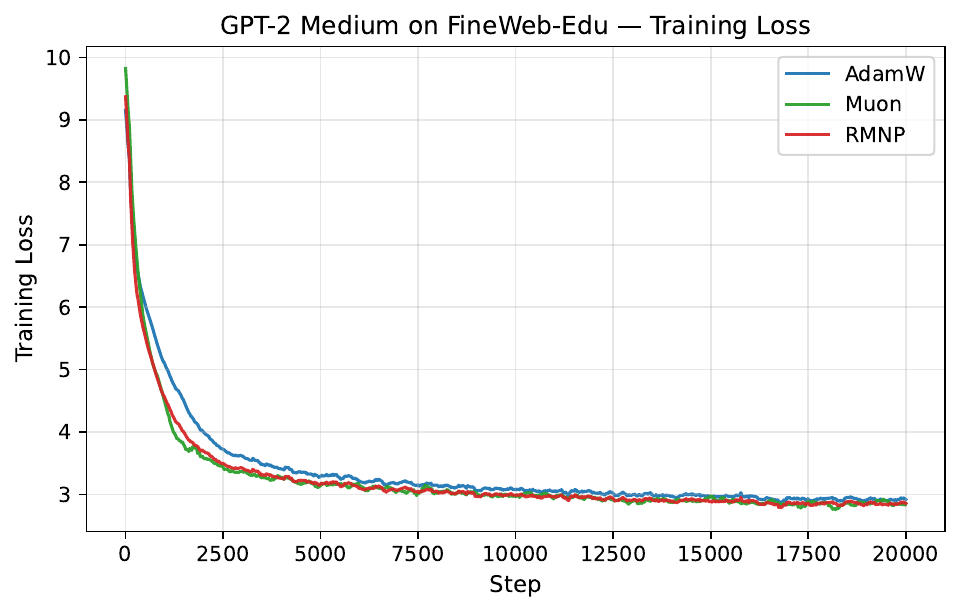}
        \caption{Training Loss}
    \end{subfigure}\hfill
    \begin{subfigure}[t]{0.48\linewidth}
        \centering
        \includegraphics[width=\linewidth]{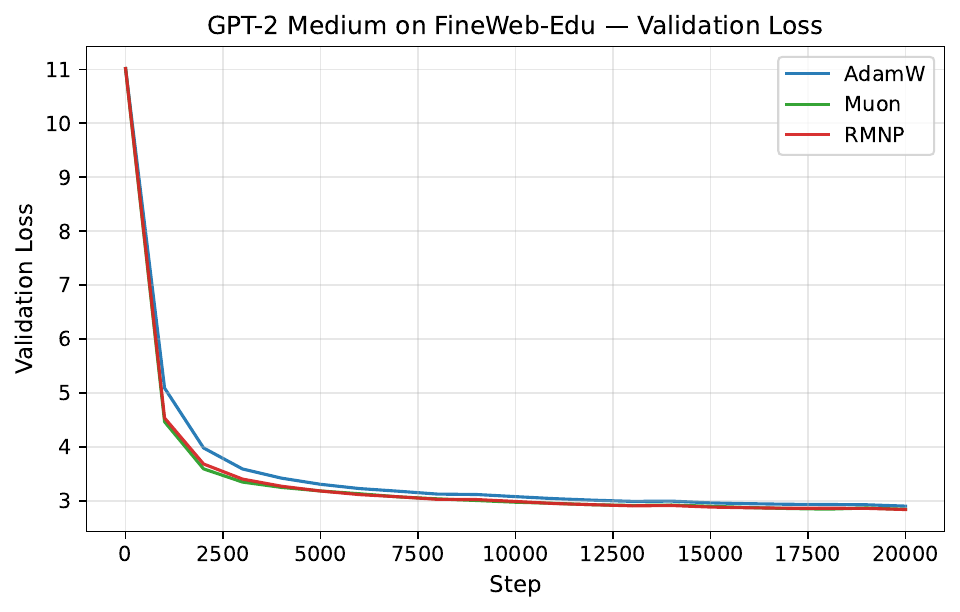}
        \caption{Validation Loss}
    \end{subfigure}
    \caption{\textsc{GPT-2 Medium} (355M) on FineWeb-Edu-100B. \textsc{RMNP} maintains a slight but consistent edge over \textsc{Muon} on validation loss while \textsc{AdamW} lags throughout training.}
    \label{fig:fwedu_medium_curves}
\end{figure}

\begin{figure}[!htbp]
    \centering
    \begin{subfigure}[t]{0.48\linewidth}
        \centering
        \includegraphics[width=\linewidth]{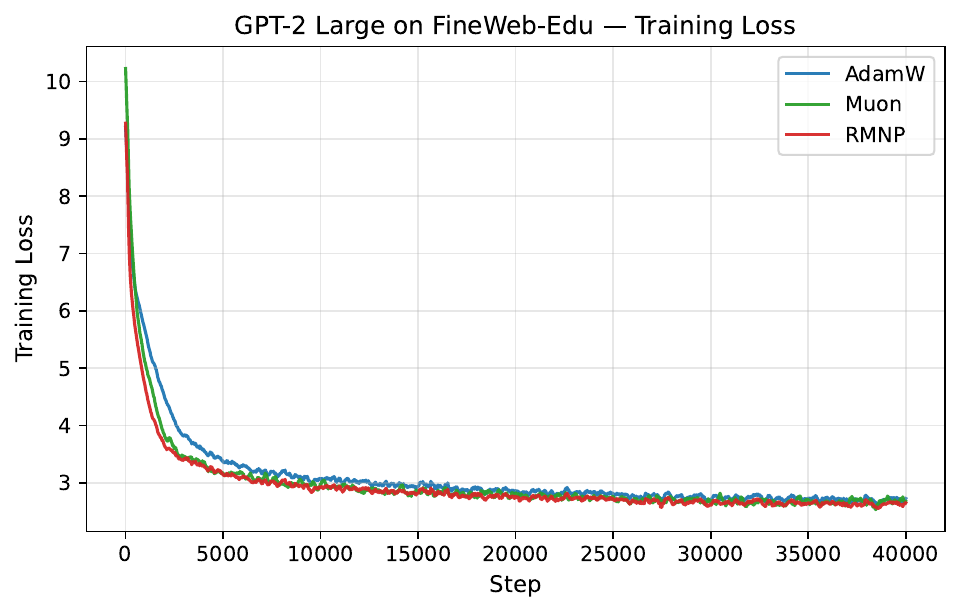}
        \caption{Training Loss}
    \end{subfigure}\hfill
    \begin{subfigure}[t]{0.48\linewidth}
        \centering
        \includegraphics[width=\linewidth]{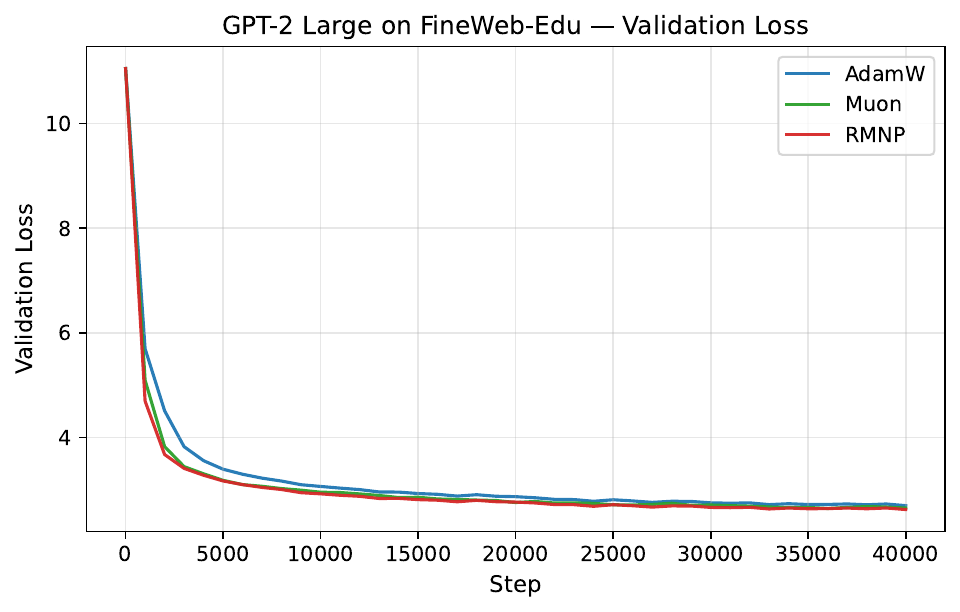}
        \caption{Validation Loss}
    \end{subfigure}
    \caption{\textsc{GPT-2 Large} (770M) on FineWeb-Edu-100B. \textsc{RMNP}'s lead over \textsc{Muon} grows with model scale, while \textsc{AdamW} converges to a noticeably higher validation loss.}
    \label{fig:fwedu_large_curves}
\end{figure}

\begin{figure}[!htbp]
    \centering
    \begin{subfigure}[t]{0.48\linewidth}
        \centering
        \includegraphics[width=\linewidth]{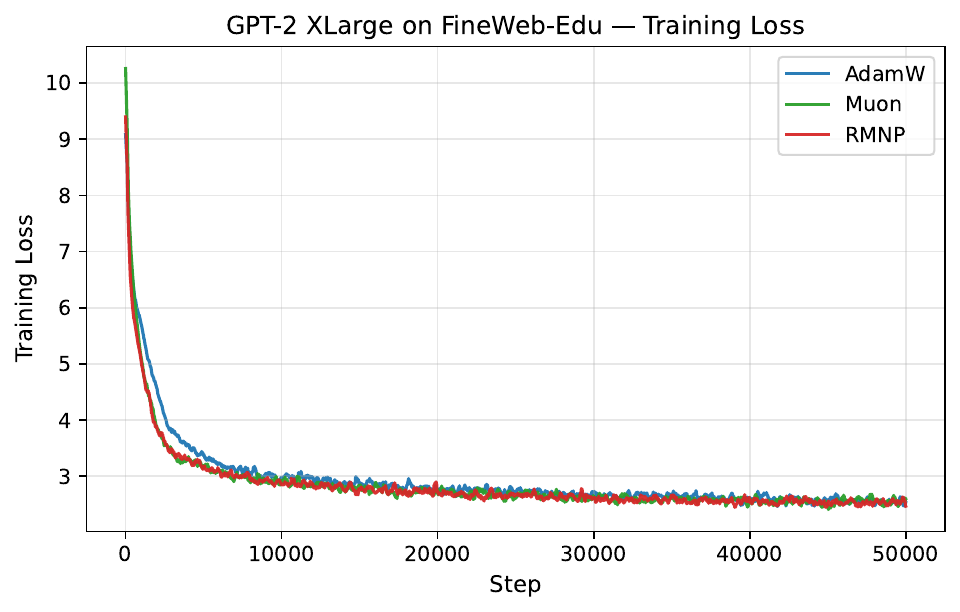}
        \caption{Training Loss}
    \end{subfigure}\hfill
    \begin{subfigure}[t]{0.48\linewidth}
        \centering
        \includegraphics[width=\linewidth]{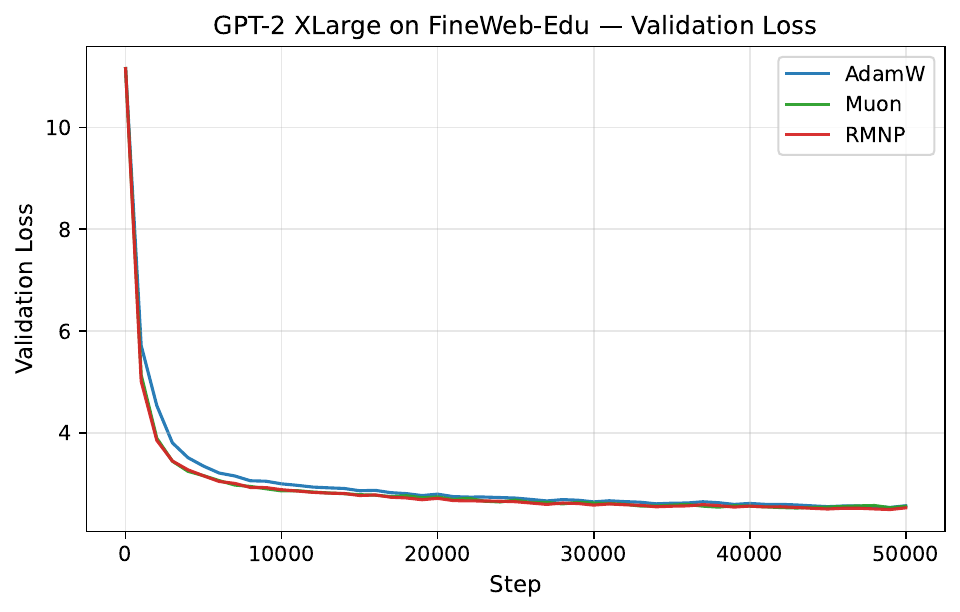}
        \caption{Validation Loss}
    \end{subfigure}
    \caption{\textsc{GPT-2 XLarge} (1.5B) on FineWeb-Edu-100B. \textsc{RMNP} continues to track \textsc{Muon} closely and surpasses it in late training, while delivering an order-of-magnitude reduction in preconditioning wall-clock cost (Appendix~\ref{appendix:wall_clock}).}
    \label{fig:fwedu_xlarge_curves}
\end{figure}

\subsection{LLaMA on C4}
\label{appendix:curves_c4}

Figures~\ref{fig:c4_60m_curves}--\ref{fig:c4_1b_curves} report the training and validation loss curves for the four LLaMA scales pretrained on C4. \textsc{RMNP} consistently delivers a slight but stable improvement over \textsc{Muon} across all sizes, and the gap between matrix-aware optimizers and \textsc{AdamW} widens as model scale grows.

\begin{figure}[!htbp]
    \centering
    \begin{subfigure}[t]{0.48\linewidth}
        \centering
        \includegraphics[width=\linewidth]{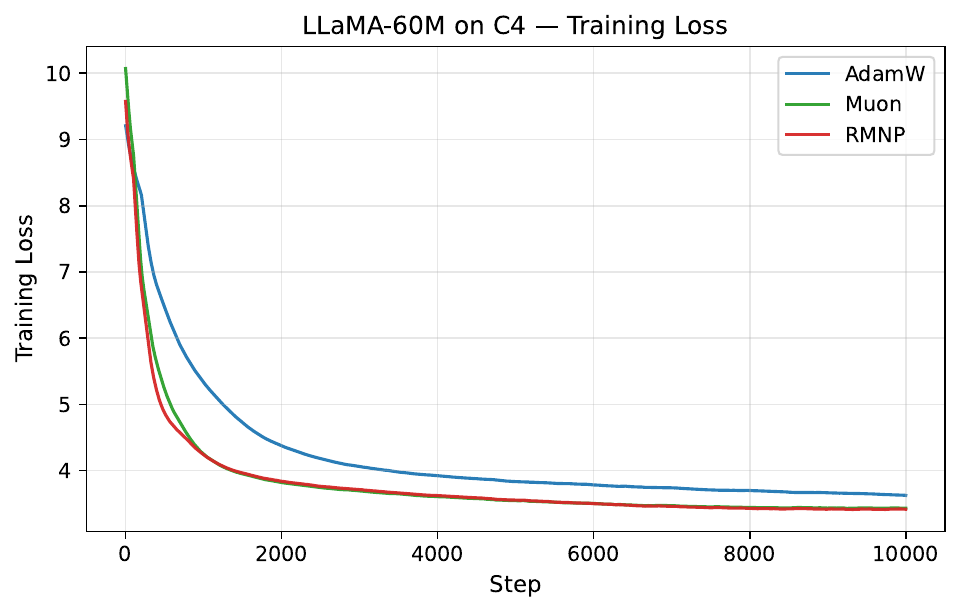}
        \caption{Training Loss}
    \end{subfigure}\hfill
    \begin{subfigure}[t]{0.48\linewidth}
        \centering
        \includegraphics[width=\linewidth]{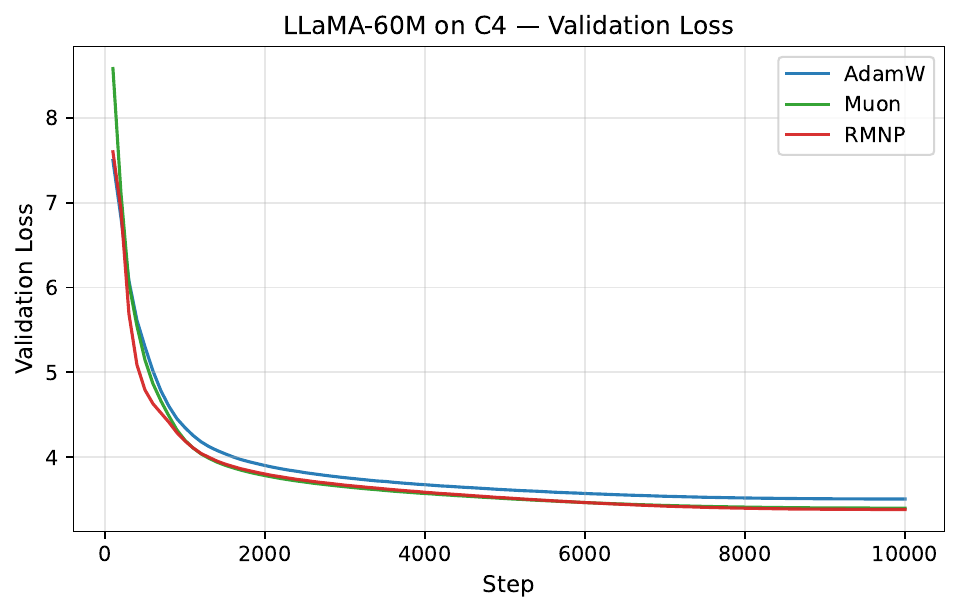}
        \caption{Validation Loss}
    \end{subfigure}
    \caption{\textsc{LLaMA-60M} on C4. The available \textsc{AdamW} log extends beyond the canonical training horizon and has been clipped to match \textsc{Muon} and \textsc{RMNP} on a shared x-range. \textsc{RMNP} achieves the lowest validation loss; \textsc{Muon} is close behind, while \textsc{AdamW} converges to a clearly higher value.}
    \label{fig:c4_60m_curves}
\end{figure}

\begin{figure}[!htbp]
    \centering
    \begin{subfigure}[t]{0.48\linewidth}
        \centering
        \includegraphics[width=\linewidth]{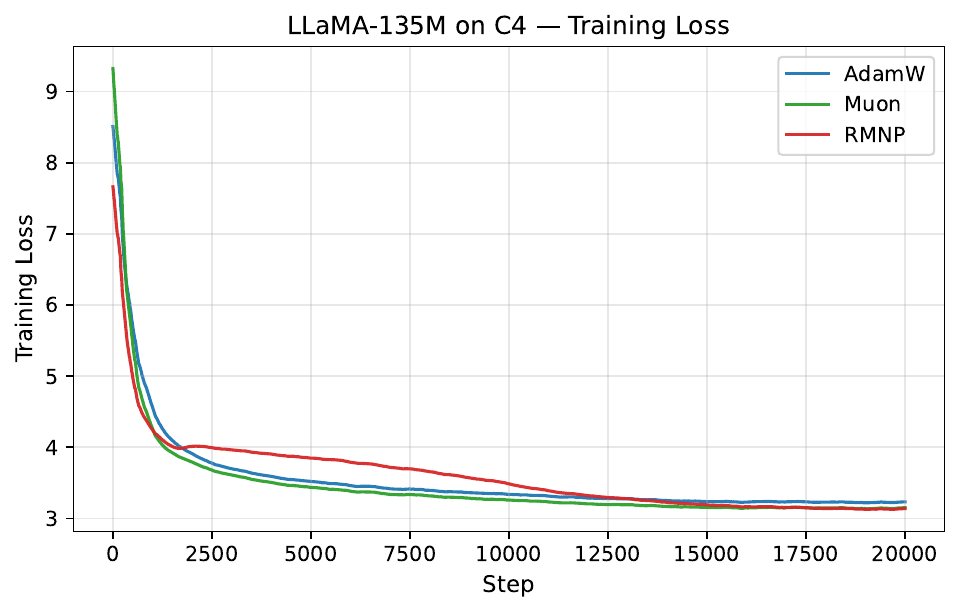}
        \caption{Training Loss}
    \end{subfigure}\hfill
    \begin{subfigure}[t]{0.48\linewidth}
        \centering
        \includegraphics[width=\linewidth]{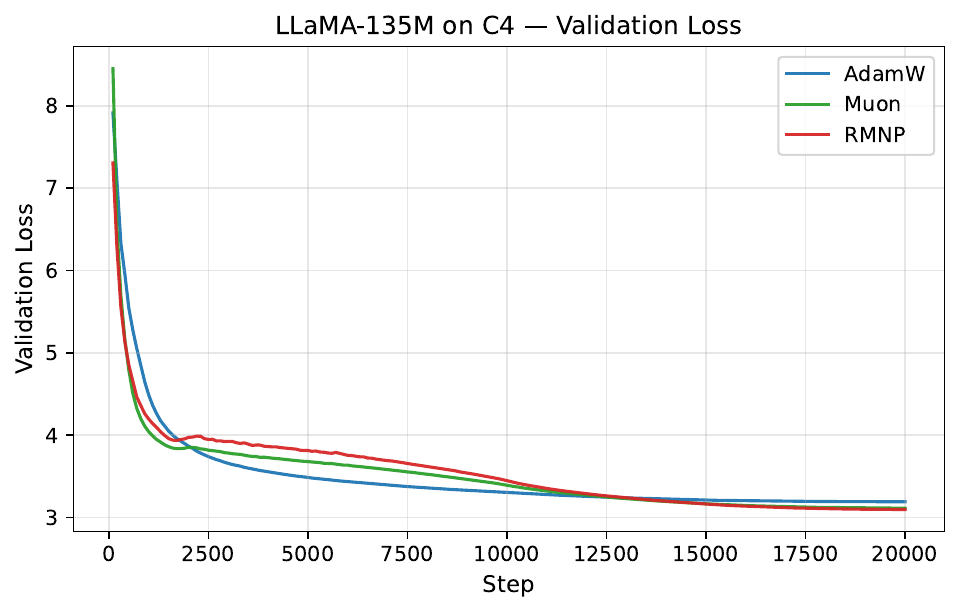}
        \caption{Validation Loss}
    \end{subfigure}
    \caption{\textsc{LLaMA-130M} on C4. \textsc{RMNP} outperforms both baselines in validation loss throughout training.}
    \label{fig:c4_130m_curves}
\end{figure}

\begin{figure}[!htbp]
    \centering
    \begin{subfigure}[t]{0.48\linewidth}
        \centering
        \includegraphics[width=\linewidth]{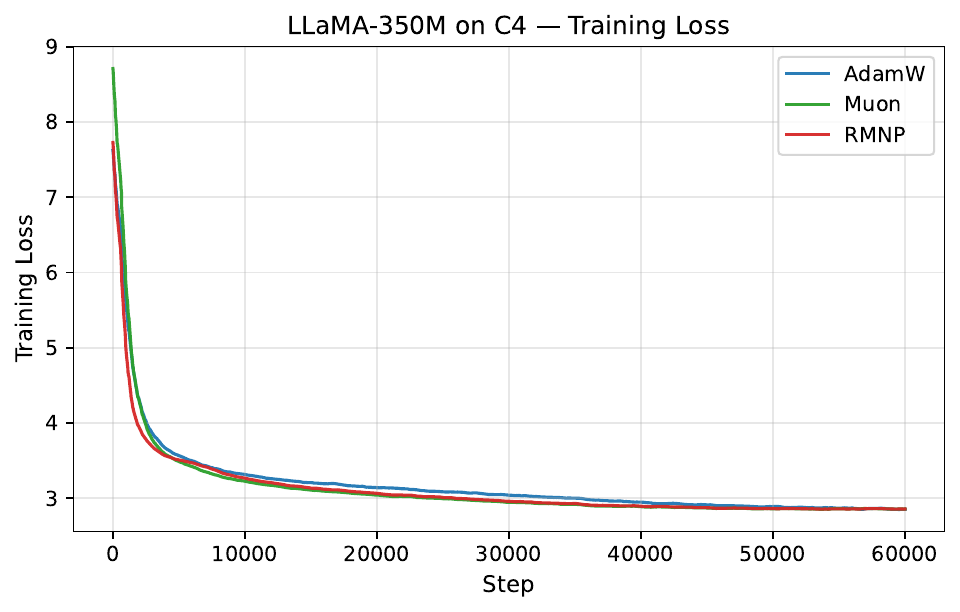}
        \caption{Training Loss}
    \end{subfigure}\hfill
    \begin{subfigure}[t]{0.48\linewidth}
        \centering
        \includegraphics[width=\linewidth]{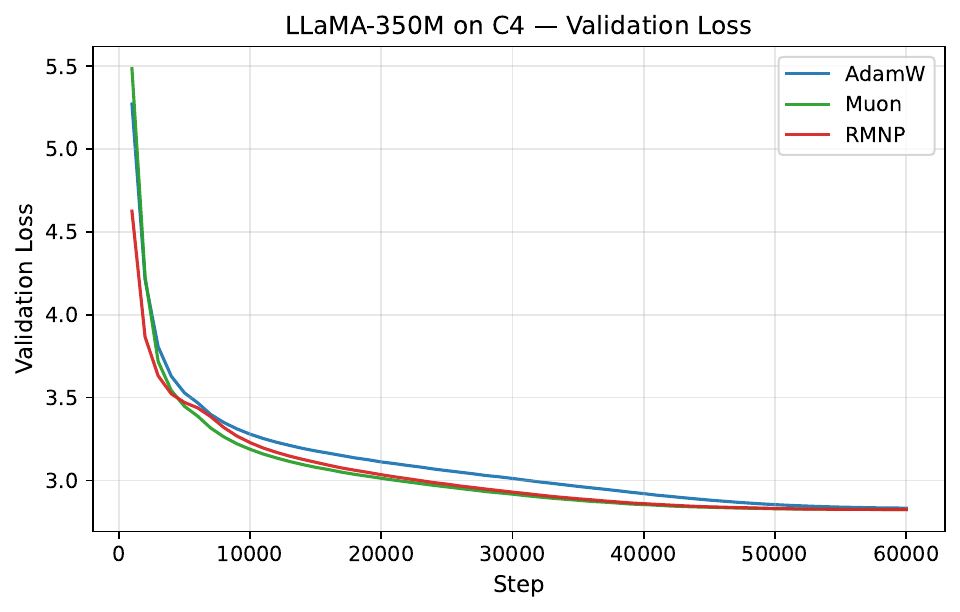}
        \caption{Validation Loss}
    \end{subfigure}
    \caption{\textsc{LLaMA-350M} on C4. \textsc{RMNP} matches \textsc{Muon} on training loss and edges ahead on validation loss in late training.}
    \label{fig:c4_350m_curves}
\end{figure}

\begin{figure}[!htbp]
    \centering
    \begin{subfigure}[t]{0.48\linewidth}
        \centering
        \includegraphics[width=\linewidth]{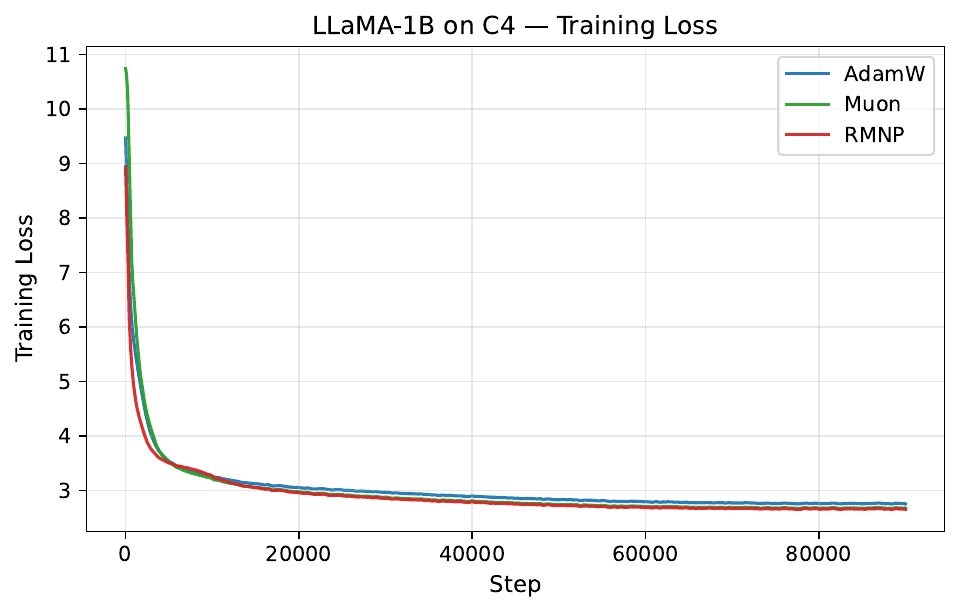}
        \caption{Training Loss}
    \end{subfigure}\hfill
    \begin{subfigure}[t]{0.48\linewidth}
        \centering
        \includegraphics[width=\linewidth]{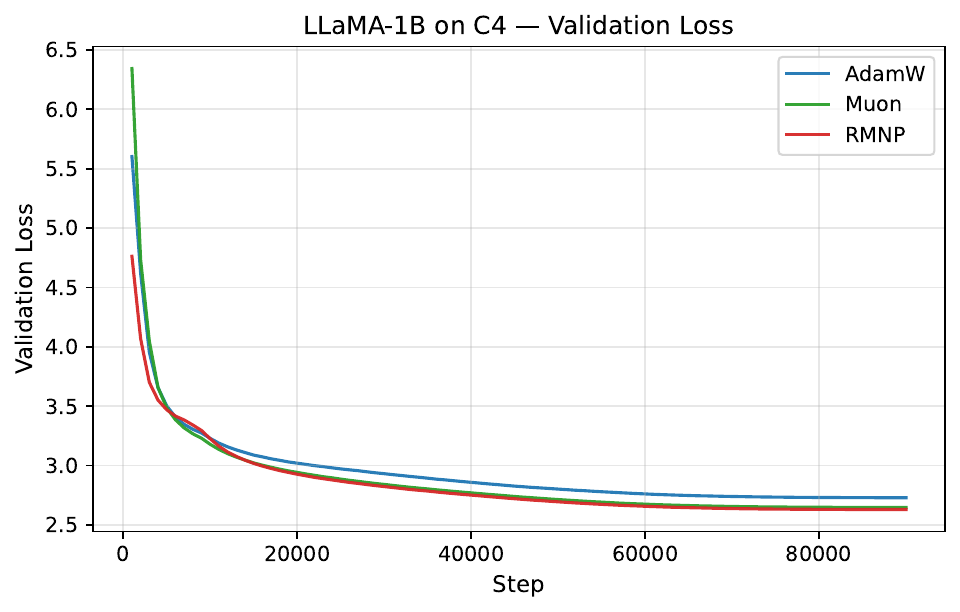}
        \caption{Validation Loss}
    \end{subfigure}
    \caption{\textsc{LLaMA-1B} on C4. The \textsc{RMNP} curve tracks \textsc{Muon} closely on both training and validation loss while delivering a substantially lower preconditioning cost.}
    \label{fig:c4_1b_curves}
\end{figure}

\subsection{Mamba on FineWeb-Edu}
\label{appendix:curves_mamba}

We additionally evaluate \textsc{RMNP} on a Mamba state-space language model trained on FineWeb-Edu to verify that the row-wise normalized preconditioner generalizes beyond Transformer attention. Figure~\ref{fig:fwedu_mamba_curves} reports the training loss and validation perplexity, comparing \textsc{AdamW}, \textsc{Muon}, and \textsc{RMNP}. Despite the architectural difference, \textsc{RMNP} tracks \textsc{Muon} essentially in lockstep and both clearly outperform \textsc{AdamW}.

\begin{figure}[!htbp]
    \centering
    \begin{subfigure}[t]{0.48\linewidth}
        \centering
        \includegraphics[width=\linewidth]{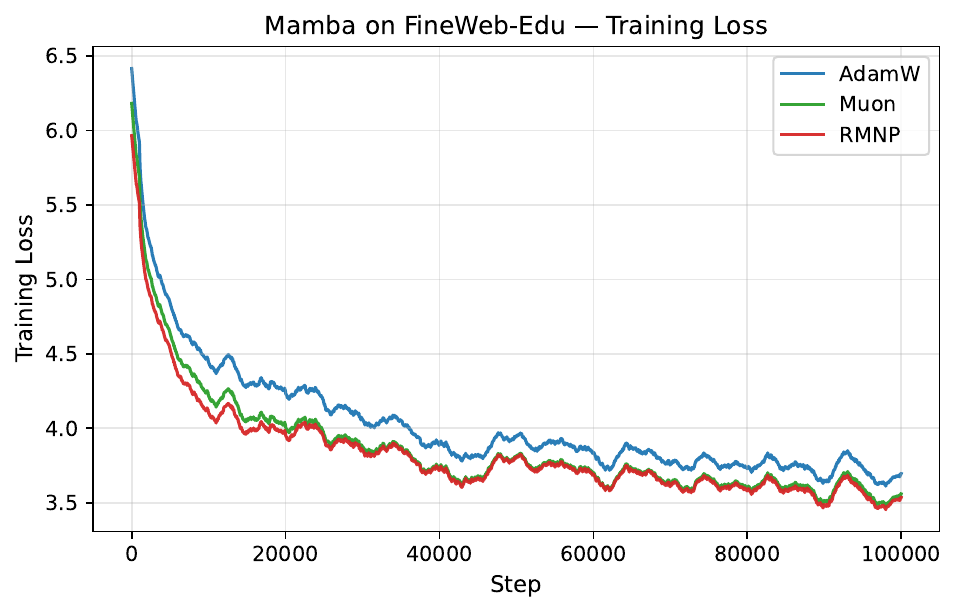}
        \caption{Training Loss}
    \end{subfigure}\hfill
    \begin{subfigure}[t]{0.48\linewidth}
        \centering
        \includegraphics[width=\linewidth]{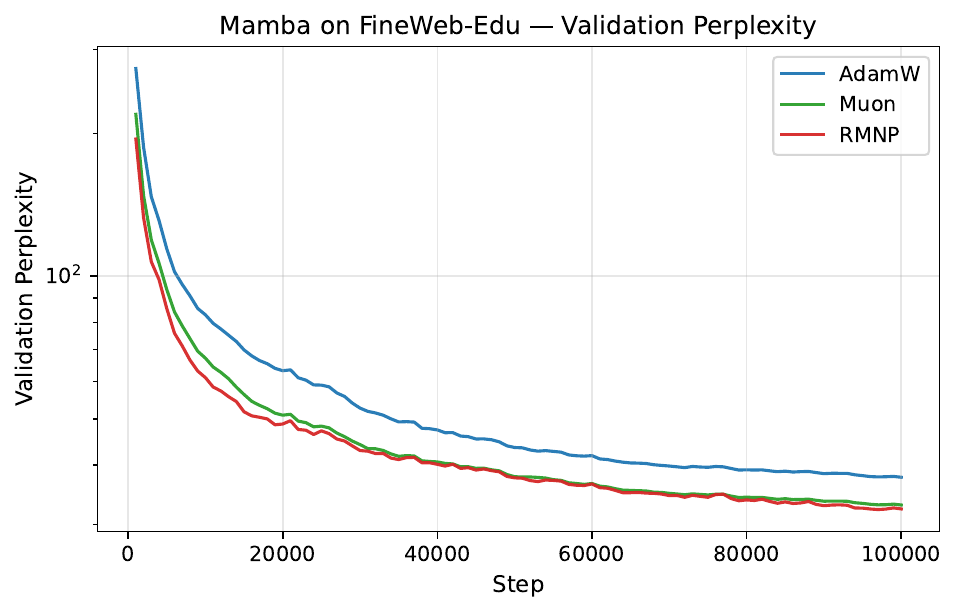}
        \caption{Validation Perplexity}
    \end{subfigure}
    \caption{Mamba on FineWeb-Edu. Validation perplexity is shown on a log scale. \textsc{RMNP} matches \textsc{Muon} throughout training and clearly outperforms \textsc{AdamW}, demonstrating that the row-wise normalized preconditioner generalizes beyond Transformer architectures to state-space models.}
    \label{fig:fwedu_mamba_curves}
\end{figure}

The same diagonal-dominance property observed for Transformer-family models continues to hold for Mamba's matrix parameters. Figure~\ref{fig:Mamba_dominance_combined_appendix} reports both the global aggregate metrics (panel~(a)) and the per-parameter metrics for three representative matrix parameters (panel~(b)) of Mamba; all three ratio metrics rise above the threshold $r=1$ shortly after warm-up and remain there throughout training.

\begin{figure}[!htbp]
    \centering
    \begin{subfigure}[c]{0.46\linewidth}
        \centering
        \includegraphics[width=\linewidth]{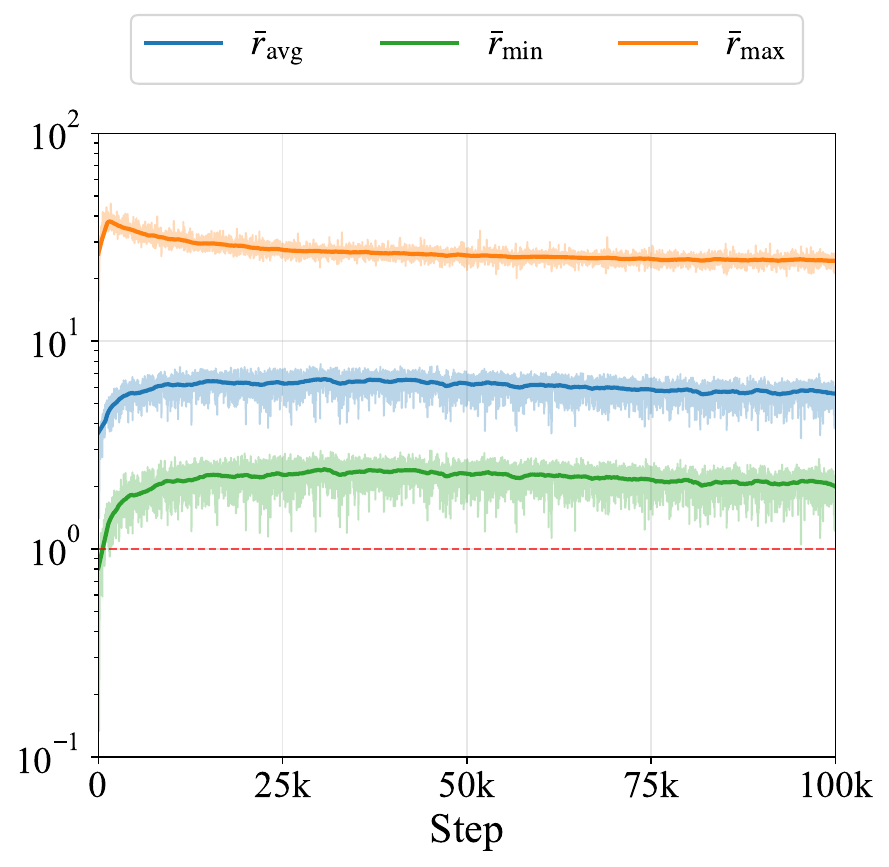}
        \caption{Global ratios $\overline{r}_{\text{avg}}$, $\overline{r}_{\min}$, $\overline{r}_{\max}$ (log-scale y-axis).}
        \label{fig:Mamba_global_dominance_curves_appendix}
    \end{subfigure}\hfill
    \begin{subfigure}[c]{0.51\linewidth}
        \centering
        \includegraphics[width=\linewidth]{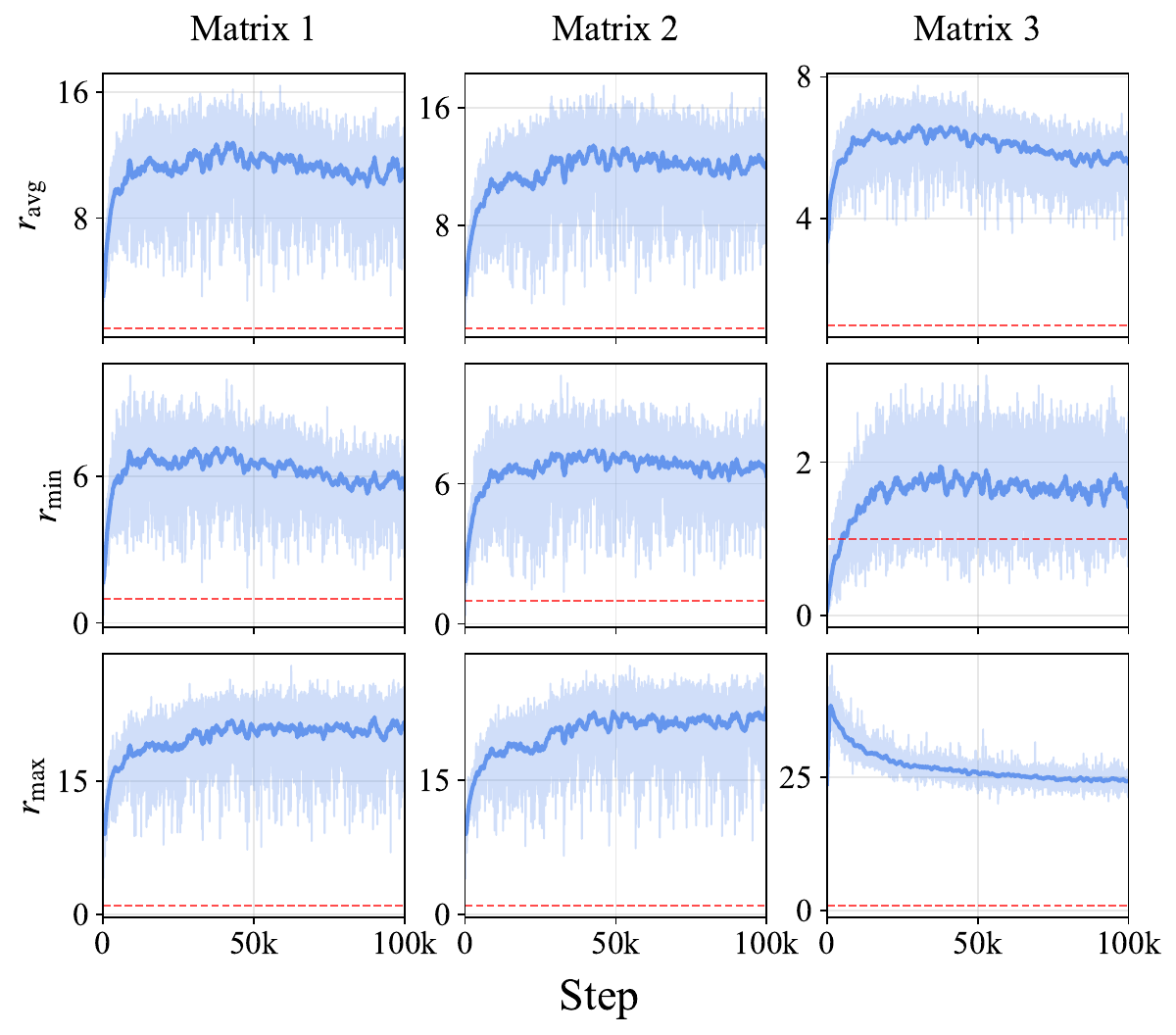}
        \caption{Per-parameter ratios $r_{\text{avg}}$, $r_{\min}$, $r_{\max}$ (rows) for three representative matrix parameters (columns).}
        \label{fig:Mamba_dominance_curves_appendix}
    \end{subfigure}
    \caption{Diagonal dominance ratios for Mamba pre-training on FineWeb-Edu. Transparent curves: raw values; solid curves: smoothed with window size 50. Red dashed line: $y=1$ threshold. All metrics remain above the threshold throughout training, demonstrating that the row-wise block-diagonal dominance property holds for the Mamba state-space architecture both at the global aggregate level (panel~(a)) and at the per-parameter level (panel~(b)).}
    \label{fig:Mamba_dominance_combined_appendix}
\end{figure}

The learning-rate sweep underlying the Mamba experiment is reported in Table~\ref{tab:mamba_hyperparam}. We fix the \textsc{AdamW} learning rate at $1\times 10^{-4}$ and sweep the matrix learning rate; the table reports final validation perplexity (lower is better).

\begin{table}[!htbp]
\centering
\caption{Hyperparameter search on Mamba (FineWeb-Edu) with \textsc{AdamW} learning rate fixed at $1\times 10^{-4}$. Validation perplexity is reported.}
\label{tab:mamba_hyperparam}
\begin{tabular}{lccc}
\toprule
Matrix LR & $6.67\times 10^{-4}$ & 0.008 & 0.009 \\
\midrule
\textsc{Muon} & 36.55 & \textbf{32.95} & 33.02 \\
\midrule
Matrix LR & $6.67\times 10^{-4}$ & $8\times 10^{-4}$ & $1\times 10^{-3}$ \\
\midrule
\textsc{RMNP} & 32.56 & \textbf{32.32} & 32.33 \\
\bottomrule
\end{tabular}
\end{table}

\subsection{ResNet-18 on CIFAR-10}
\label{appendix:curves_cv}

To verify that \textsc{RMNP} is competitive on architectures and modalities outside of language modeling, we compare \textsc{RMNP} and \textsc{Muon} on the canonical \textsc{ResNet-18} / \textsc{CIFAR-10} image-classification benchmark. Figure~\ref{fig:cv_resnet18_cifar10_curves} reports the training/test loss and training/test accuracy for both optimizers (\textsc{AdamW} omitted to keep the comparison focused on the two matrix-aware methods). \textsc{RMNP} closely tracks \textsc{Muon} throughout training and converges to essentially identical final accuracy, indicating that the row-wise normalized preconditioner is effective in the convolutional regime as well.

\begin{figure}[!htbp]
    \centering
    \begin{subfigure}[t]{0.48\linewidth}
        \centering
        \includegraphics[width=\linewidth]{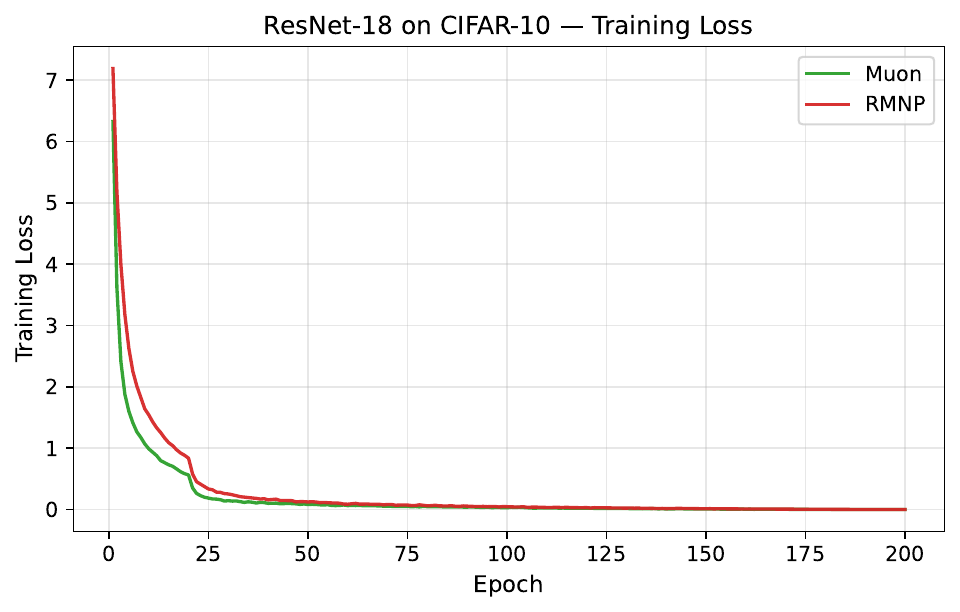}
        \caption{Training Loss}
    \end{subfigure}\hfill
    \begin{subfigure}[t]{0.48\linewidth}
        \centering
        \includegraphics[width=\linewidth]{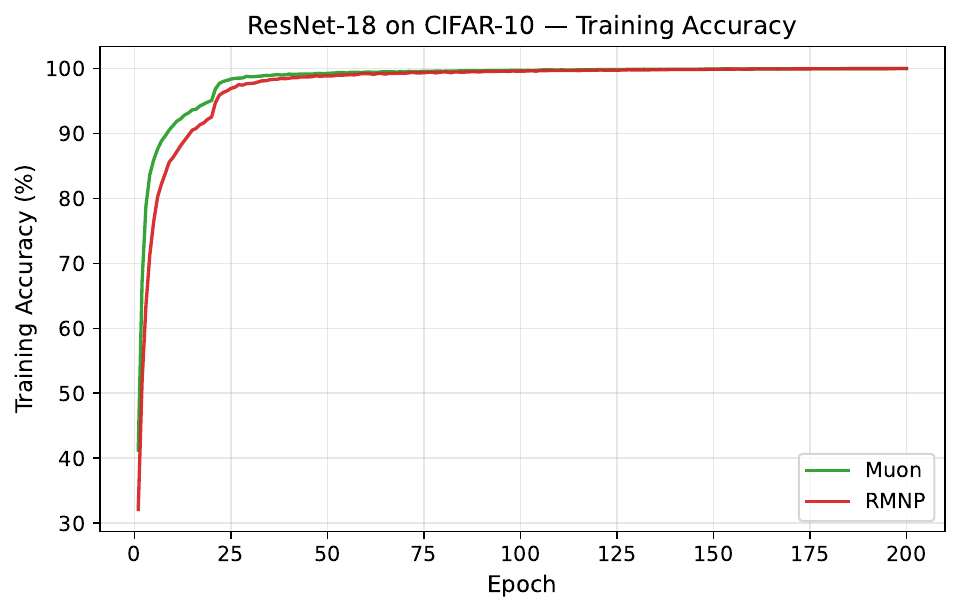}
        \caption{Training Accuracy}
    \end{subfigure}

    \vspace{0.4em}

    \begin{subfigure}[t]{0.48\linewidth}
        \centering
        \includegraphics[width=\linewidth]{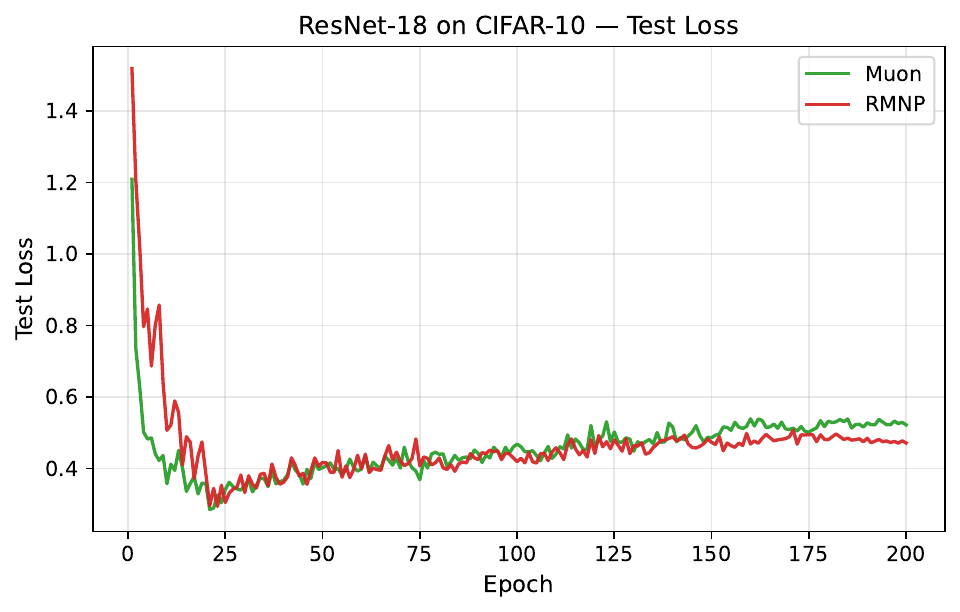}
        \caption{Test Loss}
    \end{subfigure}\hfill
    \begin{subfigure}[t]{0.48\linewidth}
        \centering
        \includegraphics[width=\linewidth]{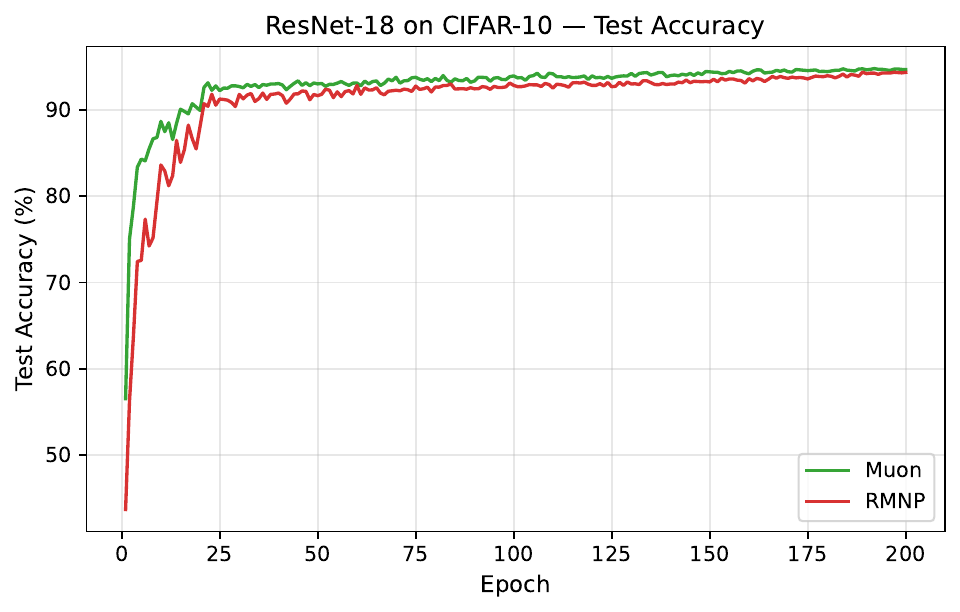}
        \caption{Test Accuracy}
    \end{subfigure}
    \caption{\textsc{ResNet-18} on \textsc{CIFAR-10}, comparing \textsc{Muon} and \textsc{RMNP}. The two matrix-aware optimizers track each other closely and converge to essentially identical final accuracy, demonstrating that \textsc{RMNP} extends to convolutional vision tasks without loss of optimization quality.}
    \label{fig:cv_resnet18_cifar10_curves}
\end{figure}

We also extend the diagonal-dominance analysis of Section~\ref{appendix:diagonal_dominance_setup} to \textsc{ResNet-18}: the row-wise block-diagonal dominance property continues to hold beyond fully-connected matrix parameters. Figure~\ref{fig:CV_dominance_combined_appendix} reports both the global aggregate metrics (panel~(a)) and the per-parameter metrics for three representative matrix parameters (panel~(b)).

\begin{figure}[!htbp]
    \centering
    \begin{subfigure}[c]{0.46\linewidth}
        \centering
        \includegraphics[width=\linewidth]{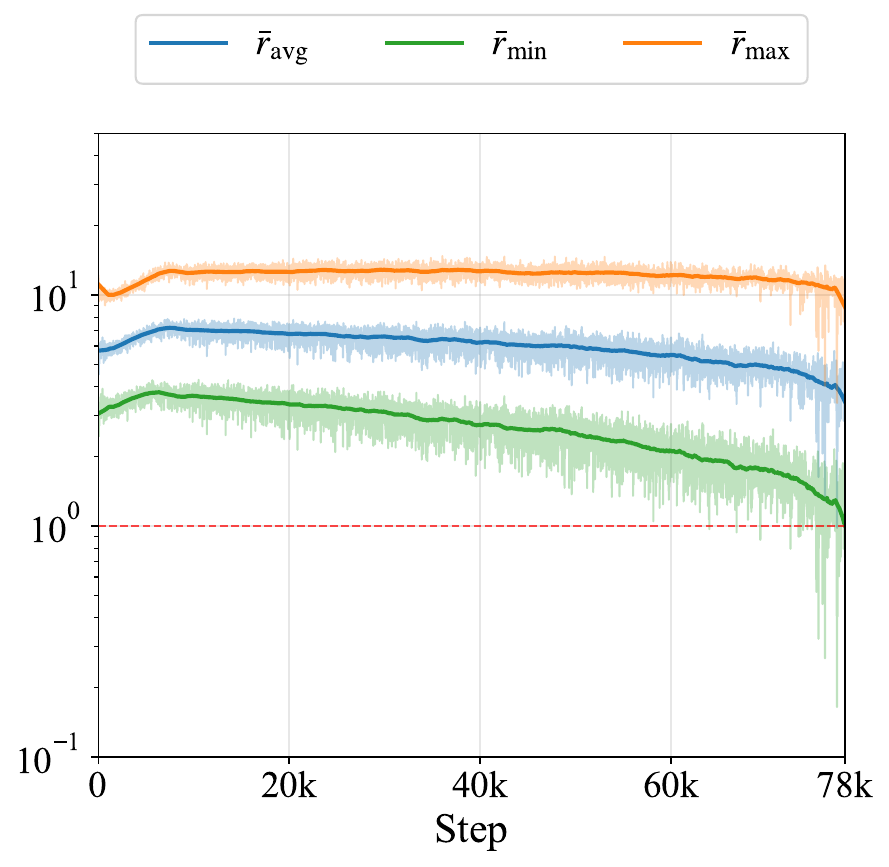}
        \caption{Global ratios $\overline{r}_{\text{avg}}$, $\overline{r}_{\min}$, $\overline{r}_{\max}$ (log-scale y-axis).}
        \label{fig:CV_global_dominance_curves_appendix}
    \end{subfigure}\hfill
    \begin{subfigure}[c]{0.51\linewidth}
        \centering
        \includegraphics[width=\linewidth]{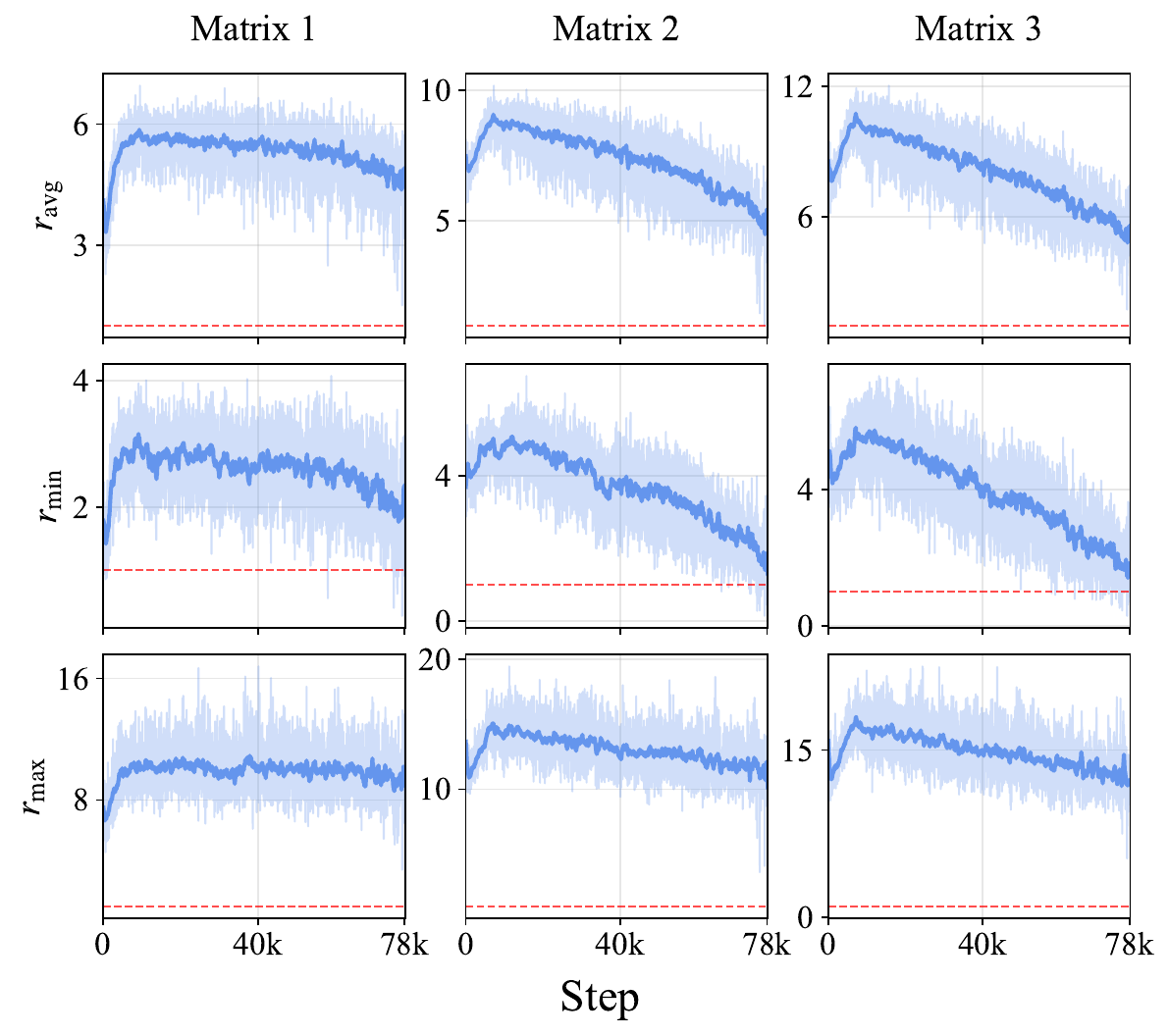}
        \caption{Per-parameter ratios $r_{\text{avg}}$, $r_{\min}$, $r_{\max}$ (rows) for three representative matrix parameters (columns).}
        \label{fig:CV_dominance_curves_appendix}
    \end{subfigure}
    \caption{Diagonal dominance ratios for \textsc{ResNet-18} training on \textsc{CIFAR-10}. Transparent curves: raw values; solid curves: smoothed with window size 50. Red dashed line: $y=1$ threshold. All metrics remain above the threshold throughout training, demonstrating that the row-wise block-diagonal dominance property holds for the convolutional vision architecture both at the global aggregate level (panel~(a)) and at the per-parameter level (panel~(b)).}
    \label{fig:CV_dominance_combined_appendix}
\end{figure}

The matrix learning-rate sweep for \textsc{ResNet-18} is reported in Table~\ref{tab:cv_resnet18_hyperparam}. We fix the \textsc{AdamW} learning rate at $0.006$ and sweep the matrix learning rate; the table reports final test accuracy (higher is better).

\begin{table}[!htbp]
\centering
\caption{Test accuracy (\%) on CIFAR-10 for \textsc{ResNet-18}: matrix LR search with \textsc{AdamW} learning rate fixed at $0.006$. Higher is better.}
\label{tab:cv_resnet18_hyperparam}
\begin{tabular}{lccc}
\toprule
Matrix LR & 0.01 & 0.04 & 0.05 \\
\midrule
\textsc{Muon} & 94.57 & \textbf{94.65} & 94.39 \\
\midrule
Matrix LR & 0.006 & 0.008 & 0.01 \\
\midrule
\textsc{RMNP} & \textbf{94.33} & 93.93 & 94.31 \\
\bottomrule
\end{tabular}
\end{table}

\subsection{Gradient Clip-Rate Trajectories}
\label{appendix:clip_rate}

We additionally report the gradient clip rate (the per-step fraction of times the gradient norm exceeds the clip threshold) for the GPT-2 pre-training runs on OpenWebText and FineWeb-Edu-100B. Two views are provided per dataset:
\begin{itemize}
\item \emph{Per-size grid} (Figures~\ref{fig:owt_clip_rate_per_size} and~\ref{fig:fwedu_clip_rate_per_size}) overlays \textsc{AdamW}, \textsc{Muon}, and \textsc{RMNP} within each cell, with the raw step on the x-axis. \textsc{RMNP} is drawn on top.
\item \emph{Cross-scale comparison} (Figures~\ref{fig:owt_clip_rate_size_comparison} and~\ref{fig:fwedu_clip_rate_size_comparison}) places the four model scales together within each optimizer panel, with the x-axis rescaled to relative training progress (\%). Within each panel a single hue is used and shade encodes model scale, so a darker line is a larger model.
\end{itemize}
Across both datasets, larger models keep their gradients clipped for a longer fraction of training, with \textsc{AdamW} on \textsc{GPT-2 XLarge} an extreme case where every step is clipped throughout the run. \textsc{RMNP} consistently begins to release the clip threshold earliest of the three optimizers, indicating that its row-normalized update reduces gradient-norm volatility relative to both \textsc{AdamW} and \textsc{Muon}.

\begin{figure}[!htbp]
    \centering
    \includegraphics[width=1.00\linewidth]{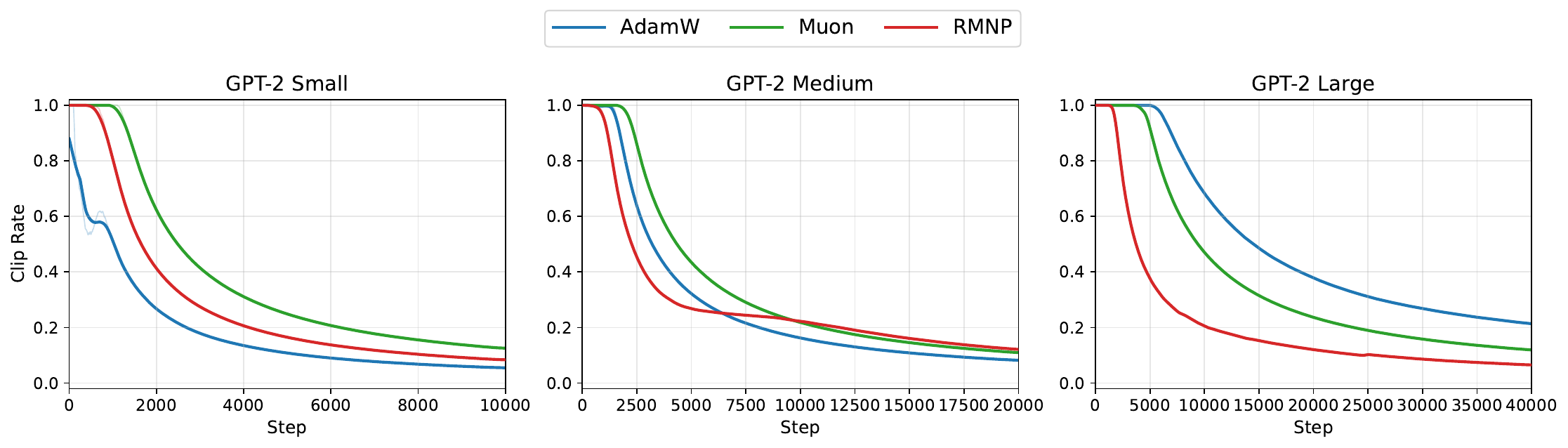}
    \caption{Gradient clip rate during GPT-2 pre-training on OpenWebText, one panel per model size. Transparent line: raw values; solid line: 50-step rolling mean. \textsc{RMNP} (red) is drawn last so it sits on top of \textsc{AdamW} (blue) and \textsc{Muon} (green).}
    \label{fig:owt_clip_rate_per_size}
\end{figure}

\begin{figure}[!htbp]
    \centering
    \includegraphics[width=1.00\linewidth]{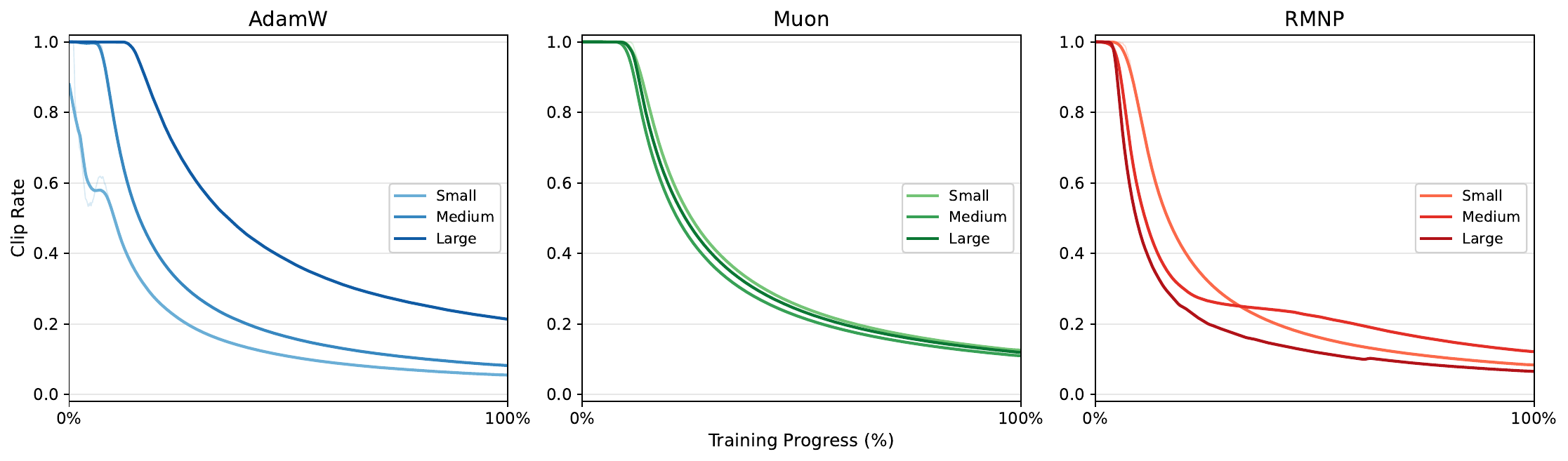}
    \caption{Gradient clip rate during GPT-2 pre-training on OpenWebText, with x-axis rescaled to relative training progress (\%). Each panel shows one optimizer; within a panel, lighter to darker shades encode \textsc{Small}/\textsc{Medium}/\textsc{Large}. The clip rate falls below 1.0 progressively later for larger models.}
    \label{fig:owt_clip_rate_size_comparison}
\end{figure}

\begin{figure}[!htbp]
    \centering
    \includegraphics[width=0.95\linewidth]{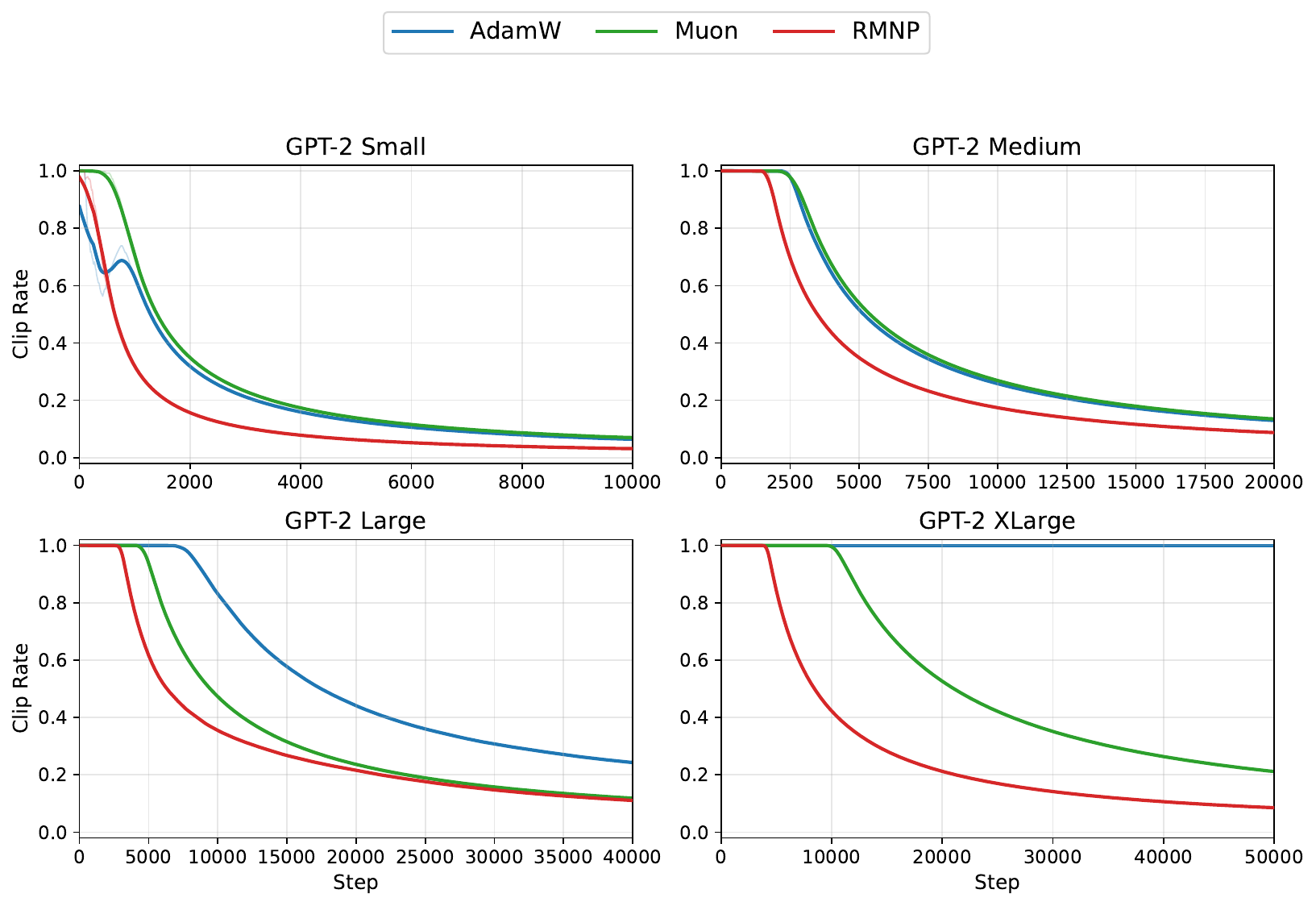}
    \caption{Gradient clip rate during GPT-2 pre-training on FineWeb-Edu-100B, one panel per model size. Transparent line: raw values; solid line: 50-step rolling mean. \textsc{AdamW} on the \textsc{XLarge} (1.5B) model has its gradients clipped at every step throughout the entire run; both \textsc{Muon} and \textsc{RMNP} progressively reduce the clip rate.}
    \label{fig:fwedu_clip_rate_per_size}
\end{figure}

\begin{figure}[!htbp]
    \centering
    \includegraphics[width=1.00\linewidth]{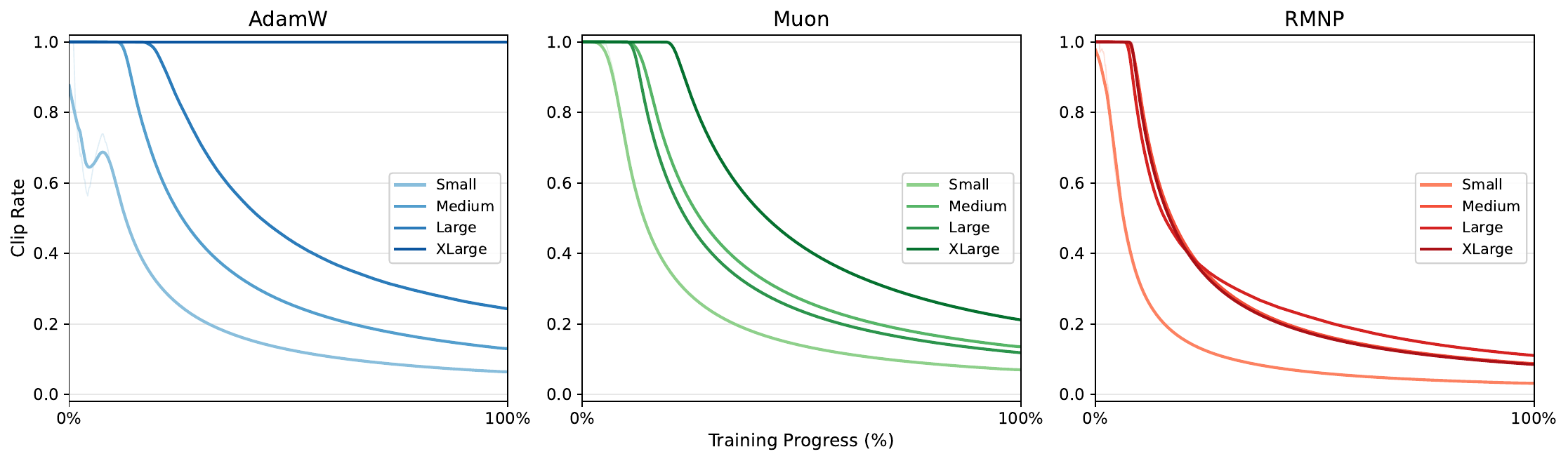}
    \caption{Gradient clip rate during GPT-2 pre-training on FineWeb-Edu-100B, with x-axis rescaled to relative training progress (\%). Each panel shows one optimizer; lighter to darker shades within a panel encode \textsc{Small}/\textsc{Medium}/\textsc{Large}/\textsc{XLarge}. The size-dependent delay before the clip rate begins to drop is most pronounced under \textsc{AdamW} (left) and least pronounced under \textsc{RMNP} (right).}
    \label{fig:fwedu_clip_rate_size_comparison}
\end{figure}

\stopcontents[appendix]

\end{document}